\documentclass{article}

\usepackage[preprint]{neurips_2026}

\usepackage[utf8]{inputenc}
\usepackage[T1]{fontenc}

\usepackage{hyperref}
\usepackage{url}
\usepackage{booktabs}
\usepackage{nicefrac}
\usepackage{microtype}
\usepackage[table,xcdraw]{xcolor}
\usepackage{xspace}

\usepackage{amsmath}
\usepackage{amssymb}
\usepackage{amsfonts}
\usepackage{mathtools}
\usepackage{amsthm}
\usepackage{bbm}
\usepackage{thmtools}

\usepackage{algorithm}
\usepackage[noend]{algpseudocode}

\usepackage{graphicx}
\usepackage{float}
\usepackage{caption}
\usepackage{subcaption}
\usepackage{wrapfig}
\usepackage{multicol}
\usepackage{multirow}
\usepackage{longtable}

\usepackage{cleveref}

\usepackage{tikz}
\usetikzlibrary{
  positioning,
  calc,
  shapes.geometric,
  shapes,
  shapes.multipart,
  arrows.meta,
  arrows,
  decorations.markings,
  external,
  trees,
  backgrounds,
  automata,
  shapes.misc,
  fit
}
\usepackage{scalefnt}

\tikzset{
  -Latex,
  auto,
  node distance=1 cm and 1 cm,
  semithick,
  state/.style={ellipse, draw, minimum width=0.7 cm},
  point/.style={circle, draw, inner sep=0.04cm, fill, node contents={}},
  nnh/.style={
    rectangle,
    draw,
    thick,
    minimum width=1.5cm,
    minimum height=1.0cm
  },
  nnv/.style={
    rectangle,
    draw,
    very thick,
    fill=gray!28,
    inner sep=0.04cm,
    minimum width=1.2cm,
    minimum height=1.2cm,
    rounded corners=0.05cm
  },
  nnvsm/.style={
    rectangle,
    draw,
    very thick,
    fill=gray!28,
    inner sep=0.0cm,
    minimum width=1.0cm,
    minimum height=1.0cm,
    rounded corners=0.05cm
  },
  outer/.style={inner sep=3pt, fill=blue!15},
  outer1/.style={inner sep=3pt, fill=green!15},
  outer1t/.style={inner sep=0pt, fill=green!15},
  louter/.style={inner sep=5pt, fill=blue!15},
  XOR/.style={
    draw,
    circle,
    append after command={
      [shorten >=\pgflinewidth, shorten <=\pgflinewidth]
      (\tikzlastnode.north) edge (\tikzlastnode.south)
      (\tikzlastnode.east) edge (\tikzlastnode.west)
    }
  },
  bidirected/.style={Latex-Latex, dashed},
  el/.style={inner sep=2pt, align=left, sloped},
  cross/.style={
    cross out,
    draw=black,
    minimum size=2*(#1-\pgflinewidth),
    inner sep=0pt,
    outer sep=0pt
  },
  cross/.default={1pt}
}

\newtheorem{theorem}{Theorem}
\newtheorem{lemma}{Lemma}
\newtheorem{proposition}{Proposition}
\newtheorem{definition}{Definition}
\newtheorem{assumption}{Assumption}
\newtheorem{corollary}{Corollary}

\newcommand{\V}{\mathcal{V}}

\newcommand{\mbf}{\mathbf}

\newcommand{\G}{\mathbb{G}}

\newcommand{\Do}{\text{do}}

\newcommand{\entails}{\vDash}

\newcommand{\U}{\overline{U}}
\newcommand{\Up}{\underline{U}}
\newcommand{\Lp}{\overline{L}}
\newcommand{\Lo}{\underline{L}}

\newcommand{\Ux}{\overline{U_{x}}}
\newcommand{\Upx}{\underline{U_{x}}}
\newcommand{\Lpx}{\overline{L_{x}}}
\newcommand{\Lox}{\underline{L_{x}}}

\newcommand{\Uate}{\overline{U}}
\newcommand{\Upate}{\underline{U}}
\newcommand{\Lpate}{\overline{L}}
\newcommand{\Late}{\underline{L}}

\newcommand{\Uatex}{\overline{U_{x}}}
\newcommand{\Upatex}{\underline{U_{x}}}
\newcommand{\Lpatex}{\overline{L_{x}}}
\newcommand{\Latex}{\underline{L_{x}}}

\newcommand{\CR}{\mathcal{C}}
\usepackage{xspace}
\newcommand{\myalgo}{\texttt{UA-DCM}\xspace}

\newcommand{\epSample}{sample }
\newcommand{\epNonID}{nonID }
\newcommand{\four}{[\Lo_x, \Lp_x, \Up_x, \U_x]}
\newcommand{\inner}{[\Lp_x, \Up_x]}
\newcommand{\outr}{[\Lo_x,\Lp_x]\cup[\Up_x,\U_x]}

\DeclarePairedDelimiter{\ceil}{\lceil}{\rceil}

\usepackage{subfiles}

\title{UA-DCM: Uncertainty-aware Causal Decision Making \\ via  Effect Bound Decomposition}

\author{%
  Md Musfiqur Rahman\thanks{Equal contribution.} $^{,1}$ \quad
  Ziwei Jiang$^{*,2}$ \quad
  Hilaf Hasson$^{3}$\thanks{Work completed while at Intuit AI Research}\quad
  Murat Kocaoglu$^{2}$ \\
  {\small $^{1}$Electrical and Computer Engineering, Purdue University} \\
  {\small $^{2}$Computer Science, Johns Hopkins University} \\
  {\small $^{3}$Cohesity} \\
  {\small \texttt{rahman89@purdue.edu, zjiang85@jh.edu, hilaf.hasson@cohesity.com, mkocaoglu@jhu.edu}}
}
\begin{document}

\maketitle

\begin{abstract}
Causal inference from observational data can provide strong evidence for finding the best action in a decision-making scenario without having to perform expensive randomized trials. The causal effect of an action is often not pointwise identifiable even with infinite data due to unobserved confounding factors. Furthermore, having only finitely many samples adds another layer of uncertainty to causal effect estimation. Several existing methods can be used to obtain upper and lower bounds to the causal effect, ranging from symbolic methods to the more recent neural network-based approaches, which implicitly incorporate both sources of uncertainty. However, these methods do not inform whether collecting more samples may or may not help identify the best action from observational data, leaving experts in the dark about their data collection strategies. We address this problem with a novel framework that can distinguish the range of causal effect values that might be eliminated by collecting more samples from the range of values that, with high probability, cannot be eliminated with more observational samples. We show that this partitioning can be obtained by solving max-min and min-max optimization problems. We leverage neural causal models to approximately recover this decomposition in practice. We demonstrate via experiments on synthetic and real-world datasets that our algorithm can determine when collecting more samples will not help determine the best action. Our framework can help practitioners decide when to resort to non-observational studies or seek to measure some of the unmeasured confounders for optimal decision-making.

\end{abstract}

\section{Introduction }
Causal decision-making aims to identify the best action relative to a goal. For example, \emph{which drug maximizes the life expectancy of an average patient? Which candidate government policy should be enacted to reduce unemployment more?} Since actions affect the system relative to its observational state, answering such questions requires causal models. 
Pearl's structural causal models (SCMs) provide us with a systematic set of complete algorithmic principles to identify the effect of actions, or post-interventional distributions from observational data  %through %the do-operator and do-calculus, 
under well-defined assumptions ~\citep{pearl1995causal,tian2002studies, huang2006identifiability,shpitser2008complete}. %, e.g., interventional distribution $p(y|{do(}x)) = \theta(p(\mbf{v}))$ for some function $\theta$ and observational distribution $p(\mbf{v})$. 
When the interventional distribution or the causal effect cannot be determined uniquely, the framework can be used to obtain bounds on these quantities%causal effects
~\citep{zhang2021bounding,hu2021generative,li2022bounds}. %For example, $[L,U]_{p} =\Big[ \min_{S \in \mathcal{S}(p)} p_S(y \mid \red{do(}x)), \;
              % \max_{S \in \mathcal{S}(p)} p_S(y \mid \red{do(}x)) \Big]$, where $\mathcal{S}(p)$ is the set of SCMs consistent with $p$.

For decision-making, we may wish to understand if an action, say enforcing $X=x$, increases the probability of a certain event, say $Y=1$. In SCM language, we want to test if $P(Y=1|\Do(x))>P(Y=1)$. Or we may want ot decide which of the two actions, enforcing $X=x_0$ or enforcing $X=x_1$, leads to a higher probability of the said event, which we would test by asking if $P(Y=1|\Do(x_1))>P(Y=1|\Do(x_0))$. Equivalently, we can directly test whether the average treatment effect (ATE), defined as $\mathbb{E}[Y|\Do(x_1)]-\mathbb{E}[Y|\Do(x_0)]$, is positive. These can be extended to having more than two actions, for example, by testing if $P(Y=1|\Do(x_0))>\max\limits_{i\neq 0}P(Y=1|\Do(x_i))$.
However, these interventional quantities that are essential for decision-making are not always identifiable even with infinite data, i.e., there may be different SCMs that entail the observational distribution, but lead to different interventional quantities. %when an observational distribution is compatible with SCMs that induce different values of the causal effect. 
On top of this, %In practice, usually 
in any realistic scenario, we have a \textit{finite} dataset from the observational distribution, 
%which creates a source of uncertainty for the %and can estimate 
%observational distribution
%we usually do not have %access to 
%the observational distribution, but the data sampled from it. %So the set of possible observational distributions that correspond to the data 
which introduces another source of  uncertainty for estimating these interventional quantities. %Fortunately, point estimation of causal effect is not always necessary for causal decision-making~\citep{fernandez2022causal}.

%Identifying causal effects allows us to make decisions about different actions. For example, we wish to know if enforcing a policy $X=x$ is beneficial at the population level: $P(y|\Do(x))>P(y)$ with the {causal effect} of $X=x$ on $Y=y$~\citep{jung2022measuring,eva2019causal}. Similarly, we can choose the best action that maximizes reward by comparing $P(y|\Do(x_1))$ and $P(y|\Do(x_0))$. However, a causal effect is not always identifiable; that is, when an observational distribution is compatible with SCMs that induce different values of the causal effect. In practice, usually we do not have access to the observational distribution, but the data sampled from it. So the set of possible observational distributions that correspond to the data introduces additional uncertainty for the causal effect estimation. Fortunately, point estimation of causal effect is not always necessary for causal decision-making~\citep{fernandez2022causal}.

\begin{figure}
    \centering
    \includegraphics[width=0.9\linewidth]{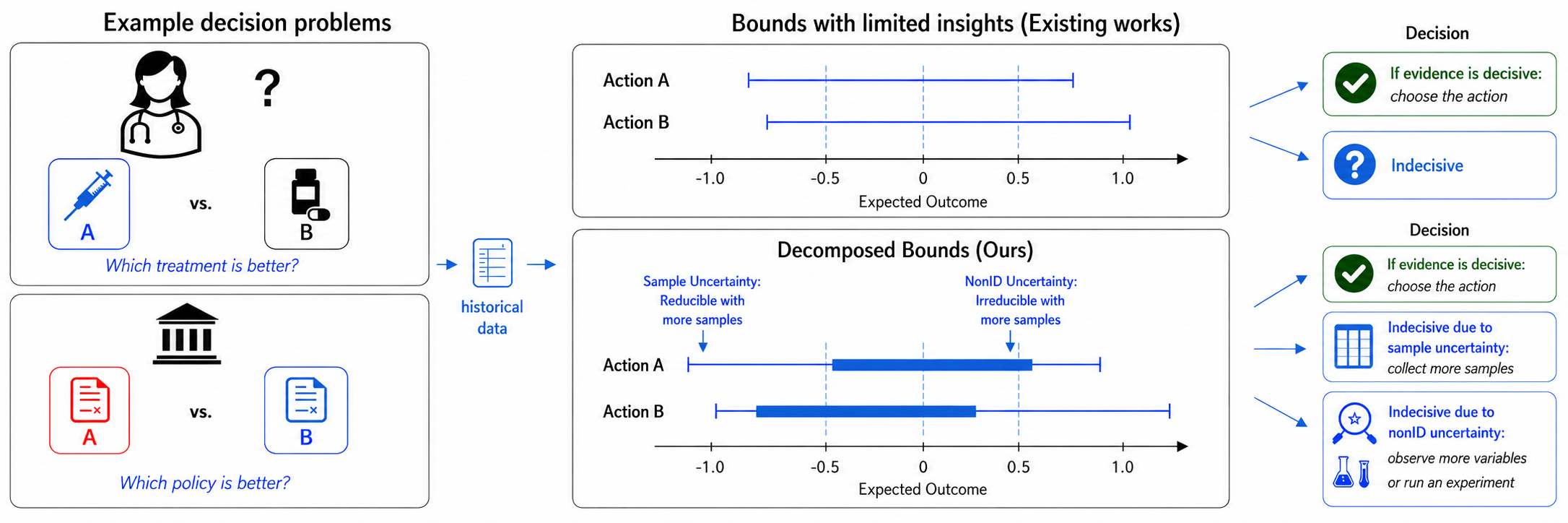}
    \caption{Decision making with decomposed causal effect bounds.}
    \label{fig:placeholder}
    % \vspace{-2mm}
\end{figure}

% an ML model is going to be deployed in a hospital that takes patient features $\mathbf{Z}$ and the prescribed drug $X$ as input and predicts the recovery rate, $P(Y|X,\mathbf{Z})$. We can validate how much its prediction is biased due to unobserved confounder by comparing $P(y|x,\mathbf{z})$ with the bound of causal effect $P(y|\Do(x), \mathbf{z})$.
%
% Similarly, suppose we need to choose the best treatment/actions among two (or more). 

If, despite these sources of uncertainty, the \emph{lower bound} to the utility of Action $1$ is greater than the \emph{upper bound} to the utility of all other actions, we can still decide Action 1 is more desirable than the others. However, often, compounding these two sources of uncertainty leads to a wide range of values for the interventional quantities of interest which leaves optimal action ambiguous. Importantly then it is not clear whether bounds were wide due to not having sufficiently many observational samples, or due to structural reasons that even with infinite samples one would not be able to make a decision. In the special cases where interventional quantites are identifiable due to the causal graph, we know collecting more samples will allow decision-making since the range of possible interventional values will converge to a single point (either ATE is positive or negative, or $P(Y=1|\Do(x_1))$ is either greater or less than $P(Y=1|\Do(x_0))$). However, to the best of our knowledge, it is currently unknown how to answer this question for non-identifiable causal queries. 

We address this problem through a novel decomposition of the range of interventional values into two subsets: A range of interventional values that will not be removed with high probability by increasing the number of samples in our dataset, called the nonID region, and the rest that may be reduced by collecting more samples (e.g., by waiting for more data to be collected via sensors in an environment).   
In a causal decision-making scenario, this nonID region crucially helps us to decide the next move: collect more samples from observed distribution, or resort to a different distribution, e.g., by adding new sensors to observe more variables, or resorting to a randomized trial. We show that the proposed decomposition can be computed by solving %showing that the nonID interventional values can be obtained by solving 
min-max and max-min problems. Finally, to approximate these bounds in practice %, which are in general computationally challenging, 
we develop an approach that leverages %resort to using 
neural/deep causal generative models~\citep{kocaoglu2018causalgan,balazadeh2022partial,rahman2024modular,xia2021causal}. Our contributions are summarized below:
\begin{itemize}
\item We propose a novel decomposition of the range of interventional quantities of interest into structural and sample uncertainty. This is, to the best of our knowledge, the first time a method can infer with high probability that collecting more observational samples can not help with finding the optimal action for non-identifiable causal queries. 

\item We establish a novel algorithm, \myalgo, that approximately recovers our \emph{nonID uncertainty} and \emph{sample uncertainty} in practice. 
 
\item
We demonstrate the utility of our algorithm through extensive experiments on synthetic data of various complexities and on a real-world parent labor dataset.
\end{itemize}

\section{Background}
% We adopt the structural causal model (SCM) framework and utilize the deep causal model (DCM) to model the observational data. In this section, we provide the definition for the SCM and DCM. 
We assume that the observational data are generated by an underlying data-generating process which we formalize with a structural causal model.
\begin{definition}[Structural causal model, SCM)~\citep{pearl2009causality}]
 \normalfont
An SCM $S$ is a tuple of five elements:
{$ S=(\mathbf{V}, \mathcal{N}, \mathcal{U}, \mathcal{F}, P(.) )$}.  
% \zj {Should this just be defined as exogenous and endogenous variables? Latent confounder is a special case of exogenous variable where two V takes dependent or the same set of U}
Here, each observed variable $V_i\in\mathbf{V}$ is realized as an evaluation of the function $f_i\in\mathcal{F}$ by taking as input a subset of the remaining observed variables $Pa_i\subset \mathbf{V}$, an exogenous noise variable $E_i\in \mathcal{N}$, and optionally an unobserved confounding variable $U_i\in\mathcal{U}$. 
$P(.)$ is a {product joint} over all unobserved $\mathcal{N}\cup\mathcal{U}$. 
An SCM containing unobserved confounders is called a \textbf{Non-Markovian causal model}.
A \emph{causal graph}, $G$, representing the variables as nodes and structural functional relationships encoded in the SCM as edges, is called an Acyclic Directed Mixed Graph (ADMG) and denoted as $S \models G$.

\end{definition}

% \begin{definition}[Acyclic Directed Mixed Graph (ADMG)]
%  \normalfont
 % A \emph{causal graph}, $G$, represents the cause–and–effect relationships encoded in the SCM, $S$. We denote this by $S \models G$.
% or
% Here, $\mathcal{V}$ is the vertex set. The directed edges are placed from the input variables of a structural function $f_i\in\mathcal{F}$ to the output variable. The set $Pa_j$ is called the parent set of $V_j$.
% The causal graph is $G=(V,E)$ where $V_i\rightarrow V_j$ iff $V_i\in Pa_j$.  We assume this directed graph is acyclic (DAG). Under the semi-Markovian assumption, each unobserved confounder can appear in the equation of at most two observed variables. 
% {Any non-Markovian case with unobserved confounder causing more than two variables can also be mapped to a semi-Markovian case~\citep{tian2002identification}.}
% We represent the existence of an unobserved confounder $[U=U_X= U_Y]\in \mathcal{U}$ between $X,Y$ in the SCM with a bidirected edge $X\leftrightarrow Y$ in the causal graph. 
% Although acyclic, these graphical structures are not DAGs anymore, and we call them acyclic directed mixed graphs (ADMG).
% For any two nodes $V_i,V_j$, if there is a directed path from $V_i$ to $V_j$, then $V_i$ is an ancestor of $V_j$ and $V_j$ is a descendant of $V_i$.
% % Then  The set of ancestors of $V_i$ in graph $G$ is shown by $An_G(V_i)$.
% \end{definition}

The best action is chosen based on the causal effect of each action on the outcome variable $Y$, which we formalize using do-interventions.
\begin{definition}[Causal effect and do-intervention]
 \normalfont
A do-intervention ${do(}x)$ replaces the functional equation $X=f_X$ with a specific value $X=x$ without affecting other equations. The distribution induced on the variable $Y$ after such an intervention is called an interventional distribution  $P(Y|\Do(x))$. With no intervention, the observational joint distribution of $\mathbf{V}$ is $P(\mathbf{V})$.
\end{definition}

To efficiently find the bounds for causal effects, we model the SCM with deep neural networks.
\begin{definition}[Deep causal models, DCM)~\citep{kocaoglu2018causalgan,xia2021causal,rahman2024modular}]
\label{def:scm}
 \normalfont
	A neural net architecture $\mathbb{G}$ is called a deep causal model (DCM) for an ADMG $G=(\mathbf{V},\mathcal{E})$ if it consists of a collection of neural nets, one  $f_i$ (or interchangeably $f_{V_i}$) for each $V_i\in\mathbf{V}$ such that 
		i) each $f_i$ accepts a sufficiently high-dimensional noise vector $N_i$, 
		ii) the output of $f_j$ is input to $f_i$ iff $V_j\in Pa_G(V_i)$,
		iii) $N_i=N_j$ iff $V_i,V_j$ share an unobserved confounder. 
Functions in DCM, $\G=\{f_{1},...,f_{n}\}$ are parameterized by $\Theta= \{\theta_1, ... , \theta_{|\V|}\}$.
Similar to the original data distribution, $P(\V)$, we define ${P}_{\theta}(\V)$ to be the distribution induced by the DCM.
Sufficiently high-dimensional noise vectors $N_i$, e.g., Gaussian, can replace both the exogenous noises and the unobserved confounders in the true SCM. We consider DCM to be \emph{representative enough for an SCM} if the neural networks have sufficiently many parameters to induce the same observed distribution as the true SCM. 
When a latent confounder $U$ affects two observed variables $X$ and $Y$, we can match the joint distribution $P(X,Y)$ by feeding the same noise $N_X=N_Y$ (as confounders) into both $f_X, f_Y$. 
\end{definition}
\begin{definition}[Causal effect with DCM]
\label{def:NCM-sample}
\normalfont
% Given the neural networks in a DCM,  and produce the distribution accordingly, 
To perform a hard intervention ${do(}X=x)$, we manually set the values for the intervened variables as $X=x$ instead of using their neural network. Then, we feed forward those values into their children and the rest of the mechanisms will work as usual to generate their corresponding values from which we can estimate the causal effect $P(y|\Do(x))$ and ATE.
A loss function measuring $d(P(\mathbf{V}, P_{\theta}(\mathbf{V}))$ can be used to train DCM's end-to-end differentiable networks.  
When causal effects are not pointwise identifiable, we add an additional loss function to maximize or minimize the causal effects. Please check Theorem~\ref{th:identifiability} for details.
\end{definition}

\section{Related Work}
When the causal effect is not identifiable, %from observational data, 
one can identify a range of causal effects from the SCMs that are compatible with the observational data.
\citet{tian2000probabilities} used the response variable to get bounds of counterfactual queries from observational and interventional data. \citet{duarte2024automated} proposed an automated method to derive bounds for causal queries in arbitrary graphs.
Many researchers have explored the use of neural nets \citet{xia2021causal, balazadeh2022partial, rahman2024modular} to  design causal models and estimate causal queries by maximizing and minimizing the query under the semi-Markovian setting. 

In the uncertainty quantification literature,
\citet{melnychuk2024quantifying} proposed a method for quantifying aleatoric uncertainty in individualized treatment effects by deriving sharp bounds on the conditional distributions of the treatment effect. \citet{marmarelis2024ensembled} introduced an approach for constructing causal effect intervals with hidden confounding. %\citet{jiang2024conditional} proposed using entropy as a sensitivity parameter to obtain bounds in the IV graph with assumption violations.
% There exist many methods targeting decision-making.
Existing causal methods approach the decision-making problem from different perspectives.
\citet{frauen2023neural} proposed a neural framework for sensitivity analysis that is compatible with a large class of existing sensitivity models. \citet{jesson2020identifying} studied the decision-making problem under non-overlap. \citet{frauen2025treatment} propose an optimal decision-making approach with two-stage CATE estimators. 

These approaches do not have the capability to disentangle the uncertainty in the causal effect bounds obtained from finite number of samples.
Distributionally robust optimization~\citep{bui2022unified,gao2023distributionally} performs a max-min or min-max optimization which has similarity to our method. However, while DRO searches for a model that is robust to distributions within an $\epsilon$-ball, we instead search for distributions within the ball that allows a model to achieve the best local minimum or the worst local maximum. 
Confidence intervals can be introduced to 
existing causal effect estimation methods using techniques like bootstrapping.
Neither such confidence intervals nor causal effect bounds obtained by these methods can distinguish the uncertainty in the intervals and
suggest a practitioner how to reduce it.

\citet{chernozhukov2013intersection} propose a precision-corrected estimator for intersection bounds. Although intersection bounds shares the same name as our paper, the object being intersected is different from that in our paper and serves a different purpose.
The intersection bounds correction \citep{chernozhukov2013intersection} is also applicable when the causal effect can be expressed with symbolic terms \citep{sachs2023general}. But the number of parameters grows exponentially with the number of variables in the DAG, which makes it difficult to scale for larger graphs. And more importantly, existing work does not provide the decomposition of the uncertainty.
In contrast, our method
% our paper utilizes the neural network to search for the SCM that minimizes and maximizes the causal effect while fitting the observational data within the confidence region. This 
not only allows us to quantify the uncertainty of causal effects for queries that cannot be expressed as symbolic bounds, but also enables the decomposition of that uncertainty into finite-sample uncertainty and non-id uncertainty, which can improve downstream decision-making.

% Also, bootstraping techniques might be incorporated with existing works to obtain causal effect estimation with confidence intervals. 

% Let $v=[v_1, v_2,..., v_{n}]$ such that $V_i \in \V$ and  $D(v)$ is real samples. $P_{\theta}(v)$ is DCM learned joint distribution.
% \begin{equation*}
% \begin{split}
% \hat{v} &\sim P_{\theta}(v)\\
% \hat{v} &= \{f_i(\hat{pa}(V_i), u_{V_i}\}; \forall f_i \in \{f_V: V\in \V\} \\
% \hat{v}_i &= f_i(\hat{pa}(V_i), u_{V_i});  f_i \in \{f_V: V\in \V\}, u_{V_i}\sim {N}(0,I) \\
% % v\sim P^r;\\
% \end{split}
% \end{equation*}

% Suppose $L$ is a loss function that measures the distance between our model's learned distribution and the data distribution. Since the network is end-to-end differentiable, it can be trained with standard neural network methods. 
% Then the gradient updates are as follows:
% \begin{equation}
% \label{eq:single-loss}
%     % f_{V}^{(t+1)} = f_{V}^{(t)} - \eta \frac{\partial L}{\partial f_{V}}
%      f_{V}^{(t+1)} = f_{V}^{(t)} - \eta \frac{\partial L}{\partial f_{V}};  \{f_V: V\in \V \} 
% \end{equation}

%
%%%%%%%%%%%%% NCM theoretical guarnatee .%%%%%%%%%%%%%

% When causal effects are not pointwise identifiable,
% can be followed to train the DCM architecture with an additional loss function to maximize or minimize the causal effects.

% \section{Causal Decision-Making with ID Queries}
% \red{Discuss for $P(Y|X)$.}

% \blue{choose between 2 act}
\section{Causal Decision-Making with Finite Data}
% Between two possible interventions on a variable $X$ (ex: taking/not taking a medicine), we can decide the best action by comparing their causal effect on the outcome $Y$ (ex: recovery). For example, the action $X=1$ is better than $X=0$ if $P(Y|{do(}X=1)) > P(Y|{do(}X=0))$.

We first define the average treatment effect that we will use to compare the effect of multiple actions.

% \blue{[ATE definition]}
\begin{definition}[Average Treatment Effect]
The average treatment effect (ATE) is defined for binary treatments as:
$  ATE = \mathbb{E}[Y|\Do(X=1)]- \mathbb{E}[Y|\Do(X=0)]$. For multiple actions, we define the ATE
of an action $X=x$ with respect to the best among remaining actions $x_b$ as:
$  ATE(x) = \mathbb{E}[Y|\Do(X=x)]- \max_{x_b\neq x}\mathbb{E}[Y|\Do(X=x_b)]$.
\end{definition}

\subsection{Interval for Identifiable and Non-identifiable ATE}
% \blue{[ID ATE with finite data]}
When randomized trials are not possible, we need an estimation of an interventional query from observational data.
The ID algorithm~\citep{tian2002general,shpitser2008complete} provides us with a complete characterization of causal queries that can be uniquely identified from observational joint distribution. However, this might be challenging in practice, as we do not have access to the true joint distribution from finite observational samples. 
%
%
% \blue{what we do when low sample}
We propose using a confidence set to quantify the uncertainty of the observational distribution. Specifically, for a probability distribution $P$, $\CR(\hat{P})$ is a random set such that $ \mathbb{P}( P\in \CR(\hat{P}))\geq 1-\alpha$, where $\mathbb{P}$ is the empirical measure induced by observing a given number of IID samples from $P$. 

Having a set of possible observational distributions converts a pointwise identifiable query to an 
\begin{wrapfigure}[13]{rh!}{0.4\textwidth}
% \vspace{-1em}
\centering
\includegraphics[width=1\linewidth]{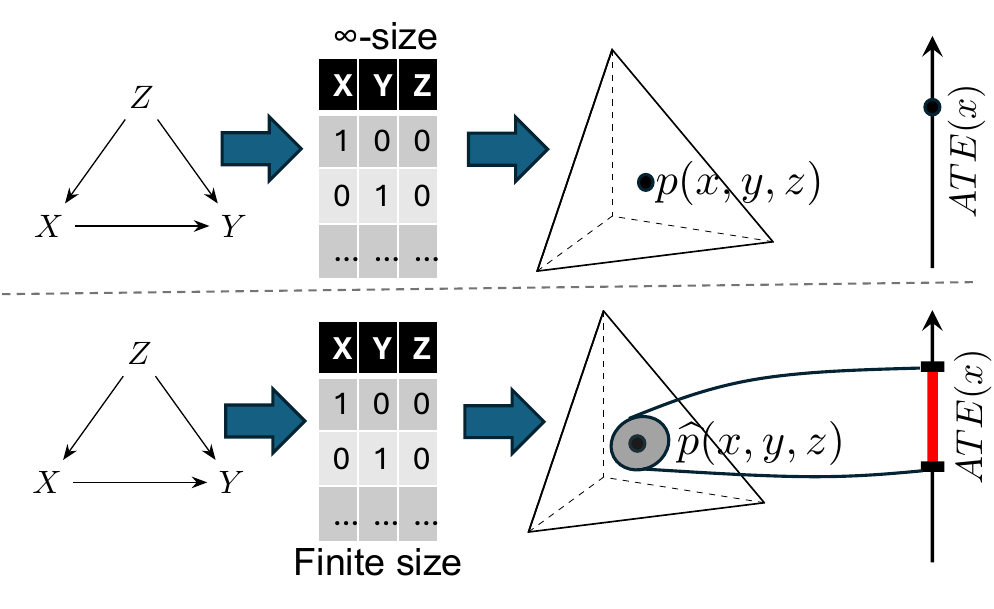}
\caption{ Given finite samples, we cannot pointwise estimate since the true distribution lies in a continuum of a confidence set, but only obtain an interval.}
\label{fig:bound-illustration}
\end{wrapfigure}
interval for the causal effect. We establish this first: %through the following theorem:

% \begin{lemma}
%     For any causal graph with discrete latent variables, the set of joint distributions entailed on the observed set of variables forms a connected set. 
% \end{lemma}
% \begin{proof}
%     Consider the joint distribution over all variables, including the latents. The set of all such joint distributions form an algebraic variety, a connected set. The observational distribution can be obtained as a polynomial function of this joint. The push-forward of a connected set through a polynomial function remains connected. Thus, the set of joint distributions entailed by a causal graph with discrete latent variables forms a connected set.  
% \end{proof}
\begin{restatable}{theorem}{idSurj}
\label{thm:id-surj}
    Let $f(P,S)$ be the estimand of an identifiable causal effect for SCM $S$, with the observational distribution $P$. Let $\mathcal{P}$ be any connected set of observational distributions that contains $P$. %that can be entailed by $G$. 
    Let $a= \min_{p\in\mathcal{P}} f(P,S), b= \max_{P\in\mathcal{P}} f(P,S)$. Then $f|_{\mathcal{P}}:\mathcal{P}\rightarrow [a,b]$ is surjective. 
    % Then $\{f(p;G)\}_{p\in\mathcal{S}}=[a,b]$ for some $0\leq a\leq b\leq 1$.
\end{restatable}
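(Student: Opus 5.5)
The plan is to read the statement as an instance of the intermediate value theorem: a continuous image of a connected set is connected, and the connected subsets of $\mathbb{R}$ are exactly the intervals. Since the effect is identifiable, the ID algorithm~\citep{tian2002general,shpitser2008complete} gives an estimand $f(\cdot,S)$ that depends on $S$ only through the observational distribution. It is a fixed functional of $P$, built from marginals and conditionals combined by finitely many sums, products and ratios. So we may write $f(P)$ and regard it as a real-valued map on $\mathcal{P}$.

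The argument has three steps.
\begin{enumerate}
\item \textbf{Continuity.} We show that $f$ is continuous on $\mathcal{P}$, taken in its natural topology (total variation, or the Euclidean topology on the probability simplex in the discrete case). The estimand is a rational function of the probabilities $P(\mathbf{v})$. Its denominators are the conditioning probabilities, and these are positive under the positivity that identifiability requires. Hence $f$ is a composition of continuous operations wherever it is defined. This is the step that needs care. We assume, as the ID formula implicitly does, that every $P'\in\mathcal{P}$ satisfies positivity. Otherwise we restrict $\mathcal{P}$ to the set where the formula is well defined, and argue that this set is still connected, for instance because $\mathcal{P}$ sits inside the open positive simplex.
\item \textbf{Connectedness of the image.} Since $\mathcal{P}$ is connected and $f$ is continuous, $f(\mathcal{P})\subseteq\mathbb{R}$ is connected, so it is an interval.
\item \textbf{Identifying the interval.} The statement presupposes that the minimum $a$ and the maximum $b$ are attained. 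This holds, for example, when $\mathcal{P}$ is compact, as for a closed confidence set. So $a,b\in f(\mathcal{P})$. An interval containing its infimum $a$ and supremum $b$ is exactly $[a,b]$, hence $f(\mathcal{P})=[a,b]$, which is surjectivity onto $[a,b]$.
\end{enumerate}

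If one prefers to avoid the general topological fact in step 2, the following argument works directly. Take any $c\in(a,b)$ and suppose $c\notin f(\mathcal{P})$. Then $f^{-1}((-\infty,c))$ and $f^{-1}((c,\infty))$ are disjoint, nonempty (they contain the argmin and the argmax respectively), relatively open in $\mathcal{P}$, and cover $\mathcal{P}$. This contradicts connectedness.

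The main obstacle is step 1: making precise on which domain the identifying functional is continuous. That requires positivity for the conditionals appearing in the ID expression. I would state it as a standing assumption on $\mathcal{P}$, for instance by taking $\mathcal{P}$ inside the interior of the simplex.
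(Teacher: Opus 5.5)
Your proof is correct. Its skeleton is the same as the paper's: the estimand is continuous, a connected image in $\mathbb{R}$ is an interval, and the attained extrema pin that interval to $[a,b]$. The difference is in how continuity is obtained.

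\textbf{The paper's route.} It never writes the identifying functional in terms of the observed $P$. It lifts each $P\in\mathcal{P}$ to a joint $\bar P$ over observed and latent variables, and writes $P(y\mid \mathrm{do}(x))$ as an adjustment-type rational function $h(\bar P)$. Identifiability makes $h$ constant on the fibres of the marginalization map $g:\bar P\mapsto P$, so $h=f\circ g$. Continuity is then transferred to $f$ through $f^{-1}(M)=g(h^{-1}(M))$ together with the claim that $g$ is an open map. The merit of this route is that it is formula-free: continuity follows from identifiability plus continuity in the latent parametrization, without inspecting the ID output.

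\textbf{Your route.} You use completeness of the ID algorithm to obtain an explicit rational functional of $P$, and read off continuity under positivity. This is more elementary, and also somewhat more robust:
\begin{itemize}
\item The ID expression is defined at every positive $P'\in\mathcal{P}$, including confidence-set points that are not compatible with $G$; such boxes can contain these points. The paper's construction instead needs every $P'\in\mathcal{P}$ to lift to a $G$-compatible $\bar P$.
\item The paper's open-map claim is justified for the full simplex. However, $h$ is fibre-constant only on the $G$-Markov latent joints, and openness of $g$ on that set is not automatic.
\end{itemize}
Your standing positivity assumption on $\mathcal{P}$ plays the role of the paper's assumption $P(\mathrm{pa}(y))>0$. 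You also state the conclusion exactly, that the image equals $[a,b]$, where the paper only says the interval contains $[a,b]$.
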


% \begin{proof}
%     Suppose the causal query of interest is $p(y|\red{do(}x))$. If the causal effect is identifiable, by definition, there is some $f(.,G)$ where $f(p,G)=p(y|\red{do(}x))$.

%     Consider a real-valued function $f:[0,1]^n\rightarrow [0,1]$. We know that if the function is continuous, then for any connected set $S$, $f(S)$ must be connected. The only connected subsets in $R$ are intervals. Thus, we have $f(\mathcal{S})=[a,b]$ for some $0\leq a\leq b\leq 1$. Thus, we only need to show that the causal effect is a continuous function of the multivariate observational joint distribution.   
% \end{proof}

% The proofs are provided in Section~\ref{ap:thm2} and Section~\ref{ap:thm3}. 
% \blue{[Theorem intuition]}
In words, the collection of ATEs %causal effects 
for a set of observational distributions forms a contiguous interval in $[0,1]$. This establishes that it is sufficient to obtain two numbers, one maximum and one minimum ATE, to characterize this range of ATE values. For certain graphs %and queries, 
this problem can be solved easily, e.g., in the backdoor graph, the objective can be reduced to a linear program (\Cref{sec:backdoor_exp}).
Next we address non-identifiable queries under finite sample uncertainty.
When the causal effect is uniquely identifiable from observational data, we can have pointwise estimation with infinite samples or bounds with finite samples. However, with unobserved confounders, even with infinite samples, we might not uniquely estimate the effect from observational data. This is known as a non-identifiable query: multiple SCMs consistent with the given causal graph and observational data, but with different causal effects. For a fixed causal graph and observational distribution, existing algorithms can be used to search the SCM space to find the maximum and minimum causal effect and construct a bound $[L_{x}, U_{x}]$. 
For a non-identifiable query, we might not uniquely estimate the effect from observational data even with infinite samples. There exist SCMs consistent with the given causal graph and observational data that induce different causal effects. For a fixed distribution, existing algorithms can be used to construct bounds of causal effect $P(y|\Do(x))\in [L_{x}, U_{x}]$.

%
% Similar to Theorem~\ref{thm:id-surj} for the identifiable queries, obtaining 
% \blue{[How to find non-id ATE bound]}
Similar to Theorem~\ref{thm:id-surj}, finding these two values: $L_{x}, U_{x}$ is sufficient to characterize the range of causal effect values for a non-id query. Formally,

\begin{restatable}{theorem}{nonidsurj}
    \label{thm:nonid-surj}
    Let $f(p,S)$ be the estimand of some non-identifiable causal effect of interest for some causal graph, and the observational distribution $p$. 
    Let $\mathcal{S}$ be the set of the SCM such that $P_{obs}(S) = p$. Let $a= \min_{S\in\mathcal{S}} f(p,S), b= \max_{S\in\mathcal{S}} f(p,S)$.
    Then $f|_p:\mathcal{S}\rightarrow [a,b]$ is surjective. 
    % \red{jan Should $S$ be an SCM or a causal graph?}
    % Then $\{f(p;G)\}_{p\in\mathcal{S}}=[a,b]$ for some $0\leq a\leq b\leq 1$.
\end{restatable}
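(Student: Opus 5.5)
Since $a$ and $b$ are, by definition, values attained by $f(p,\cdot)$ on $\mathcal{S}$, it suffices to show that the image $f(p,\mathcal{S})\subseteq\mathbb{R}$ is connected. The only connected subsets of $\mathbb{R}$ are intervals, and an interval containing its minimum $a$ and maximum $b$ is exactly $[a,b]$. So the whole argument reduces to a path-connectedness statement: for any two SCMs $S_0,S_1\in\mathcal{S}$ I will construct a path $t\mapsto S_t\in\mathcal{S}$ along which $t\mapsto f(p,S_t)$ is continuous. The intermediate value theorem then gives every value between $f(p,S_0)$ and $f(p,S_1)$, and taking $S_0,S_1$ to be a minimizer and a maximizer yields surjectivity onto $[a,b]$.

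For the path I would use a mixture construction on the exogenous variables, which keeps the observational distribution fixed. Take $S_0,S_1\in\mathcal{S}$, both compatible with the same ADMG. Introduce a new exogenous switch $W\sim\mathrm{Bernoulli}(t)$ that is fed into every structural function, and define $S_t$ by letting $f_i^{(t)}(pa_i,e_i,u_i,w)$ equal $f_i^{(w)}$, with the exogenous variables drawn from the corresponding model. Equivalently, enlarge each noise to the pair of both models' noises and let $W$ select which mechanism is used.

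A shared $W$ touching all variables would add confounding that the graph does not have. To avoid this I would argue at the level of the canonical (response-function) representation, which the paper's DCM definition already permits. Each c-component carries its own joint distribution over response functions. Since $S_0$ and $S_1$ induce the same $p$, the c-factors $Q[C]$ they induce coincide (Tian's factorization is determined by $p$). I would therefore interpolate each c-component's response-function distribution linearly, $\mu_C^{t}=(1-t)\mu_C^{0}+t\mu_C^{1}$. This is a valid SCM for $G$.

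Along this path two facts must hold. First, the observational distribution stays $p$: each $Q[C]$ is linear in $\mu_C$, the two endpoints agree, so it is constant along the segment, and $p$ is the product of the c-factors. Second, the interventional quantity must be continuous in $t$. $P(y\mid \Do(x))$ is a multilinear polynomial in the $\mu_C^{t}$, obtained by summing products of response-function probabilities. It is therefore a polynomial in $t$, hence continuous, and the ATE inherits this continuity as a difference of expectations.

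The main obstacle is the first fact in full generality, namely making sure the interpolation stays inside $\mathcal{S}$ when the c-components' response-function distributions interact. A naive joint mixture using one global $W$ preserves $p$ but violates the graph. Per-c-component mixing respects the graph, but then I must verify that $p$ is preserved, which relies on $Q[C]$ being a function of $p$ alone and being linear in $\mu_C$. If continuous (non-discrete) variables cause trouble with response functions, I would first reduce to the discrete case or invoke the canonical-model results referenced via Theorem~\ref{thm:id-surj}'s setting, and treat continuity of $f$ in $t$ via dominated convergence on the mixture densities.
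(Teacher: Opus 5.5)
Your argument is correct in the paper's setting (discrete variables, one shared latent per confounded group), but it establishes connectedness differently from the paper. The paper parametrizes an SCM by the full joint $\bar P$ over observed and latent variables. It writes $P(y\mid \mathrm{do}(x))$ as a rational function of $\bar P$, which is continuous under its positivity assumption $P(pa(y))>0$. It then asserts that the feasible set of $\bar P$ is connected because ``the only constraints'' are $\sum_u \bar P(v,u)=P(v)$, and concludes that the continuous image is an interval containing $a$ and $b$. That assertion skips the nonlinear factorization constraints the graph places on $\bar P$: linearly interpolating two admissible $\bar P$'s generally makes independent latents dependent. This is exactly your objection to a global switch $W$.

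Your route fills the gap. In response-function coordinates each c-factor is linear in $\mu_C$ and pinned down by $p$, so the feasible set is a product of polytopes and hence convex. The query is a polynomial in the $\mu_C$, so continuity needs no positivity, and compactness even shows that $a$ and $b$ are attained. The price is that you rely on the canonical class in which each c-component carries an \emph{arbitrary} joint law over response functions; this matches the paper's DCM, where confounded variables share one noise vector. If a c-component is not a bidirected clique and its latents are pairwise independent, a mixture of admissible laws need not be admissible. For example, $A\leftrightarrow B\leftrightarrow C$ forces $R_A\perp R_C$, which mixing destroys, so there your path could leave $\mathcal{S}$.

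One step also needs tightening. Without positivity, the c-factors of $S_0$ and $S_1$ agree only where $p(v)>0$, so ``the endpoints agree'' is not literally true. For instance, in the IV graph with $P(Z=z)=0$, $Q[\{X,Y\}](\cdot,\cdot,z)$ is unconstrained. The conclusion survives. In a topological order,
\[
P(v_1,\dots,v_i)=\prod_{C}\mu_C\{r: r_j(pa_j)=v_j \text{ for } j\in C,\ j\le i\}.
\]
If $p(v)=0$, let $k$ be the first index with $P(v_1,\dots,v_k)=0$, and let $C$ be the c-component containing $V_k$. The levels $k-1$ and $k$ of this product differ only in the factor for $C$. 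Hence the level-$k$ factor of $C$, which dominates $Q[C](v)$, vanishes for every SCM entailing $p$. Being linear in $\mu_C$, it vanishes along your whole segment, so $p^t(v)=0$. Where $p(v)>0$, Tian's telescoping formula $Q[C](v)=\prod_{V_i\in C}P(v_i\mid v_1,\dots,v_{i-1})$ applies, and your argument goes through as written.
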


%  In an unidentifiable
% situation, any estimation of ACEs only based on the observational distribution is not guaranteed to be correct, since there
% can be other underlying causal models that also agree with
% the observational distribution but result in different ACEs
% \mk{What bounds? We only talked about ID queries and bounds that come with them.}. \mk{We should open up this section with bounds when effect is not identifiable, similar optimization problem, using response variables if needed. Then pose the main question: Can we ever know whether collecting more samples will never lead to informed decision-making? Restructure this section.} 
% \summ{Challenges in decision making}
% In the presence of finite-sample uncertainty, these bounds become even wider, making it more difficult to choose the best action.

% Theorem~\ref{thm:id-surj} and~\ref{thm:nonid-surj}  
% % \Cref{thm:id-surj} and \Cref{thm:nonid-surj} 
% establish the connection between the partial identification region to the underlying feasible SCMs. This enable us to evaluate the outcome from different actions through causal models. 

\subsection{Decomposing  %\epSample and \epNonID 
Causal Effect Uncertainty }
\label{epi-uncertain}
% \subsection{Introduction of the min-max, max-min concept as OUR version of \epSample and \epNonID uncertainty in decision making}
% \begin{definition}[Identifiable Causal Estimand]
%     An causal estimand $E$ is identifiable if for a given observational distribution $\mathbb{P}$ and the set of SCM $\mathcal{S}(\mathbb{P})$ such that $P_{obs}(S)=\mathbb{P}$ for all $S\in \mathcal{S}$, $E(S) = E(S')$ for all $S, S' \in \mathcal{S}$.
% \end{definition}

% 

% \summ{Our solution}

Given the uncertainties in estimating the ATE, we now define the challenges in decision-making.

% \begin{definition}[Causal Decision]
%    A causal decision with $ATE(x)$ as decision estimand is ambiguous given confidence set $\CR$ if for some action $x\in\mathcal{X}$, there exists two SCMs $S_1,S_2$ entailing $p_1,p_2\in \CR$ with best actions $x_{b_1}, x_{b_2} $ in $\mathcal{X}\setminus \{x\}$ such that 
%     $ATE_{S_1}(x,x_{b_1}) > 0$ and  $ATE_{S_2}(x,x_{b_2}) < 0$. 
% \end{definition}

\begin{definition}[Causal Decision]
   A causal decision with $ATE(x)$ as decision estimand is ambiguous given confidence set $\CR$ if for some action $x\in\mathcal{X}$, there exists two SCMs $S_1,S_2$ entailing $p_1,p_2\in \CR$ with some $x$ such that 
    $ATE(x) > 0$ in $S_1$ and  $ATE(x) < 0 $ in $S_2$. 
\end{definition}

If $X=x$ is the best action in one SCM ($ATE(x) > 0$ in $S_1$) but not the best action in another ($ATE(x) < 0$ in $S_2$) where both SCMs are consistent with the finite input samples, then the decision is ambiguous.
If the best decision cannot be determined from the bounds of causal effect, a possible next step is to collect more samples to improve the confidence of our estimations. 
However, for the non-identifiable causal query, even in the limit of infinite data, the bounds of causal effects from two actions may still overlap. 
When we estimate the empirical joint distribution from finite samples, we experience non-id uncertainty and sample uncertainty entangled together. In the next section, we decompose them for decision-making.
%

% \blue{two type uncer \& our method}
The main question we pose in this paper is: \emph{Can we know whether collecting more samples will never lead to informed decision-making?} To answer this question, we need to gain insight into two different types of epistemic uncertainty without knowing the observational distribution exactly: one modeling uncertainty due to unobserved confounding, and one modeling the uncertainty due to finite samples.
\begin{wrapfigure}[14]{rh!}{0.4\textwidth}
\vspace{-1.5em}
% \centering
% \includegraphics[width=1\linewidth]{figures/bounds-illustrated2.pdf}
\includegraphics[width=1\linewidth]{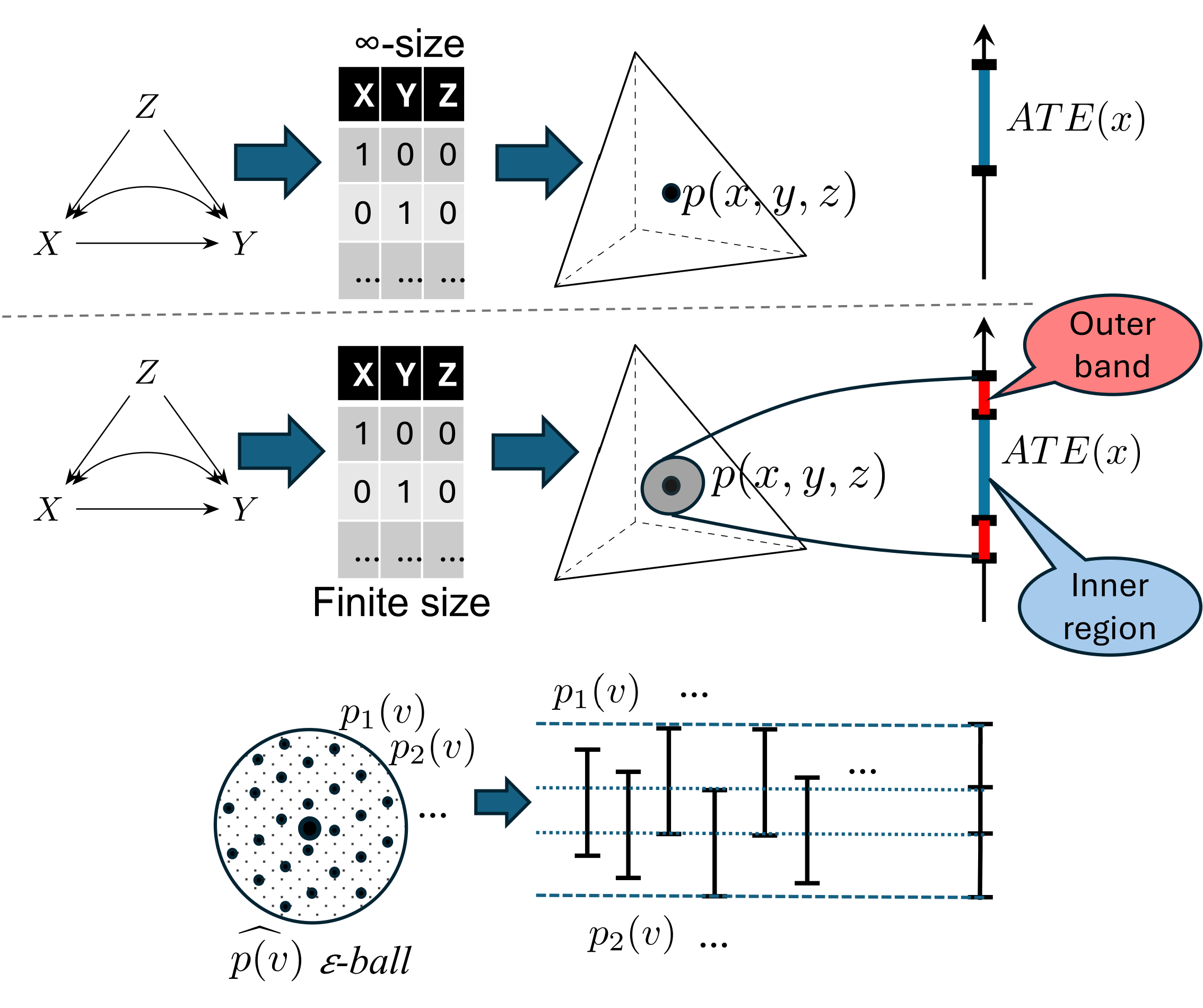}
% \vspace{-1mm}
\caption{Inner and outer causal effect bounds when the effect is not ID.}
\label{fig:bound-illustration}
% \vspace{-7mm}
\end{wrapfigure}
Our proposed method distinguishes between the two uncertainties by providing a lower bound for the uncertainty that cannot be reduced by collecting more samples. %to some level so that we are able to 
This helps us decide the right conclusion about data collection practices for decision-making. 
% In the next section, we introduce \epSample and \epNonID uncertainty for single and multiple actions. 
% action to improve the data.
 % In that case, one would need to resort to other approaches, such as utilizing additional variables. 
 
\textbf{Assumptions}:
We assume \textit{i)} {discrete variables }, \textit{ii)} Semi-Markovian SCM and \textit{iii)} access to the acyclic directed mixed graph (ADMG). See Section~\ref{thr-anl} for details.
In this section, we discuss a general scenario in which we compare the effects of multiple (two or more) actions to choose the best action. 
 We discuss the case of optimizing only a single action in Section~\ref{sec:sing-act}.  We formally introduce the two types of uncertainties below.

\label{sec:decide}
% {rh!}
% \subsection{Introduction of \epSample and \epNonID uncertainty in decision making}
% \begin{definition}[\red{Decision Metrics}]
    % $P(y|\Do(x))$ vs $ATE(x)$.
% \end{definition}

% When we need to make the critical decision 
% of choosing an action $X=x$ for the population level to improve the outcome, 
% we compare the bounds of the causal effect $P(y|\Do(x))$ with a scalar quantity such as the marginal $P(y)$. 

% \summ{Uncertainty Properties \& connection with bounds}

\begin{definition}[\epSample and \epNonID Uncertainty (Figure~\ref{fig:bound-illustration})]
\label{def:uncertainty-mult}

    Let $\hat{P}$ be an empirical estimation of the joint distribution and a confidence set $\CR$ be such that $\CR$ covers the true $P$ with probability of at least $1-\alpha$. 
    
    Define the set $\mathcal{S}_P=\{S\text{ an SCM} \mid P_\text{obs}(S)=P,  {S \models G} \}$ and 
    $  \mathcal{I}_P(x) = \{ATE(x) \mid S\in\mathcal{S}_P\}$. 
    % \mk{Why are we looking at SCMs that entail different causal graphs?} 
    
    Then define the intersection $\bigcap_{{P}\in\CR} \mathcal{I}_{{P}}$ to be the \textbf{\epNonID uncertainty}, and the set difference $\bigcup_{{{P}}\in\CR} \mathcal{I}_{{P}}\setminus \bigcap_{{{P}}\in\CR} \mathcal{I}_{{P}}$ to be the \textbf{\epSample uncertainty}.
    {The total epistemic uncertainty region in the estimation would be the union of these two subcomponents.     }
    % for the decision-making problem. 
\end{definition}

Intuitively, for each distribution $P$ in the confidence set $\mathcal{C}$, we obtain a set of ATE values $\mathcal{I}_P(x)$ from a set of SCMs  $\mathcal{S}_P$ that matches $P$ and consistent with causal graph $G$.
We construct \epNonID uncertainty  by taking intersection of $\mathcal{I}_P$ for all $P\in \mathcal{C}$ and claim the following proposition: 
% We specify the process of reaching to the final decision through a sequence of moves. 
% $i)$ we \textit{\textbf{return}} the best action ({shown as} \checkmark), $ii)$
%
% \begin{proposition}
% \label{prop:irred-samp-ate}
%     The uncertainty in the causal decision-making due to  \epNonID uncertainty in $ATE(x)$
%     cannot be reduced by increasing the sample size.
% \end{proposition}

\begin{restatable}{proposition}{irred}
    \label{prop:irred-samp-ate}
    Let $D_1,D_2$ be two datasets such that $\lvert D_1\rvert <\lvert D_2\rvert$. Suppose $\hat{p}_1(v)=\hat{p}_2(v)$ Then uncertainty in the causal decision-making due to  \epNonID uncertainty in $ATE(x)$
    cannot be reduced by increasing the sample size.
\end{restatable}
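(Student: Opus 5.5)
The plan is to make the informal statement precise and then show that the nonID region is contained in every identification set we could ever face. Fix the true observational distribution $P$, and let $\CR_1=\CR(\hat{p}_1)$ and $\CR_2=\CR(\hat{p}_2)$ be the confidence sets built from $D_1$ and $D_2$. I read ``cannot be reduced'' to mean the following. Whatever confidence set arises from the larger sample, the interval $\bigcap_{P'\in\CR_1}\mathcal{I}_{P'}(x)$ computed from $D_1$ stays inside the total uncertainty obtained from $D_2$. Moreover, on the event $P\in\CR_1$, which has probability at least $1-\alpha$, it is contained in $\mathcal{I}_P(x)$, the set that remains even with infinitely many samples.

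\textbf{Step 1 (containment in the infinite-sample region).} Use only the definition of an intersection. If $P\in\CR_1$, then $\bigcap_{P'\in\CR_1}\mathcal{I}_{P'}(x)\subseteq \mathcal{I}_P(x)$. Consequently, every ATE value in the nonID region is realized by some SCM $S\in\mathcal{S}_P$ with $S\models G$ that entails the \emph{true} $P$. Since a dataset of any size, including the infinite-sample limit, is generated by $P$, no amount of data can rule out $S$. Any decision ambiguity witnessed inside the nonID region therefore persists: if the nonID region contains both a positive and a negative ATE value, then $\mathcal{I}_P(x)$ does as well. By Theorem~\ref{thm:nonid-surj}, $\mathcal{I}_P(x)$ is the interval $[L_x,U_x]$, so the two witnessing SCMs both lie in $\mathcal{S}_P$ and remain consistent with every future $\CR_2$ that contains $P$.

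\textbf{Step 2 (the two datasets).} Use the hypothesis $\hat{p}_1=\hat{p}_2$. Confidence sets shrink with sample size around the same center, so $\CR_2\subseteq\CR_1$; this is the point where I must assume the construction of $\CR$ is monotone in $|D|$ for a fixed empirical estimate, e.g.\ a ball of radius $\epsilon(n,\alpha)$ decreasing in $n$. Intersecting over a smaller set gives a larger result, so
\begin{equation*}
\bigcap_{P'\in\CR_1}\mathcal{I}_{P'}(x)\subseteq\bigcap_{P'\in\CR_2}\mathcal{I}_{P'}(x).
\end{equation*}
Hence the nonID region, and any ambiguity it already certifies, can only persist or grow when we move from $D_1$ to $D_2$. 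Only the sample region, the union minus the intersection, can shrink.

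\textbf{Main obstacle.} The hard part is that the proposition is informal. I must fix what ``reduced'' means and state the assumption on how $\CR$ depends on $n$. Step 2 needs that monotonicity assumption and fails for arbitrary confidence sets. For that reason I would present Step 1, which holds with probability $1-\alpha$ without any extra assumption, as the core of the argument, and treat Step 2 as the deterministic refinement that the hypothesis $\hat{p}_1=\hat{p}_2$ is meant to enable.
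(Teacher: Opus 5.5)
Your proposal is correct. Its Step~2 is exactly the paper's proof: from $\hat{p}_1=\hat{p}_2$ and $|D_1|<|D_2|$ the paper asserts $\CR(\hat{p}_2)\subseteq\CR(\hat{p}_1)$ and concludes that the intersection can only grow. You state the inclusion in the right direction, $\bigcap_{P'\in\CR_1}\mathcal{I}_{P'}(x)\subseteq\bigcap_{P'\in\CR_2}\mathcal{I}_{P'}(x)$. The paper's written intermediate inclusion is reversed, although its final conclusion (the nonID region from $D_2$ is a superset of the one from $D_1$) agrees with yours.

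The paper simply asserts that the confidence set is monotone in sample size, whereas you flag it as an assumption. It does hold for the paper's Hoeffding construction. With $\hat{p}$ fixed, every conditioning count grows in proportion to $|D|$, so each radius $\sqrt{\ln(2/\alpha)/(2n)}$ shrinks. Zero-count cells receive $[0,1]$ under both datasets.

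Your Step~1, which you make the core, is a genuinely different argument that the paper does not give. On the coverage event $P\in\CR_1$, the nonID region lies inside $\mathcal{I}_P(x)$, the set that survives even with infinitely many samples. This compares against the actual infinite-data limit rather than against one hypothetical larger dataset. It also needs neither the artificial hypothesis $\hat{p}_1=\hat{p}_2$ nor monotonicity of $\CR$ in $n$. The price is that it holds only with probability at least $1-\alpha$, whereas the paper's route is deterministic once monotonicity is granted. Your opening claim, that the $D_1$ region stays inside the total uncertainty of an arbitrary $\CR_2$, additionally needs $P\in\CR_2$. Without monotonicity that gives probability at least $1-2\alpha$.

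Finally, the appeal to Theorem~\ref{thm:nonid-surj} in Step~1 is unnecessary. The two witnessing SCMs in $\mathcal{S}_P$ come directly from the inclusion. This is just as well, because that theorem's proof is written for $P(y\mid \Do(x))$, not for $ATE(x)$ with its maximum over competing actions.
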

% \begin{proposition}
% \label{prop:irred-samp-ate}
%     Let $D_1,D_2$ be two datasets such that $\lvert D_1\rvert <\lvert D_2\rvert$. Suppose $\hat{p}_1(v)=\hat{p}_2(v)$ Then uncertainty in the causal decision-making due to  \epNonID uncertainty in $ATE(x)$
%     cannot be reduced by increasing the sample size.
% \end{proposition}

%
%
This implies that if $|\bigcap_{{P}\in\CR} \mathcal{I}_{{P}}|\neq 0$, no amount of data points  would disambiguate the causal effect. Thus, we must \textit{\textbf{observe}} additional variables to reduce this uncertainty.
Next, we construct \epSample uncertainty from $\bigcup_{{{P}}\in\CR} \mathcal{I}_{{P}}\setminus \bigcap_{{{P}}\in\CR} \mathcal{I}_{{P}}$ ATE values and claim the following proposition:

% we can not make decision about the best action rather conclude that 

\begin{proposition}
\label{prop:red-samp-ate}
    There exists SCMs $S\in \mathcal{S}_P$ where the uncertainty in the causal decision-making problem due to  {\epSample uncertainty} in   $ATE(x)$ can be reduced by increasing the dataset size.
\end{proposition}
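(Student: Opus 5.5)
The statement is existential, so the plan is to exhibit a concrete graph, SCM and query for which the sample region $\bigcup_{P\in\CR}\mathcal{I}_P\setminus\bigcap_{P\in\CR}\mathcal{I}_P$ contains values that make the decision ambiguous at small $n$ and that disappear as $n\to\infty$.

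\textbf{Choice of example.} The simplest choice is an identifiable setting, so that $\mathcal{I}_P(x)$ is a single point $\{f(P)\}$ for each $P$. Concretely, take the graph $X\to Y$ with binary $X,Y$ and no confounding. Here $ATE=P(y\mid x_1)-P(y\mid x_0)$, a continuous function of $P$ wherever $P(x_0),P(x_1)>0$. Fix a true $P$ with $ATE_P=\delta>0$ and full support.

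\textbf{Step 1: compute the two regions.} Use a standard confidence set, for instance an $L_1$ or KL ball of radius $r_n=O(\sqrt{\log(1/\alpha)/n})$ around $\hat P_n$, which covers $P$ with probability at least $1-\alpha$ by concentration for multinomials. Each $\mathcal{I}_{P'}$ is a singleton. So as soon as $\CR$ contains two distributions with different ATEs, the intersection $\bigcap_{P'\in\CR}\mathcal{I}_{P'}$ is empty. By Theorem~\ref{thm:id-surj}, since a ball is connected, the union is the whole interval $[a_n,b_n]$. Hence all the uncertainty is sample uncertainty.

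\textbf{Step 2: small $n$ gives ambiguity.} For small $n$, $r_n$ is large enough that $\CR$ contains a distribution with $P(y\mid x_1)<P(y\mid x_0)$. Then $a_n<0<b_n$, so the decision is ambiguous in the sense of the Causal Decision definition.

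\textbf{Step 3: large $n$ removes it.} As $n\to\infty$, $\hat P_n\to P$ almost surely and $r_n\to0$, so every $P'\in\CR$ converges to $P$. By continuity of the ATE near $P$, with uniform control from the positivity of $P(x_i)$, we get $[a_n,b_n]\subset(\delta/2,\,3\delta/2)$ with probability tending to one. The ambiguity is therefore eliminated: the sample uncertainty shrinks and the decision becomes determined by collecting more data.

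\textbf{Non-identifiable variant (optional).} To address non-identifiable queries as well, take the bow graph $X\to Y$, $X\leftrightarrow Y$ with the Manski bounds $\mathcal{I}_P=[L(P),U(P)]$. Both endpoints are continuous in $P$. Choose $P$ whose bound interval lies strictly on one side of $0$. The same continuity argument then shows $\bigcup_{P'\in\CR}\mathcal{I}_{P'}$ shrinks to $\mathcal{I}_P$, which excludes $0$, while for small $n$ the union straddles $0$.

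\textbf{Main obstacle.} The step needing care is Step 3: establishing uniform continuity of $P'\mapsto\mathcal{I}_{P'}$ (or of the ATE) over the shrinking ball. This requires the positivity bounds on $P'(x)$ to hold uniformly on $\CR$. I would handle it by restricting to $n$ large enough that $r_n<\min_x P(x)/2$, which keeps every $P'\in\CR$ bounded away from the boundary of the simplex.
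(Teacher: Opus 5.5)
Your main argument is correct, but it takes a different route from the paper. The paper gives no separate proof of this ATE statement. Its argument, written out for the single-action analogue (Proposition~\ref{prop:red-samp}), runs as follows: with the empirical distribution held fixed, more samples give a nested, smaller confidence set $\mathcal{C}'\subseteq\mathcal{C}$. Hence $\overline{U_x}=\sup_P\max_S$ can only decrease and $\underline{U_x}=\inf_P\max_S$ can only increase, and symmetrically for the lower endpoints, so both bands $[\underline{L_x},\overline{L_x}]$ and $[\underline{U_x},\overline{U_x}]$ narrow. That is a generic monotonicity statement for every SCM. However, it gives only weak (non-strict) narrowing under the nestedness idealization, and never shows that an ambiguous decision actually becomes unambiguous.

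You instead exhibit an identifiable instance, $X\to Y$. There each $\mathcal{I}_{P'}$ is a singleton, so the nonID region is empty and all uncertainty is sample uncertainty. You then use consistency of $\hat P_n$, $r_n\to 0$, and continuity of the ATE at a full-support $P$ to show the whole union collapses into $(\delta/2,3\delta/2)$ with probability tending to one. This handles the randomness of $\hat P_n$ (no nestedness needed) and proves genuine resolution of the ambiguity. The price is that it is only an existence example, and its nonID component is trivial.

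Two tidy-ups:
\begin{itemize}
\item The appeal to Theorem~\ref{thm:id-surj} in Step~1 is unnecessary. It would also need positivity on the whole ball, which can fail for small $n$.
\item Step~2 should be made concrete. Either take $n$ with $r_n\ge 2$, the $L_1$-diameter of the simplex, so that $\mathcal{C}$ is the whole simplex and the union deterministically contains negative ATE values. Or choose $\delta$ small relative to $r_{n_0}$, so that with high probability the ball around $\hat P_{n_0}$ reaches a distribution with negative ATE.
\end{itemize}

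The optional non-identifiable variant, however, is false as stated. In the bow graph with binary $X,Y$, the sharp bounds $P(x,y)\le P(y\mid \mathrm{do}(x))\le 1-P(x,y')$ give $\mathcal{I}_P=[-P(x_1,y_0)-P(x_0,y_1),\,P(x_1,y_1)+P(x_0,y_0)]$. This interval has width $1$ and contains $0$ for every $P$. So no $P$ has its bound interval strictly on one side of $0$. In fact $\overline{L_x}\le 0\le\underline{U_x}$ always holds here: the nonID region always contains $0$, which is exactly the situation where more samples can never resolve the decision (cf.\ setup~1 of the experiments).

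For a non-identifiable example you need a graph whose sharp bounds can exclude $0$. One option is the IV graph with the Balke--Pearl bounds, which are maxima and minima of linear functions of $P(x,y\mid z)$ and hence continuous on IV-compatible distributions. Choose a true $P$ in the interior of that set whose lower ATE bound is strictly positive. Your Step~3 argument then goes through.
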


%we can not decide yet
Intuitively, if $|\bigcup_{{{P}}\in\CR} \mathcal{I}_{{P}}\setminus \bigcap_{{{P}}\in\CR} \mathcal{I}_{{P}}|\neq 0$, it is possible that collecting more data points can disambiguate the causal effect. Thus, we \textit{\textbf{collect}} more samples to reduce this uncertainty. To represent the uncertainty regions in Proposition~\ref{prop:irred-samp-ate}- \ref{prop:red-samp-ate},  and numerically determine them, we define four quantities:

% \begin{definition}[Inner ate bound: $[\Lpate, \Upate]$, Outer band 
% $[\Late, \Lpate] \cup [\Upate, \Uate] $
% ]
% \begin{definition}[Inner ate bound: $\lbrack\Lpate, \Upate \rbrack$, Outer ate band $\lbrack\Late, \Lpate\rbrack \cup \lbrack\Upate, \Uate\rbrack$]

% \label{def:ate-four}

% %  $[\Up_x, \Lp_x]$, Outer bound $[\U_x, \Lo_x]$
% Given a confidence set $\CR$, we access the sample and non-id uncertainty regions by estimating four quantities for ATE 
% same as Definition~\ref{def:four}. See Def~\ref{appex-def-ate} in the appendix for details.

\begin{definition}%[Inner ate bound: $\lbrack\Lpate, \Upate \rbrack$, Outer ate band $\lbrack\Late, \Lpate\rbrack \cup \lbrack\Upate, \Uate\rbrack$]
\label{appex-def-ate}
% Given a confidence set $\CR$, we access the sample and non-id uncertainty regions by estimating four quantities for ATE 
% as following.
For some action $x\in\mathcal{X}$,
we define the following helpful quantities to compute uncertainty regions. %that will help with computing \textbf{nonID} and \textbf{sample uncertainty} regions. 

\begin{equation}
\label{eq:4quant-mult}
\begin{alignedat}{3}
\Uatex
&:= \sup_{{P} \in \CR(\hat{{P}})} \max_{S \in \mathcal{S}_P} ATE(x)
&\qquad\qquad&
\Latex
&:= \inf_{{P} \in \CR(\hat{{P}})} \min_{S \in \mathcal{S}_P} ATE(x),
\\
\Upatex
&:= \inf_{{P} \in \CR(\hat{{P}})} \max_{S \in \mathcal{S}_P} ATE(x)
&\qquad\qquad&
\Lpatex
&:= \sup_{{P} \in \CR(\hat{{P}})} \min_{S \in \mathcal{S}_P} ATE(x).
\end{alignedat}
\end{equation}

\end{definition}

% \begin{theorem}
\begin{restatable}{theorem}{intersectmult}
\label{thm:intersect-mult}
    Given a confidence set $\CR$ and ATE values $\mathcal{I}_P$ 
    % (Def.~\ref{def:uncertainty-mult})
    , $\bigcap_{P\in\CR}\mathcal{I}_P = [\Lpatex,\Upatex]$ if not empty.
% \end{theorem}
\end{restatable}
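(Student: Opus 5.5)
The plan is to reduce the statement to the interval structure of each $\mathcal{I}_P$, which Theorem~\ref{thm:nonid-surj} supplies. For the estimand $ATE(x)$ and each fixed $P\in\CR$, that theorem (applied with $f(P,S)=ATE(x)$ over $\mathcal{S}_P$) gives
\[
\mathcal{I}_P(x)=[a_P,b_P],\qquad a_P:=\min_{S\in\mathcal{S}_P}ATE(x),\quad b_P:=\max_{S\in\mathcal{S}_P}ATE(x).
\]
The hypotheses to check are that these min and max are attained and that the range is connected. I would justify this as in the paper's earlier results. Under the discrete, semi-Markovian assumptions, $\mathcal{S}_P$ can be parametrized by a compact set, for example response-function distributions satisfying linear constraints; this set is convex, hence connected, and $ATE(x)$ is continuous on it.

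Once each $\mathcal{I}_P$ is a closed interval, the intersection of a family of intervals is determined by its endpoints:
\[
\bigcap_{P\in\CR}[a_P,b_P]=\Bigl\{t:\ \sup_{P\in\CR}a_P\le t\le \inf_{P\in\CR}b_P\Bigr\}.
\]
This follows by double inclusion. If $t$ lies in every interval, then $a_P\le t\le b_P$ for all $P$, so $\sup_P a_P\le t\le\inf_P b_P$. Conversely, if $\sup_P a_P\le t\le\inf_P b_P$, then for each $P$ we have $a_P\le\sup a\le t\le\inf b\le b_P$, so $t\in[a_P,b_P]$.

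By Definition~\ref{appex-def-ate}, $\sup_P a_P=\Lpatex$ and $\inf_P b_P=\Upatex$, so the right-hand set is exactly $[\Lpatex,\Upatex]$. The qualifier ``if not empty'' covers the case $\Lpatex>\Upatex$. In that case the intersection is empty and the symbol $[\Lpatex,\Upatex]$ should be read as empty as well, so the identity holds trivially. When the intersection is nonempty, $\Lpatex\le\Upatex$ are finite, since $ATE\in[-1,1]$ for bounded outcomes. The endpoints themselves belong to the intersection by the double-inclusion argument, so the interval is closed even though $\sup$ and $\inf$ over $\CR$ need not be attained.

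The main obstacle is the first step: confirming that each $\mathcal{I}_P$ is a closed interval, which requires the min and max over $\mathcal{S}_P$ to be attained and the range to be connected. For multiple actions, $ATE(x)$ involves $\max_{x_b\neq x}$, but a maximum of finitely many continuous functions is still continuous, so Theorem~\ref{thm:nonid-surj} still applies. If attainment were doubtful, the identity would weaken to an equality up to the closure of $[\Lpatex,\Upatex]$. I would therefore rely on compactness of the discrete response-function parametrization to guarantee attainment.
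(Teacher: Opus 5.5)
Your proposal is correct and follows essentially the same route as the paper: both write each $\mathcal{I}_P$ as $[\min_{S\in\mathcal{S}_P} ATE(x),\ \max_{S\in\mathcal{S}_P} ATE(x)]$ and identify the endpoints of the intersection as the supremum over $\CR$ of the lower endpoints and the infimum over $\CR$ of the upper endpoints. The only differences are minor. Your double-inclusion step is a more direct substitute for the paper's endpoint-by-contradiction arguments, and you make explicit, via Theorem~\ref{thm:nonid-surj}, that each $\mathcal{I}_P$ is a closed interval, which the paper assumes without comment.
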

% as following.

% \begin{corollary}
\begin{restatable}{corollary}{outerregionmult}
\label{outer-region-mult}
     Given a confidence set, $\CR$ and a set of average treatment effect values,   $\mathcal{I}_P$ (Def.~\ref{def:uncertainty-mult}), %we have 
     $$\bigcup_{P\in\CR} \mathcal{I}_P\setminus \bigcap_{P\in\CR} \mathcal{I}_P=[\Latex,\Lpatex]\cup[\Upatex,\Uatex].$$
% \end{corollary}
\end{restatable}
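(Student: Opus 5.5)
The plan is to reduce the corollary to two set identities: one for the union and one for the intersection. I will then combine them by elementary set algebra.

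\textbf{Step 1: each $\mathcal{I}_P$ is an interval.} By Theorem~\ref{thm:nonid-surj}, for every $P\in\CR$ the set $\mathcal{I}_P(x)$ is exactly the closed interval $[m(P),M(P)]$. Here
\[
m(P)=\min_{S\in\mathcal{S}_P}ATE(x), \qquad M(P)=\max_{S\in\mathcal{S}_P}ATE(x).
\]
By definition, $\Latex=\inf_P m(P)$, $\Lpatex=\sup_P m(P)$, $\Upatex=\inf_P M(P)$ and $\Uatex=\sup_P M(P)$.

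\textbf{Step 2: the intersection.} Theorem~\ref{thm:intersect-mult} gives $\bigcap_{P\in\CR}\mathcal{I}_P=[\Lpatex,\Upatex]$ in the nonempty case.

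\textbf{Step 3: the union.} I will show $\bigcup_{P\in\CR}\mathcal{I}_P=[\Latex,\Uatex]$.

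The inclusion $\subseteq$ is immediate, since each $[m(P),M(P)]\subseteq[\Latex,\Uatex]$.

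For $\supseteq$, I will use that the intervals pairwise intersect. Since $[\Lpatex,\Upatex]$ is nonempty, every $\mathcal{I}_P$ contains a common point $c\in[\Lpatex,\Upatex]$. So the union is a union of intervals all containing $c$, and hence is itself an interval. Its infimum is $\Latex$ and its supremum is $\Uatex$. Therefore the union contains the open interval $(\Latex,\Uatex)$.

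The endpoints are attained if the inf and sup are achieved. I expect them to be achieved when $\CR$ is compact and $m,M$ are continuous in $P$, for example by a Berge maximum-theorem argument on the compact set of discrete SCM parameterizations consistent with $G$. Otherwise I will interpret the statement up to closure, consistent with the paper's use of $\sup$ and $\inf$ in \eqref{eq:4quant-mult}.

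\textbf{Step 4: set difference.} Using $\Latex\le\Lpatex\le\Upatex\le\Uatex$, subtracting $[\Lpatex,\Upatex]$ from $[\Latex,\Uatex]$ gives $[\Latex,\Lpatex)\cup(\Upatex,\Uatex]$. This matches the stated $[\Latex,\Lpatex]\cup[\Upatex,\Uatex]$ up to the shared boundary points. I will note that the claim is meant modulo endpoints (measure zero), as the figure suggests.

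\textbf{Main obstacle.} The delicate part is Step 3: showing the union is gap-free and that the extremal values are attained. The connectedness relies on the nonemptiness hypothesis inherited from Theorem~\ref{thm:intersect-mult}. If that fails, I would instead argue connectedness by taking $\CR$ connected (e.g.\ convex) and $m,M$ continuous, so that $P\mapsto\mathcal{I}_P$ sweeps out a connected union, mirroring Theorem~\ref{thm:id-surj}.
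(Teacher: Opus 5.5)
Your proposal is correct and follows the paper's route: it invokes Theorem~\ref{thm:intersect-mult} for the intersection and identifies $\bigcup_{P\in\CR}\mathcal{I}_P$ with $[\Latex,\Uatex]$ via the extreme endpoints $\inf_P m(P)$ and $\sup_P M(P)$. You are more careful than the paper, whose proof silently assumes the union is gap-free and the extrema are attained, and does not address that the set difference is really $[\Latex,\Lpatex)\cup(\Upatex,\Uatex]$; your caveat that the closed-interval identity holds only up to those two boundary points reflects a defect in the statement, not a gap in your argument.
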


Thus, given a confidence set $\CR$, we can access the sample and non-id uncertainty regions by estimating four quantities for ATE as \cref{eq:4quant-mult}.

% \mk{the following needs to be adapted to multiple actions with a small note added about optimizing the difference when there are only two actions}
% In this section, we consider binary actions $x_0, x_1$ for simplicity. Our algorithm can trivially be adapted to multiple actions $x_0,x_1,..., x_n$.

% \summ{Decision Making}

Now that we have connected uncertainty regions with bounds obtained by mini-max optimization in Equation~\ref{eq:4quant-mult}, we can select the best action or suggest how to approach the decision with Theorem~\ref{thm:decision-ate}.

% \begin{theorem}
\begin{restatable}{theorem}{decisionate}
\label{thm:decision-ate}
    For a causal decision-making problem with $ATE(x)$ as the decision metric, let $(\Uatex, \Upatex, \Lpatex, \Latex)$ be estimated from the data. The decision of $X=x$ as the best action (or cannot be the best action) is unambiguous if $\Latex> 0$ (or $ \Uatex < 0$). The decision is ambiguous and cannot be improved with more data if $\Lpatex<0<\Upatex$.
% \end{theorem}
\end{restatable}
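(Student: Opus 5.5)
The plan is to treat the two unambiguous cases and the ambiguous case separately. In each case we work on the high-probability event $P^\ast\in\CR$, where $P^\ast$ is the true observational distribution; this event has probability at least $1-\alpha$. All SCMs consistent with the data and with $G$ then lie in $\bigcup_{P\in\CR}\mathcal{S}_P$. Consequently every attainable value of $ATE(x)$ lies in $\bigcup_{P\in\CR}\mathcal{I}_P(x)$.

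For the first part, we show this union is contained in $[\Latex,\Uatex]$. By definition of $\Latex$ as $\inf_{P\in\CR}\min_{S\in\mathcal{S}_P}ATE(x)$, every $S\in\mathcal{S}_P$ with $P\in\CR$ satisfies $ATE_S(x)\ge\Latex$. The analogous statement holds for $\Uatex$. Corollary~\ref{outer-region-mult} together with Theorem~\ref{thm:intersect-mult} gives the same inclusion directly, since it identifies the union with $[\Latex,\Uatex]$.

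\begin{itemize}
\item If $\Latex>0$, then every compatible SCM has $ATE(x)>0$. So there are no $S_1,S_2$ with $ATE(x)>0$ in one and $ATE(x)<0$ in the other. In the paper's terminology the decision is unambiguous, and $x$ beats the best alternative $x_b$.
\item If $\Uatex<0$, every compatible SCM has $ATE(x)<0$, so $x$ is unambiguously not the best action.
\end{itemize}

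For the ambiguous case, assume $\Lpatex<0<\Upatex$. By Theorem~\ref{thm:intersect-mult}, the intersection $\bigcap_{P\in\CR}\mathcal{I}_P(x)$ equals $[\Lpatex,\Upatex]$, which is nonempty here. So for every $P\in\CR$ we have $[\Lpatex,\Upatex]\subseteq\mathcal{I}_P(x)$. By Theorem~\ref{thm:nonid-surj}, $\mathcal{I}_P(x)$ is the full interval between its min and max. Hence for each $P\in\CR$, including the true $P^\ast$, we can pick some $S_1\in\mathcal{S}_P$ attaining a value in $(0,\Upatex]$ and some $S_2\in\mathcal{S}_P$ attaining a value in $[\Lpatex,0)$. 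Both are compatible with $G$ and with the same observational distribution, so the decision is ambiguous.

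The step needing care is ``cannot be improved with more data''. Additional samples shrink $\CR$ to a subset $\CR'\ni P^\ast$, or in the limit to $\{P^\ast\}$. Since $\CR'\subseteq\CR$, the intersection over $\CR'$ contains the intersection over $\CR$, so it still contains $[\Lpatex,\Upatex]\ni 0$ in its interior. The ambiguity therefore persists for every refined confidence set containing $P^\ast$, including the infinite-sample case $\mathcal{I}_{P^\ast}(x)$. This is the monotonicity argument of Proposition~\ref{prop:irred-samp-ate}, and it is the step I would invoke explicitly.

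The main obstacle is justifying $[\Lpatex,\Upatex]\subseteq\mathcal{I}_P$ with closed endpoints when the sup/inf over $\CR$ is not attained. I would sidestep this by only using the open interval $(\Lpatex,\Upatex)$, which contains both signs. To show the open interval lies in every $\mathcal{I}_P$, note that each $\mathcal{I}_P$ is an interval $[\min_P,\max_P]$ with $\min_P\le\Lpatex$ and $\max_P\ge\Upatex$.
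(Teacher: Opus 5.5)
Your proposal is correct and follows the paper's proof. The unambiguous cases hold because every attainable $ATE(x)$ lies in $\bigcup_{P\in\CR}\mathcal{I}_P\subseteq[\Latex,\Uatex]$. The ambiguous case follows from Theorem~\ref{thm:intersect-mult}, which puts values of both signs in every $\mathcal{I}_P$, in particular in $\mathcal{I}_{P^\ast}$; this is all the paper's ``remains ambiguous with more data'' amounts to.

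Two minor points. You only need $P^\ast\in\CR'$ rather than $\CR'\subseteq\CR$, and the inclusion can fail once the empirical distribution moves. Your closed-endpoint worry is also moot, since $\min\mathcal{I}_P\le\Lpatex$ and $\max\mathcal{I}_P\ge\Upatex$ hold for every $P\in\CR$ directly from the definitions of $\sup$ and $\inf$, whether or not they are attained.
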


\textbf{From ambiguous to unambiguous decision:}
Using Theorem~\ref{thm:decision-ate}, we can reach the final decision through a sequence of three moves.
\textit{1. Return:} if $0<\Latex$ then we return $do(X=x)$ as the best action.
\textit{2. Observe}: if $
\Lpatex
% \overline{L_{ate_x}}
< 0 < 
\Upatex
% \underline{U_{ate_x}}
$, then we observe additional variables for example instrument variables.
Observing additional variables gives us a small causal effect set $\mathcal{I}'_P$, for each $P\in \CR$. Thus, for fixed $\CR$, we get a narrower inner region, $[
\Lpatex,
% \overline{L_{ate_x}},
% \underline{U_{ate_x}}
\Upatex
] =  \bigcap_{P\in\CR}\mathcal{I}_P'$.
Also, the average non-id bound width will reduce since  $1/|\CR|  \sum_{P\in \CR}|\mathcal{I}'_P| \leq 1/|\CR|  \sum_{P\in \CR}|\mathcal{I}_P|$.
% and
\textit{3. Collect}: In all other scenarios, we are unsure about the source of uncertainty (low sample size or non-identifiability) and we collect more samples, with the assumption that obtaining samples with additional variables is more challenging compared to collecting samples of the same set of variables. More samples provides us a smaller confidence set $\CR'(\hat{P})$. 
Thus, according to Theorem~\ref{thm:intersect} and ~\ref{thm:intersect-mult}, we get a wider inner region, $[\Lp_{x}, \Up_{x}] =  \bigcap_{P\in\CR'}\mathcal{I}_P$ 
and according to Corrolary~\ref{outer-region} and ~\ref{outer-region-mult}, a narrower outer band, $[\Lo_x,\Lp_x]\cup[\Up_x,\U_x]=\bigcup_{P\in\CR'} \mathcal{I}_P\setminus \bigcap_{P\in\CR'} \mathcal{I}_P$.
The \textit{collect} and \textit{observe} moves continue until we reach the \textit{return} move and decide the best action.

The decomposition of uncertainty methods we discussed in this section is also applicable to other causal queries such as $\mathbb{E}[Y|do(x)]$. We discuss this and introduce the decision problem in \Cref{sec:sing-act}.

\begin{figure}[t!]
% \vspace{-5mm}
\centering
\begin{minipage}[t]{1\textwidth}
\footnotesize
\begin{algorithm}[H]
\footnotesize
\caption{$\mathrm{Explore }$ $\epsilon$-$\mathrm{ball}$}
\label{alg:decide}
\begin{algorithmic}[1]
\footnotesize

\State \textbf{Input:}
data $\{\mathbf v_k\}_{k=1}^n$, graph $\mathcal{G}$, small interval $\epsilon_s$
\State \textbf{Output:} Decision/Collect/Observe

\State Initialize $\gamma = 0$ for multiple actions (ATE). \Comment{Decision boundary value}

\While{Not Decided}
    \State Candidates $= \mathrm{Construct}$ $\epsilon$-net(data, $\mathbf{V}, \epsilon_s$) \Comment{Candidate joint distributions consistent with input samples}

    \For{$P_{\epsilon}(\mathbf{V}) \in \text{Candidates}$} \Comment{Explore the confidence set}
        \For{$x \in \operatorname{sup}(X)$} \Comment{Optimize each action in the support}
            \State $U_x,L_x \gets \mathrm{RelaxedDCM}(
            P_{\epsilon}(\mathbf{V})
            , \mathbf V,\mathcal G, X=x, \epsilon_s)$ \Comment{Execute Alg~\ref{alg:dcm} to causal effect compute bounds}

            \State $\U_x, \Up_x \gets \max(\U_x, U_x), \min(\Up_x, U_x)$ \Comment{Update upper bounds (maxmax,minmax)}
            \State $\Lo_x, \Lp_x \gets \min(\Lo_x, L_x), \max(\Lp_x, L_x)$ \Comment{Update lower bounds (minmin, maxmin)}
            \label{alg:maxmax}
            \label{alg:minmin}
        \EndFor
    \EndFor

    \If{$\gamma \notin [\Lo_x, \U_x]$} \Comment{Decision is conclusive}
        \State \textbf{Return} $X=x$ as conclusive; $\operatorname{do}(x)$ is better/worse compared to other actions.
        % $P(y)$
        \label{alg:best_act}

    \ElsIf{$\gamma \in [\Lpatex, \Upatex]$} \Comment{Additional samples are futile. Need more variables}
        \State \textbf{Observe} variables
        \label{alg:conc_instru}

    \ElsIf{$\gamma \in [\Lo_x,\Lp_x]\cup[\Up_x,\U_x]$} \Comment{Collect samples to reduce sample uncertainty}
        \State \textbf{Collect} more samples
        \label{alg:collect-smp}
    \EndIf

\EndWhile

\end{algorithmic}
\end{algorithm}
\end{minipage}

% \vspace{-3mm}
\end{figure}

\subsection{DCM for Estimating Sample and Non-ID Uncertainty}
% \red{I need to organize this section.}

In this section, we introduce three algorithms. Algorithm~\ref{alg:decide} calls  Algorithm~\ref{alg:enet} to construct the $\epsilon$-net and then executes Algorithm~\ref{alg:dcm}: RelaxedDCM
for each distribution in the $\epsilon$-net to find the non-id causal effect bound. Finally, Alg.~\ref{alg:decide}: $\mathrm{Explore }$ $\epsilon$-$\mathrm{ball}$ combines these bounds to construct the inner and outer band $\four$ to decide the next move.

\textbf{Algorithm~\ref{alg:dcm}: Optimizing Relaxed-DCM}:
Given a fixed distribution $P_{\epsilon}$, in Alg.~\ref{alg:dcm}, we train deep causal models (see Def.~\ref{def:scm})
% Alg~\ref{alg:dcm}:line~\ref{alg2:ncm-init})
to learn $P_{\epsilon}$ while optimizing the causal effects. 
Unlike existing works that attempt to match $P_{\epsilon}$ exactly, we use a Lagrangian duality-based optimization~\citep{bui2022unified,gao2023distributionally} to keep the observational distribution entailed by the DCM $P_{\theta}$, within a small region around input distribution, $P_{\epsilon}$ (Alg~\ref{alg:dcm}: line 7). 
The dual update parameter successfully finds the right amount of regularization and maintains the desired distributional distance (Alg~\ref{alg:dcm}: line 9). We add the causal effect magnitude with the distributional loss terms and optimize the parameters to maximize or minimize the effects(Alg~\ref{alg:dcm}: lines 8, 10).
% \orng{jan might need a little modification.}
% todo:\red{Need proper discussion about the loss function.}

\textbf{Algorithm~\ref{alg:enet}: 
Construct $\epsilon$-net} :
% From the given set of samples, we can construct a confidence region around the empirical distribution $\hat{P}(\mbf{v})$ and any distribution in this region will be consistent with the samples.
%
% \blue{[how we construct confidence interavl]}
Since the given dataset might be sampled from any distribution in the confidence set $\CR$ of the empirical distribution $\hat{P}$, we need to solve the max-max, max-min, min-max and min-min problems (Equation~\ref{eq:4quant}, \ref{eq:4quant-mult}) by searching over $\CR$.  
To obtain the confidence set in practice,  we factorize the empirical distribution $\hat{P}(\mbf{v})$ into conditional distributions, i.e., $\hat{P}(\mbf{v}) = \Pi_{v\in \mbf{v}} \hat{P}(v_i|v^{\pi_i-1})$ and 
construct non-asymptotic confidence interval for variable with $m$ total number of configurations, from $n$ samples w/ error probability $\alpha=0.05m$,  using Hoeffding inequality~\citep{Hoeffding:1963} and Bonferroni correction, i.e.,  $\hat{P}_n(v_i=1|v^{\pi_i-1}) \pm  \sqrt{\frac{\ln(2/\alpha)}{2n}}$. See Appendix~\ref{derivation} for derivation.
%
% create a confidence intervals (w/ error probability $\alpha=0.05$) for each of them .
% We construct non-asymptotic confidence interval  from $n$ samples, using Hoeffding inequality~\citep{Hoeffding:1963} for each conditional distribution: $\hat{P}_n(v_i=1|v^{\pi_i-1}) \pm  \sqrt{\frac{\ln(2/\alpha)}{2n}}$ as shown in Appendix~\ref{derivation}.
% as the probability for that corresponding conditional distribution 
% a valid combination of them will provide us
If we pick one point from each interval to form a joint distribution and optimize the causal effect, we obtain the maximum and minimum effect for that specific distribution. To obtain the four quantities in Equation~\ref{eq:4quant}, \ref{eq:4quant-mult}, we have to search over all constructed intervals and optimize the causal effect while matching each formed distribution.

% We can match any distribution in the confidence region while optimizing the causal effect. Thus, we set the $\epsilon= \sqrt{\frac{\ln(2/\alpha)}{2n}}$ and search over all distributions to find the minimum or maximum causal effects (Alg~\ref{alg:decide} lines:~\ref{alg:maxmax},\ref{alg:minmin}).
% apply $\epsilon$-net on
Note that solving a max-min and min-max problem by searching over these confidence intervals is not convex in general. %As a result, gradient descent in a deep causal model will not lead us to the lowest local maxima or highest local minima. 
To address this scenario, we resort to a heuristic approach motivated from $\epsilon$-net~\citep{gonzalez1985clustering}. To implement $\epsilon$-net, we cover the metric space of observational distributions $M$ consistent with the input dataset, with open balls having centers in $X\subset M$ such that every distribution in $M$ is within $\epsilon$ distance of at least one distribution in $X$.
For our purpose, we cover each confidence interval with smaller intervals of width $\epsilon_s$. If $\epsilon$ is the interval width, this would give us $\ceil{\epsilon/\epsilon_s}$ number of centroids.
We {uniformly} pick centroid for each conditional and form a joint distribution. 
After optimizing the DCM for these distributions, we take the minimum (or maximum) of all optimized values to approximate the minmax (or maxmin).

% \textbf{Exploring the Confidence Region:}
% how do we search the epsilon ball?
% We create a confidence region for the joint distribution by constructing a confidence interval for each conditional distribution $P(V_i|\mbf{V}\setminus V_i)$ (as shown in Equation~\ref{eq:cross-prod}).
% If we could get training data from each distribution in the confidence region and maximize the causal effect for that dataset, and finally take the minimum of all values, we would have obtained the minmax (similarly, the maxmin), which is a saddle point.

\textbf{Algorithm~\ref{alg:decide}: {Explore } $\epsilon$-{ball} }:
After collecting $[L_x,U_x]$ bounds from each distribution in the confidence interval, we follow Equation~\ref{eq:4quant} or~\ref{eq:4quant-mult} to obtain the inner $\inner$ region and the outer $\outr$ band. If the bounds are completely separated for ${do(}x_0)$ and ${do(}x_1)$, we return the best action(Alg~\ref{alg:decide},line:~\ref{alg:best_act}). If the inner region intersects, we conclude with high probability that additional variables must be observed (Alg~\ref{alg:decide},line:~\ref{alg:conc_instru}).
In this paper, we suggest observing instrument variables $I$ (s.t. $I\rightarrow X$) since we know that an instrument can reduce the non-id region~\citep{neuberg2003causality,balke1996universal}. Whether other variables can reduce the causal effect bounds is an open area to explore. In all other cases, we conclude collecting more samples (Alg~\ref{alg:decide},line:~\ref{alg:collect-smp}). 
The conclusion is with high probability as it depends on how correctly we approximated the $\epsilon$-net and explored it and how well we optimized the neural networks. See Appendix~\Cref{sec:alg_detail} for details.

% \red{About P(ydxo)}
% Additional details about the algorithms are included in 

% \textbf{$\epsilon$-TVD Ball:}
% In the binary case, confidence intervals are defined over single conditional probabilities, 
% which simplifies the exploration within the $\epsilon$-ball. 
% For a non-binary variable $\mathbf{X}$ with $|\mathcal{X}|$ states, there will be $|\mathcal{X}| - 1$ confidence intervals due to the simplex constraint. 
% Constructing valid distributions from these confidence intervals becomes a difficult constrained optimization problem.

% We handle such non-binary case is to explore the distributions within 
%  total variation distance around the empirical distribution. 

% \textbf{Positivity Violation:} 
% \red{when there is positivity violation, the corresponding confidence interval will be [0,1], i.e., any point in the interval can be the distribution. As results, our algorithm can deal with positivity violation.}

% -----

\section{Experimental Analysis}
%   
% In this section, we discuss the empirical evaluation of our algorithm on two synthetic and one real-world dataset and compare our performance with multiple baselines. We show the value of our approach by illustrating how it can be utilized sequentially for decision-making. We also demonstrate that the existing baselines may incorrectly declare the best action, whereas we can not only conclude that choosing the best action is impossible, but also suggest whether we should collect more samples or observe more variables. 
%
%
In this section, we evaluate our algorithm in three simulated setups and on one real-world Parent labor supply dataset. 
We demonstrate how our algorithm performs with access to more samples and  observed variables and how it adapts to large graphs.  The empirical results strengthens the practical utility of our approach in sequentially for decision-making  compared to existing baselines. Finally, we analyze the behavior of our algorithm for mis-specified causal graphs (Section~\ref{sec:mis-specify}). In all setups, we obtain $\four$ by choosing $\sim100$ distributions from the epsilon-net and generate 5000 samples to train the neural nets on each distribution.  Codes are available at \href{https://github.com/Musfiqshohan/UA-DCM}{github.com/musfiqshohan/ua-dcm}

% We recorded the lower and upper bounds which gives us the $\four$ for ${do(}X=x)$ or $ATE(x)$.

% \textbf{Setup:}
% In all setups, we arrange the neural networks according to the given causal graph, such as a bow graph or an IV graph: bow with an instrument (Figure~\ref{fig:all-graphs}). We create confidence intervals for each conditional distributions from given samples and build the epsilon net. We sampled $> 150$ distributions from the epsilon-net and generated 5000 samples to train the neural nets on each distribution. We recorded the lower and upper bounds which gives us the $\four$ for ${do(}X=x)$ or $ATE(x)$.

\subsection{Performance on Simulated Datasets}
\label{sec:scm1}
\label{sec:scm2}

% \begin{figure}[t!]
% \vspace{-3mm}
% \centering
% \begin{subfigure}{0.49\textwidth}
% \centering
%     \includegraphics[width=1\linewidth]{figures/iv_multiple/ate_bounds_comparison_updated.png}
% \caption{Bounds comparison  for multiple IV setup.}
% \label{tab:iv_multiple}
% \end{subfigure}
% %
% \begin{subfigure}{0.49\textwidth}
% \centering    \includegraphics[width=1\linewidth]{figures/two_obs_cnf/ate_bounds_comparison_cnf.png}
% \caption{Bounds comparison for observed confounders.}
% \label{tab:iv_two_cnf}
% \end{subfigure}
% % \caption{Perfo}
% \vspace{-3mm}
% \end{figure}
%
\begin{figure}[t!]
\hspace{-5mm}
% \vspace{-6mm}
\centering
\begin{minipage}{0.50\textwidth}
\centering
\includegraphics[width=1\linewidth]{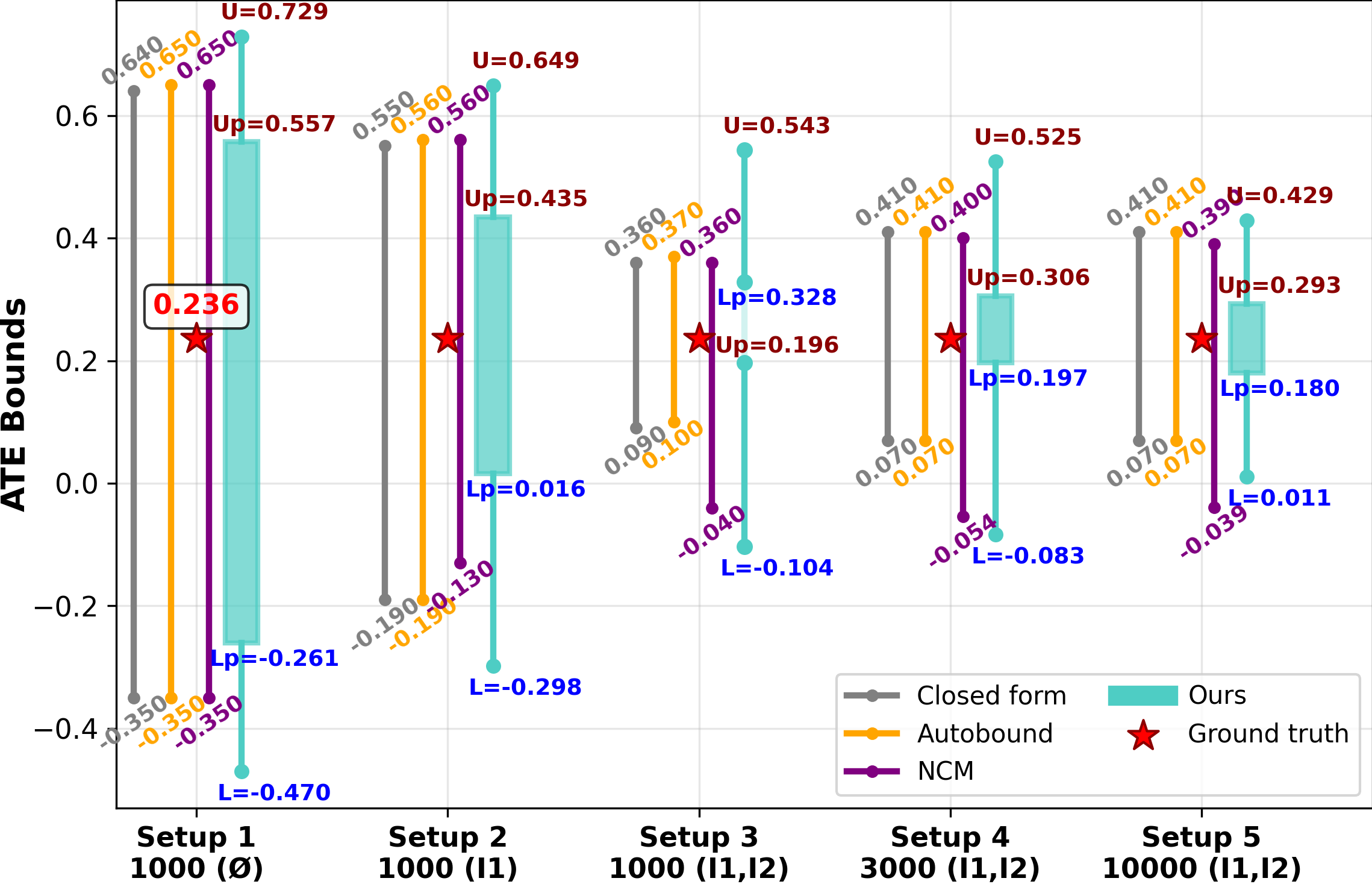}
\captionof{figure}{Bounds comparison for the multiple IV.}
\label{tab:iv_multiple}
\end{minipage}
\hfill
\begin{minipage}{0.48\textwidth}
\centering
\includegraphics[width=1\linewidth]{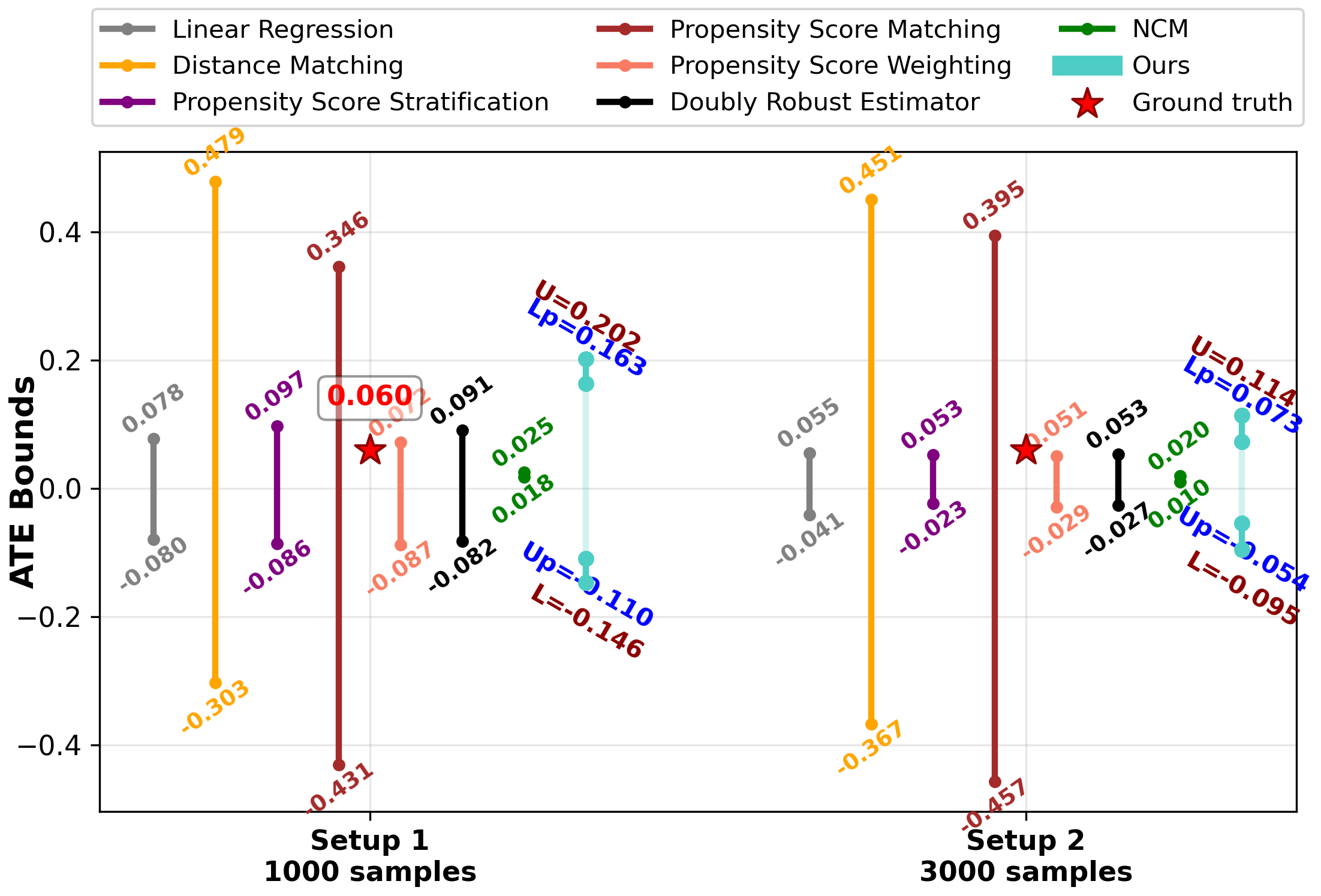}
\captionof{figure}{Bounds comparison in the ID setting.}
\label{tab:iv_two_cnf}
\end{minipage}
% \vspace{-3mm}
\end{figure}
%
% \textbf{Results:}
\textbf{SCM 1 {\myalgo} correctness \& utility):}
In Figure~\ref{tab:iv_multiple}, we show the next moves and consequences of our algorithm for different sample sizes.
In setup 1, for $N=1000$ samples of $D[X,Y]$, we observed that all three baselines, i.e., closedform, autobound and NCM output similar ATE bounds: $[-0.35,0.64]$. They estimate $P(X,Y)$ from data and use it as the true distribution to calculate the bounds.  These baselines only indicate that the true ATE lies within a large bound but provide no additional information on how to reduce their bounds to make the best action decision. On the other hand, in \myalgo, since $\gamma\in [\Lpate, \Upate]$, i.e., $0\in [-0.261, 0.557]$, we conclude that we must $\textit{observe}$ additional variables. 
Additional samples will not reduce the intersection. 

We illustrate the consequence of observing additional variables in setup 2 where we consider an additional instrument variable. We observe that for 1000 samples of $D[I_1, X,Y]$, all baselines bounds reduce to $[-0.19, 0.56]$. For \myalgo, both the inner region and the outer band shrink, especially the inner bound from $[-0.261, 0.557]\rightarrow [0.016, 0.435]$, indicating the reduction in the non-id region. However, as the solid region $[\Lpate, \Upate]$ does not contain $0$, we do not know if the current uncertainty is due to unobserved confounders or low sample size. Thus, based on the feasibility of the setup, we have to select the next move. For our experiment, we conclude,  \textit{observing} another instrument  $I_2$.

In setup 3, we show bounds obtained from all baselines and our algorithm \myalgo for 1000 samples of the dataset $D[I_1, I_2, X,Y]$, and in setup 4, we show results when we collect 3000 samples. In setup 3, the bounds reduce significantly while changing slightly in setup 4. \myalgo says that the outer band is $[-0.083, 0.197]$, which is very close to zero. Thus, collecting more samples will reveal that $do(X=1)$ is a better action compared to $do(X=0)$. Finally, we observe in setup 5 that with 10k samples and $90\%$ confidence, the minmin value is completely above $0$. Now, we can decide safely that $do(X=1)$ is the best action for all distributions consistent with the finite input samples.
We perform additional analysis in Appendix~\ref{sec:alg_detail} with various confidence levels and various sample sizes.

\textbf{SCM 2:}
We consider only observed confounders %and no unobserved confounder
(Figure~\ref{fig:two-obs-cnf}) with no structural uncertainty. 
We compare with 6 baselines that are designed to provide point-wise estimations.
To be fair, we perform bootstrapping and obtain bounds from them.
 We also compare with NCM which outputs ATE bound.
 
\textbf{Results:}
We observe that for 1000 samples, distance matching and propensity score matching output a very wide bound, ex: $[-0.457,0.395]$. Other baselines provide tighter bounds and
NCM outputs $[0.018, 0.025]$.
All of these methods estimate a probability table from the input $1000$ samples and estimate the causal effect based on that.
Even with bootstrapping, some of the bounds do not contain the true ATE$=0.06$.
% without considering the possibility that their estimated distribution might not be the true distribution.
% As a result, all baselines predict the ATE around 0.02, and even the bound estimated by NCM and autobound does not contain the ground truth ATE of 0.06. 
As most of the baselines contain $0$ within it, they suggest that no action can be decided as the better one. No other insight about uncertainty reduction can be extracted from them.
% Making a decision based on these predictions might be harmful in sensitive scenarios. 
%
On the other hand, \myalgo provides an outer bound $[-0.14, 0.20]$ for 1000 samples and an outer bound $[-0.09, 0.11]$ for 3000 samples. This indicates that the ATE obtained from the true distribution will be contained in this bound. Also, note that there is no gap between $\Upate$ and $\Lpate$ (Up,Lp in the figure), which implies that there is no non-id uncertainty here, hinting  us to collect more samples.

% \begin{table}[t!]
% \centering
% \small
% \setlength{\tabcolsep}{3pt}
% \renewcommand{\arraystretch}{1.05}
% \caption{Scalability results for larger graphs.}
% \label{tab:scalability-results}
% \resizebox{\linewidth}{!}{%
% \begin{tabular}{lccccccc}
% \toprule
% \textbf{Graph} &
% \textbf{Sample} &
% \textbf{True ATE} &
% \textbf{minmin} &
% \textbf{maxmin} &
% \textbf{minmax} &
% \textbf{maxmax} &
% \textbf{Decision} \\
% \midrule
% $8$-var chain 
% & $10k$ 
% & $0.52$ 
% & $0.4139$ 
% & $0.5510$ 
% & $0.4174$ 
% & $0.5570$ 
% & $\mathrm{do}(X=1)$ \\

% $12$-obs 
% & $10k$ 
% & $0.3045$ 
% & $0.1172$ 
% & $0.2665$ 
% & $0.3021$ 
% & $0.4500$ 
% & Favor treatment \red{change}\\

% $18$-obs 
% & $10k$ 
% & $0.1450$ 
% & $-0.0102$ 
% & $0.0887$ 
% & $0.1542$ 
% & $0.2391$ 
% & Inconclusive \\
% \bottomrule
% \end{tabular}%
% }
% \end{table}

\textbf{Scaling Experiment: }
 % The overall complexity of the naive implementation of our algorithm is exponential  in the number of variables $|V|$ and polynomial in the domain size $\mathcal{X}$ with degree $|V|$. Please see Section~\ref{sec:comp-anal} for complexity details. 
 % The computation ($\mathcal{X}^{|\mathbf{V}|}$) is required since 
 % the method executes Algorithm~\ref{alg:dcm} for each candidate distribution by exploring the $\epsilon$-ball. 
 To scale our algorithm for large graphs, we utilize two practical solutions (details in Appendix~\ref{fig:scale}). 
 1. We factorize the joint distribution according to the causal graph and match the factorized conditional distributions instead of the full joint.
 2. We ignore low-probability joint combinations and redistribute their probability mass.
 This reduces the slicing of the alpha value to a large number of combinations that do not meaningfully change the causal effect.

\textbf{Observation:}
In Figure~\ref{fig:large-graph}, we show the algorithm's performance on graphs with $8$, $12$, and $18$ variables, each using $10k$ samples.
For the $8$-variable setup, pointwise identification of the causal effect collapses the inner region; thus, there is no non-identifiable region. 
For each scenario of $8$-vars, $12$-vars and $18$-vars, our bounds enclosed the true ATE values and the outer bound was completely above zero, implying that $do(X=1)$ is the best action for these problem instances. 
 Thus, our algorithm  scales to a relatively larg number of variables and offers bounds that enclose the true ATE.

\subsection{Real Data Experiments: Parents' Labor Supply}

% \subsection{Real Data Experiments: Parents' Labor Supply}
% \label{sec:real_exp}
% https://www.andrewproctor.net/assets/Assignment-3.pdf
% http://piketty.pse.ens.fr/files/AngristEvans1998.pdf
% https://dataverse.harvard.edu/dataset.xhtml?persistentId=hdl:1902.1/11288

% 

\label{sec:appex-lbr}
% \textbf{Setup:} 
% We apply our algorithm on the  Parents' Labor Supply dataset\citep{angrist1996children}. 
% This dataset contains 92k family records of number of children, their age, sex and 
% income and working hours of mother and father in that family.
% The goal of this experiment is to evaluate the  causal effect of having more than two children on
% mother's labor supply. Here the treatment is if the family has more than two children and the outcome is whether the mother is working or not.
% We follow~\citep{angrist1996children}
% and consider the sex of the first two children as an instrument variable since first two children of same sex significantly motivates people to have the third child.
%  The instrument variable will help us to show how the causal effect bound changes when we including more variables in the dataset.

\textbf{Setup:} 
% angrist_evans_2009_replication,angrist1996children
We apply our algorithm on the  Parents' Labor Supply dataset\citep{angrist1996children} which contains contains 92k family records of their demographic and work details. 
% of number of children, their age, sex and 
% income and working hours of mother and father in that family.
Here, the goal is to evaluate the  causal effect of having more than two children (treatment $X$:yes/no) on
mother's labor supply (outcome $Y$: working or not) with respect to the population, i.e., $P(y=1|\Do(x=1))$ vs $P(y=1)$.
We follow~\citep{angrist1996children}
and consider whether first two children are of same sex or not as an instrument variable $I$ since this significantly motivates people to have the third child. Please see~\ref{appex:parent-labr} for further justification.
 We illustrate our algorithm performance on 1000, 2000 and 92k (full dataset) with and without an instrument.

\begin{figure}[t!]
\centering
% \vspace{-5mm}
\hspace{-10mm}
\begin{subfigure}{0.28\textwidth}
\includegraphics[width=1\linewidth]{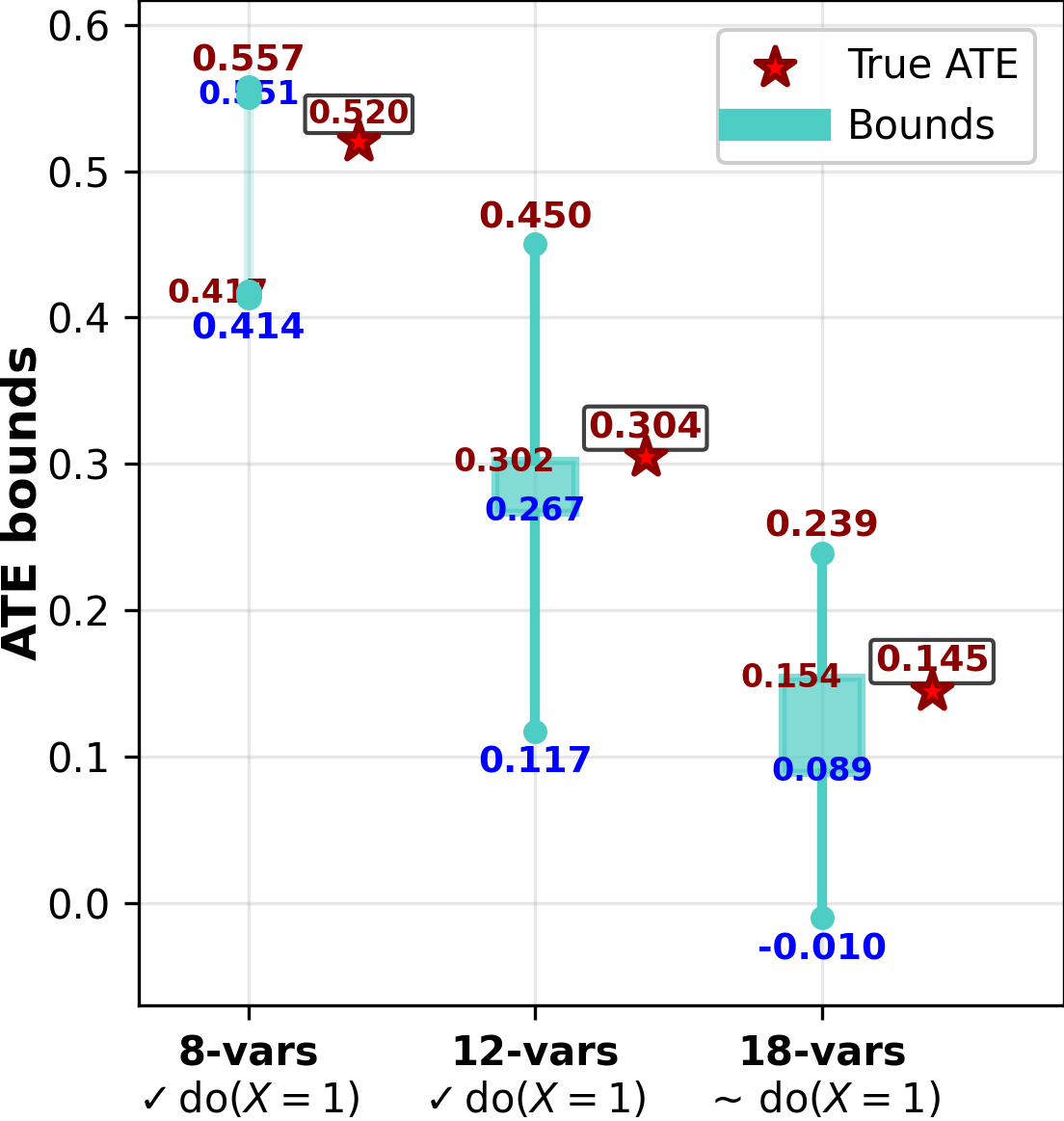}
\caption{Performance on large graphs}
\label{fig:large-graph}
\end{subfigure}
\begin{subfigure}{0.46\textwidth}
\centering
\includegraphics[width=1\linewidth]{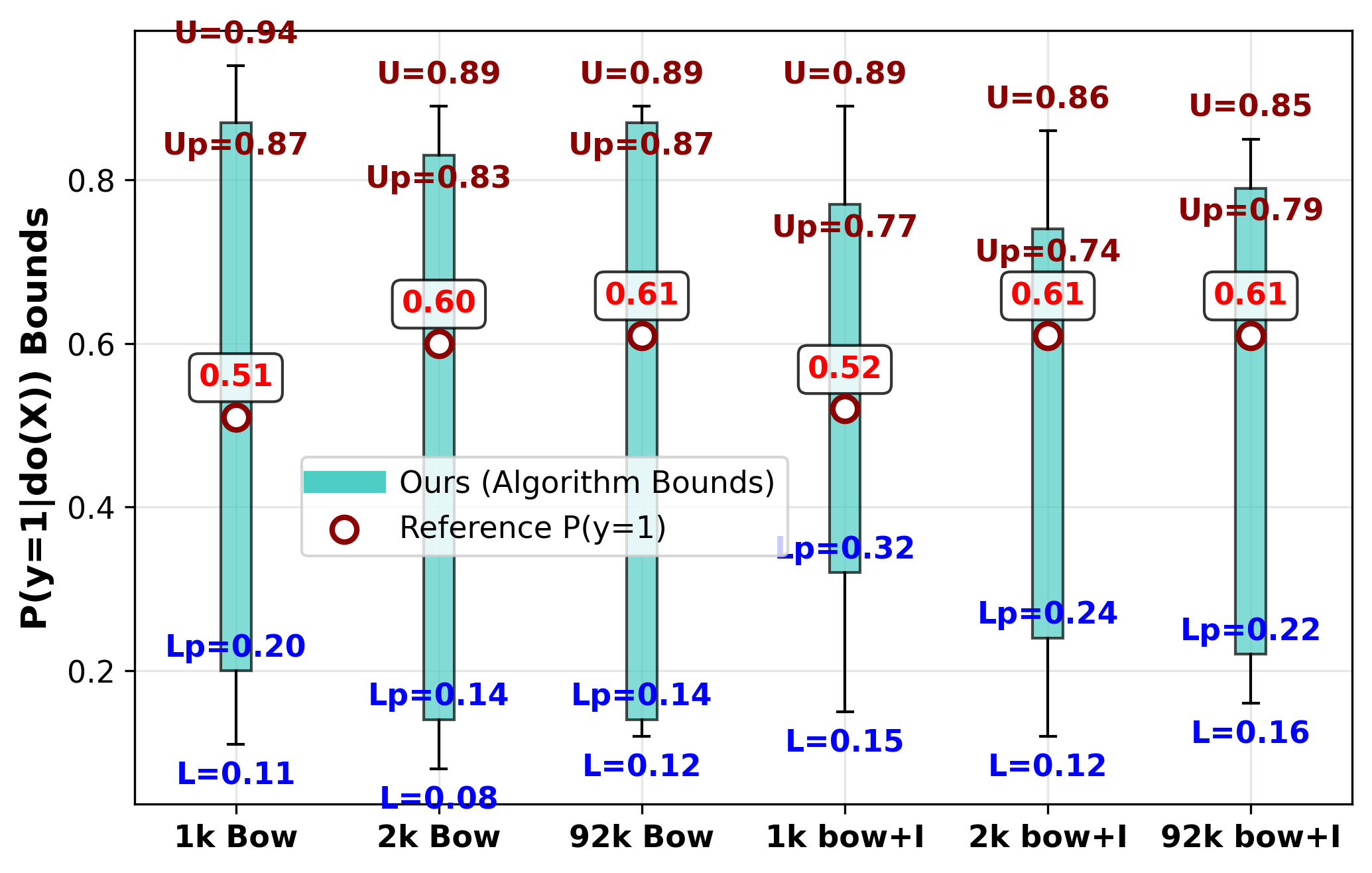}
\caption{$do(x)$ bounds on Parents Labor Supply (PLS).}
\label{tab:lbr_sup}
\end{subfigure}
\begin{subfigure}{0.25\textwidth}
\includegraphics[width=1\linewidth]{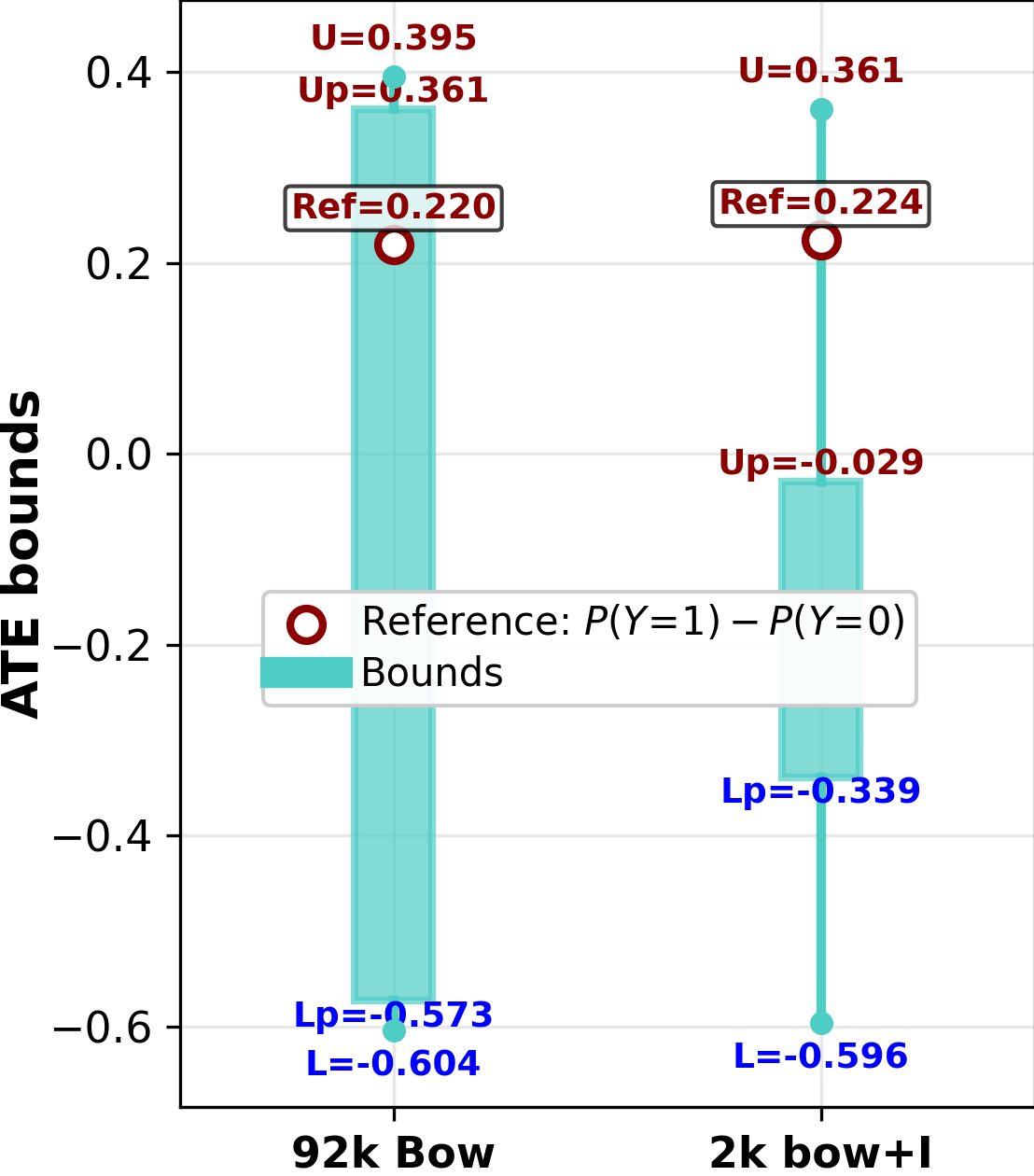}
\caption{ATE bounds for PLS .}
\label{fig:pls-ate}
\end{subfigure}
\caption{\myalgo performance on synthetic and real-world datasets.}
\label{fig:scal}
% \vspace{-5mm}
\end{figure}

\textbf{Results}: In Fig~\ref{tab:lbr_sup}, as we increase sample size from 1k to 92k, the inner region expands as $[0.20, 0.87] \rightarrow [0.14, 0.87]$, and the
total width of the outer band shrinks as $0.16 \rightarrow 0.04$. However,
$P(y = 1)$  is still contained in the non-id inner region. Thus, we conclude that collecting more samples will
not help, and we include the sex of the first two children
as an instrument variable. For $1000$ samples, as we move
from no instrument to one instrument (1k bow $\rightarrow $1k bow+I), the non-id region is
reduced as $[0.20, 0.87] \rightarrow [0.32, 0.77]$. Although the bound
does not completely exclude $P(y)$, it shows we need to keep
observing variables to reach a conclusion. 
 We compare the \textit{ATE} bounds for the $92k$ bow and $2k$ bow + I setups in Figure~\ref{fig:pls-ate}. The $92k$ samples eliminate almost all sample uncertainty, leaving only the non-identification region. In contrast, in the $2k$ bow + I setup, the non-id region $[-0.57, 0.36]$ decreases significantly  to $[-0.33, 0.22]$ after including the instrumental variable. The outer band is extended due to limited (2k) samples. Our results are consistent with~\citet{angrist1996children}, who applied Two-Stage Least Squares and found a negative impact: $-0.121$ on "worked for pay", $-5.68$ on "weeks worked per year", etc.  Our ATE bounds also suggest such a negative effect.
Please see more details in Appendix~\ref{appex:parent-labr}.

% \textit{Consequence of collecting samples:}
%  For the Bow graph, width of the inner region (cyan) of ${do(}X=1)$, $\inner$, expands as $0.67 \rightarrow 0.69 \rightarrow 0.73$, and the outer band $\outr$ shrinks as $0.16 \rightarrow 0.12 \rightarrow 0.04$.
% For the Bow+instrument graph, the inner region of ${do}(X=1)$, $\inner$, expands as $0.45 \rightarrow 0.50 \rightarrow 0.57$, and the outer band $\outr$ shrinks as $0.29 \rightarrow 0.24 \rightarrow 0.12$.  
% This is consistent with the consequences of collecting more samples, as discussed in Section~\ref{sec:decide}.
% % 
% \\
% \textit{Consequence of observing instrument:}
% The consequence becomes apparent when we look at the change in the \textit{ Non-id Region}. For ${do}(X=1)$ with 1000 samples, this region width reduces from $0.67 \rightarrow 0.45$ when moving from the Bow graph to the Bow+$I$ graph.
% Similarly, for 92k samples, the non-id region reduces from $0.73 \rightarrow 0.57$ when moving from the Bow graph to the Bow+$I$ graph.
% This implies that, the non-id region for each distribution has reduced due to observing the instrument variable. \red{Compare P(y|\Do(x)).}

% -----

\section{Conclusion}
We proposed a novel decomposition of epistemic uncertainty in causal effect estimation from finite data. We show that our decomposition can be used to conclude collecting more samples will not help with deciding the best action. We develop a neural net-basd approach to approximate this decomposition in practice.

\clearpage
%We proposed a neural network-based algorithmic framework for decomposing different types of epistemic uncertainty for causal decision-making. Our algorithm can be used to evaluate when collecting more observational data does not help in decision-making and one needs to resort to other approaches, such as collecting latent confounding variables or conducting a randomized trial.

%%%%%%%%%%%%%%%%%%%%%%%%%%%%%%%%%%%%%%%%%%%%%%%%%%%%%%%%%%%%

% \section*{Acknowledgements}
% This research has been supported in part by NSF CAREER 2239375, IIS 2348717, Amazon Research Award, Adobe Research and Intuit AI Research. 

\bibliographystyle{plainnat}
\bibliography{ref}

\clearpage

\appendix

\section{Limitations and Future Works}
\label{sec:limit}
 A limitation of our methodology is the utilization of the epsilon-net to solve the challenging min-max and max-min problems required for performing this decomposition. Furthermore, neural network optimization may be imperfect in practice, giving smaller bounds than the true ones. In our future work, we aim to improve the min-max optimization and reduce the complexity of the $\epsilon$-net.

\section{Broader Impact}
\label{sec:brod-imp}
In this paper, we propose an algorithm that can determine
when collecting more samples will not help determine the
best action. This can guide practitioners to realize when
increasing the number of participants in the current setup,
i.e., collecting additional data samples is not being helpful
anymore. Our algorithm will advice to collect more variables or lean towards a randomized study for best action
identification. As a result, our algorithm can be used as a
too for strategic planning of randomized control study and
reduce cost for experimentation. Additionally, our work
can advance the field of AI by improving machine learning methods and make them fair by providing the causal
analysis of different actions.

\section{Related Work}

%\subsection{Partial Identification of Causal Effect}
\textbf{Partial Identification of Causal Effect.} When the causal effect is not identifiable from observational data, one can identify a set of causal effects from the SCMs that are compatible with the observational data. For the discrete variables, \citet{balke1994counterfactual, balke2022probabilistic} proposed a method using response variables to partition the latent variable into finite states and estimate the bounds of causal queries. \citet{tian2000probabilities} used the response variable to get bounds of counterfactual queries from observational and interventional data. Many works focus on getting narrower bounds by utilizing additional information from the graph. \citet{balke1997bounds} obtained tighter bounds with instrumental variables. \citet{duarte2024automated} proposed an automated method to derive bounds for causal queries in arbitrary graphs. 

There are works that are closely related to ours utilizing a posterior sampling approach for partial identification. \cite{zhang2021non} introduces canonical nonparametric SCMs for bounding causal effects in arbitrary causal diagrams with categorical observed variables. \cite{zhang2022partial} extend this approach to counterfactual queries using observational and experimental data, and provide MCMC-based credible intervals for approximating partial identification bounds. These works are complementary to ours: they provide finite sample guarantees for the MCMC approximation of credible intervals conditioned on the observed data, whereas our framework studies the decision problem induced by the finite sample uncertainty. This line of work also suggests an alternative but related decision-making problem: given a limited sample budget, determine whether the available data are sufficient to reach an unambiguous decision or whether additional samples within the budget are unlikely to resolve the ambiguity. Developing such a criterion requires determining the number of samples needed for the estimation to reach a given bound width and propagating finite-sample uncertainty through partial identification bounds. This might be an interesting future direction connecting the posterior sampling method with the decision problem.

Another line of work tries to obtain narrower bounds of causal effect utilizing information from the unobserved confounders. \citet{li2022bounds} proposed using nonlinear programming to get bounds when the latent confounder is partially observed. \citet{jiang2023approximate} proposed a convex programming formulation to get tighter bounds of the causal effect when the entropy of the latent confounder is known. \citet{li2023epsilon} derived bounds in closed form when the marginal distribution of the latent confounders is known. \citet{zhang2024tight} proposed a method that finds the tight identifiable region of causal effect given the marginal distribution of latent confounders.

% \citep{marmarelis2024ensembled} proposed an ensemble method with the existing sensitivity model to obtain tighter bounds from observational data.

%\subsection{Causal Inference with Neural Models}
\textbf{Causal Inference with Neural Models}. 
Many researchers have explored the use of neural networks in causal inference. \citet{kocaoglu2018causalgan} introduced CausalGAN that produces interventional images with a GAN trained with image data. \citet{yang2021causalvae} proposed CausalVAE to learn the causally related representations from the data. \citet{xia2021causal, xia2022neural} proposed neural causal models that find the causal and counterfactual queries by maximizing and minimizing the query under the semi-Markovian setting. \citet{chao2023modeling} proposed a diffusion-based approach to model the causal and counterfactual queries. \citet{rahman2024modular} proposed a modularized training algorithm to train a causal generative model with latent confounders for high-dimensional variables. \citet{rahman2024conditional} proposed a recursive algorithm that uses a set of conditional generative models to sample from any identifiable interventional distributions.

% \citep{frauen2023neural} 

%\subsection{Uncertainty Quantification}
\textbf{Uncertainty Quantification}. 
Uncertainty quantification is a crucial problem in deep learning~\citep{gal2016uncertainty}, especially in applications where reliable prediction is essential. \citet{hora1996aleatory} introduced the notion of aleatoric and epistemic to distinguish uncertainties for the risk models in applications such as decision making. \citet{abellan2006disaggregated} defined the total uncertainty based on the Shannon entropy of a credal set. \citet{sensoy2018evidential} quantified the epistemic uncertainty of the classification model with the Dirichlet prior. In general, the uncertainty is assumed to have an additive representation between the total uncertainty (TU), epistemic uncertainty (EU), and aleatoric uncertainty (AU), and it is generally hard to disentangle the EU and AU~\citep{hullermeier2021aleatoric, hullermeier2022quantification}. \citet{melnychuk2024quantifying} proposed a method for quantifying aleatoric uncertainty in individualized treatment effects by deriving sharp bounds on the conditional distributions of the treatment effect. \citet{marmarelis2024ensembled} introduced an approach for constructing causal effect intervals with hidden confounding. \citet{jiang2024conditional} proposed using entropy as a sensitivity parameter to obtain bounds in the IV graph with assumption violations.

\textbf{Decision Making}. Decision-making is highly related to the bandit problem, which aims to find the action that optimizes the rewards~\citep{cho2024reward,huang2023bandit,ji2025multi}. In the context of causal inference, although the causal effect is closely related to the decision-making problem, \citet{fernandez2022causal} suggested that they are not exactly equivalent, where the main distinction is their estimand of interest. \citet{wei2023approximate} proposed an algorithm that utilizes the causal graph to balance exploration and exploitation in sequential decision-making problems with latent confounders. \citet{kallus2018confounding} studied the decision-making problem with observational data with the presence of latent confounders. They showed that with a well-specified uncertainty set, the proposed robust policy learning is no worse than the baseline. \citet{jesson2020identifying} introduced a method to assess the uncertainty in the neural network model, and demonstrated that the proposed method is able to handle the positivity violation. Sensitivity analysis is also closely related to the uncertainty quantification in causal inference. \citet{frauen2023neural} proposed a neural framework for sensitivity analysis that is compatible with a large class of existing sensitivity models. \citet{jesson2020identifying} studied the decision-making problem under the non-overlapping situation. \citet{frauen2025treatment} propose a optimal decision-making approach based on two-stage CATE estimators.

\section{Theoretical Analysis}
\label{thr-anl}

% \begin{definition}[Inner bound and outer bound]
    
% \end{definition}

\begin{assumption}
    We assume the variables are discrete.
\end{assumption}
In the binary case, confidence intervals are defined over single conditional probabilities, which simplifies the exploration within the $\epsilon$-ball. For a non-binary variable $X$ with $|\mathcal{X}|$ states, there will be $|\mathcal{X}| - 1$ confidence intervals due to the simplex constraint. Constructing valid distributions from these confidence intervals becomes a difficult constrained optimization problem.

% One approach to handle the non-binary case is to explore the distributions within $\epsilon$ total variation distance around the empirical distribution. In such a case, we do not need to estimate the confidence intervals, but there is no guarantee that we cover all possible distributions in our search. We will address this direction in future work.

\begin{assumption}
    The structural causal model is Semi-Markovian.
\end{assumption}
\begin{assumption}
    We have access to the acyclic directed mixed graph (ADMG).
\end{assumption}

\subsection{Decision for Multiple Actions}

\begin{lemma}
    Let $L_i, U_i$ be the upper bound and lower bound of $\text{ATE}(x_0, x_i)$. Action $x_0$ is the best action if and only if $L_i \geq 0 \; \forall i\neq1$.
\end{lemma}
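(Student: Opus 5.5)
The plan is to read the lemma as a statement about pairwise contrasts. Set $\text{ATE}(x_0,x_i)=\mathbb{E}[Y\mid\Do(x_0)]-\mathbb{E}[Y\mid\Do(x_i)]$, and let $[L_i,U_i]$ be the set of values this contrast takes over all SCMs compatible with the data, i.e.\ $S\in\mathcal{S}_P$, $S\models G$, $P\in\CR$. I take ``best action'' to mean unambiguously best: $\mathbb{E}[Y\mid\Do(x_0)]\ge\mathbb{E}[Y\mid\Do(x_i)]$ for every $i\neq 0$ in every compatible SCM. I read the quantifier ``$\forall i\neq1$'' as the evident typo for $\forall i\neq 0$, since the contrast with $i=0$ is trivially zero. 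By the surjectivity results (Theorem~\ref{thm:id-surj} and Theorem~\ref{thm:nonid-surj}), the set of attainable contrast values is exactly the interval $[L_i,U_i]$, and $L_i$ is its minimum, attained by some compatible SCM. This is the fact both directions rely on.

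For ($\Leftarrow$), assume $L_i\ge 0$ for all $i\neq 0$. Fix any compatible SCM $S$. For each $i$, $\text{ATE}_S(x_0,x_i)\ge L_i\ge 0$, so $\mathbb{E}_S[Y\mid\Do(x_0)]\ge\mathbb{E}_S[Y\mid\Do(x_i)]$. Taking the max over $i$ gives
\[
ATE(x_0)=\mathbb{E}_S[Y\mid\Do(x_0)]-\max_{x_b\neq x_0}\mathbb{E}_S[Y\mid\Do(x_b)]\ge 0
\]
in every compatible $S$. Hence $x_0$ is best in all of them.

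For ($\Rightarrow$), argue by contraposition. Suppose $L_j<0$ for some $j\neq 0$. Since $L_j$ is attained, there is a compatible SCM $S^\ast$ with $\mathbb{E}_{S^\ast}[Y\mid\Do(x_0)]<\mathbb{E}_{S^\ast}[Y\mid\Do(x_j)]$. Then $ATE(x_0)<0$ in $S^\ast$, so $x_0$ is not best in $S^\ast$ and is not unambiguously best. The key identity behind both directions is that $\min_i \text{ATE}(x_0,x_i)=ATE(x_0)$ within a single SCM. It follows that $ATE(x_0)\ge 0$ for all $S$ iff $\text{ATE}(x_0,x_i)\ge 0$ for all $i$ and all $S$, iff $\min_S\text{ATE}(x_0,x_i)=L_i\ge0$ for each $i$. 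The order of the two minimizations can be swapped freely.

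The main obstacle is not the algebra but pinning down the definitions: what ``best action'' means and that $L_i$ is attained rather than a mere infimum. If $L_i$ is an infimum over a non-closed confidence set, ($\Rightarrow$) still goes through, because $\inf<0$ already implies that some compatible SCM has a negative contrast. So attainment is actually only needed for the boundary case. A second point to check is ties: if $L_i=0$ with equality in some SCM, $x_0$ is only weakly best. The statement's use of $\ge$ signals that weak optimality is the intended notion, and I would state that convention explicitly at the start.
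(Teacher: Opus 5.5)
Your proof is correct and follows the same route as the paper: ($\Leftarrow$) holds because $L_i$ lower-bounds the contrast in every compatible SCM, and ($\Rightarrow$) holds because a negative $L_j$ is witnessed by some compatible SCM in which $x_0$ is strictly worse than $x_j$; the paper simply calls the latter ``clear from the definition.'' Your appeal to Theorems~\ref{thm:id-surj} and~\ref{thm:nonid-surj} for attainment is unnecessary, and those results concern a single fixed $P$ rather than the whole confidence set $\CR$ anyway, since neither direction uses anything beyond $L_i$ being the infimum.
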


\begin{proof}
    $\implies$ is clear from the definition.

$\impliedby$: Suppose for the sake of contradiction that there exists some SCM such that $\text{ATE}(x_0, x_j) < 0$.  But this contradicts the assumption that $L_i \geq 0 \;\forall i\neq 1$.
\end{proof}

\begin{lemma}
    If there exists any $x_i$ such that $U_i >0$ and $L_i<0$, then both $x_0$ and $x_i$ cannot be the best action.
\end{lemma}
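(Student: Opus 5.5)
The plan is to argue directly from what ``best action'' means. The claim is that neither $x_0$ nor $x_i$ can be certified as the best action, so we interpret ``best'' as in the preceding lemma. An action is the best action when its superiority holds for every SCM consistent with the data and the graph. For $x_0$ this means $\text{ATE}(x_0,x_j)\ge 0$ in every such SCM, for all $j$. By the preceding lemma, this is equivalent to $L_j\ge 0$ for all $j$.

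We then use that $L_i,U_i$ are the minimum and maximum of $\text{ATE}(x_0,x_i)$ over the feasible set of SCMs (or over SCMs entailing distributions in $\CR$). By Theorem~\ref{thm:nonid-surj}, or Theorem~\ref{thm:id-surj} in the identifiable case, these extrema are attained. Hence $L_i<0<U_i$ gives two feasible SCMs $S_1,S_2$ with
\[
\text{ATE}_{S_1}(x_0,x_i)=L_i<0, \qquad \text{ATE}_{S_2}(x_0,x_i)=U_i>0 .
\]
If the bounds are defined only as an infimum and supremum, attainment is not even needed: strict inequalities already yield SCMs with strictly negative and strictly positive ATE.

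\emph{Case $x_0$.} In $S_1$ we have $\mathbb{E}_{S_1}[Y\mid\Do(x_i)]>\mathbb{E}_{S_1}[Y\mid\Do(x_0)]$, so $x_0$ is not optimal in $S_1$. Since $L_i<0$ violates the condition of the preceding lemma, $x_0$ cannot be declared the best action.

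\emph{Case $x_i$.} We use antisymmetry, $\text{ATE}(x_i,x_0)=-\text{ATE}(x_0,x_i)$. In $S_2$ this gives $\text{ATE}_{S_2}(x_i,x_0)=-U_i<0$, so $x_0$ strictly beats $x_i$ in $S_2$. Therefore $x_i$ is not optimal in every feasible SCM. Equivalently, the lower bound of $\text{ATE}(x_i,x_0)$ is $-U_i<0$, so the criterion of the preceding lemma applied with $x_i$ as the reference action also fails.

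The main obstacle is definitional rather than technical. The statement ``cannot be the best action'' has to be read as ``cannot be unambiguously determined to be the best action from the data and graph,'' in the sense of the Causal Decision definition. It should not be read as ``is not the best in the true SCM,'' which is false in general. Accordingly, I would state explicitly that the conclusion is that the decision between $x_0$ and $x_i$ is ambiguous. The witnesses $S_1,S_2$ give exactly the two SCMs with opposite ATE signs that this definition requires.
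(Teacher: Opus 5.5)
Your argument is correct and matches the paper's: $L_i<0<U_i$ yields two feasible SCMs in which $\text{ATE}(x_0,x_i)$ has opposite signs, one ruling out $x_0$ and the other ruling out $x_i$, so neither can be determined to be the best action, which is exactly how the paper's proof phrases its conclusion. Your inf/sup remark is the right justification for these witnesses, since Theorems~\ref{thm:id-surj} and~\ref{thm:nonid-surj} assume, rather than prove, that the extrema are attained.
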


\begin{proof}
    If there exists any xi such that $U_i >0$ and $L_i<0$, that means there exist two SCM $S1$, $S2$ such that $\text{ATE}(x_0, x_i) >0$ in $S1$, and $\text{ATE}(x_0, x_i) < 0$ in $S2$. So both $x_0$ and $x_i$ cannot be determined to be the best action.

\end{proof}

\textbf{Proof of Theorem~\ref{thm:id-surj}}
% \label{ap:thm2}

\idSurj*

\begin{proof}
    Consider any connected set of observed distributions $\mathcal{P}$. For any $P\in \mathcal{P}$, let $\bar{P}\in \bar{\Delta}$ be the distribution including the latent variables $U$, such that $\sum_U \bar{P} = P$. Assume the causal query is well defined, i.e. $P(pa(y)) > 0$.

    Let $adj(X,Y)$ be the set of variables that are the parents of $Y$ but not descendants of $X$. Then 
\begin{align*}
    h(\bar{P},G) &= P(y|{do}(x)) \\
    &= \sum_{adj(X,Y)} \bar{P}(y|x,adj(X,Y)) \bar{P}(adj(X,Y)) \\
    &= \frac{\sum_{v\not\in \{x,y,adj(X,Y)\}}\bar{P}(v)}{\sum_{v\not\in \{x,adj(X,Y)\}}\bar{P}(v)} \left(\sum_{v\not\in\{adj(X,Y)\}}\bar{P}(v)\right).
\end{align*}

Since the causal effect is a rational function of $\bar{P}$, it is continuous when the denominator is strictly positive. 

% Next we show that the set of distributions $\{\bar{P}\mid \sum_U \bar{P} = P, P\in\mathcal{P}\}$ is a connected set. 

Let $f(P,G)$ be the function that maps $P\in\mathcal{P}$ to some causal effect in $[0,1]$ such that $f(P,G) = h(\bar{P},G)$ for all  $\bar{P}$ such that $\sum_u\bar{P} = P$. Since the query is identifiable, the value of $h(\bar{P},G)$ is identical for all $\{\bar{P}\mid \sum_U \bar{P} = P, P\in\mathcal{P}\}$. So $f $ is well defined, and $h = f \circ g$.

Now for any open subset $M\subseteq [0,1]$, since $h$ is continuous, the preimage $h^{-1}(M)\subseteq \{\bar{P}\mid \sum_U \bar{P} = P, P\in\mathcal{P}\}$ is open. Let $g$ be the function mapping full joint distribution to observed distribution, i.e., $g:\bar{P}\mapsto P$. By our construction, for any $P\in \mathcal{P}$, $g:h^{-1}(M)\rightarrow P$ is a linear surjective function. With the relative topologies on the simplices, $g$ is an open map. Since $h^{-1}(M)$ is open, $g(h^{-1}(M))$ is open. For any open set $M\subseteq [0,1]$, $f^{-1}(M) = g(h^{-1}(M))$ is open, so the function $f$ is continuous. 

By the generalized intermediate value theorem, the image of $f(\mathcal{P},G)$ is connected. Since the image is a subset of $[0,1]$, it is an interval that contains $[a,b]$.

\end{proof}

\textbf{Proof of \Cref{thm:nonid-surj}}
\label{ap:thm3}

\nonidsurj*

\begin{proof}
Suppose the causal query of interest is $P(y|{do}(x))$. Consider the graph $G$ that includes both observed and unobserved latent variables. Let ${\mathcal{{P}}}$ be the set of distributions over the graph $G$. 
For each $\bar{P}\in\mathcal{P}$, let the set of $\bar{P}$ denote the underlying SCM, i.e. $S= \bar{P}$ for some $\bar{P}\in\mathcal{P}$. Assume the causal query is well defined, i.e. $P(pa(y)) > 0$. Then 
$$f(p,S) = P(y|{do}(x)) = \sum_{pa(y)} \bar{P}(y|pa(y)) \bar{P}(pa(y)) = \frac{\sum_{v\not\in \{x,y,pa(y)\}}\bar{P}(v)}{\sum_{v\not\in \{x,pa(y)\}}\bar{P}(v)} \left(\sum_{v\not\in\{pa(y)\}}\bar{P}(v)\right).$$

So the causal effect is a rational function of the full distribution over $G$. Since the causal query is well defined, the denominator is strictly positive. Thus $f(p,S) $ is continuous over all $\bar{P}\in \mathcal{P}$.
For each $\bar{P}\in \mathcal{P}$, the only set of constraints are in form of $\sum_{u\in U} \bar{P}(v,u) = P(v)$. The set $\mathcal{P}$ is connected. Therefore, the image of $f(p,\cdot)$ is connected, and thus it is an interval. 
Since there is exist $f(p,S)  = a$ and $f(p,S)  = b$, and the image of $f(p,S) $ is connected, $[a,b] \subseteq f(p,\mathcal{S})$. so the map $f(p,\cdot) $ is surjective.

\end{proof}

% Then for each $\bar{P} \in \mathcal{P}$, the causal query is identifiable, i.e., $\bar{P}_S(y|\Do(x))$ is identical for all $S\in \mathcal{S}$ such that $P_{obs}(S)=\bar{P}$.

\textbf{Proof of Proposition \ref{prop:irred-samp-ate}}
% \label{appex:prop1}

\irred*

% \begin{equation}
% \begin{split}
% \U_x:&=\max_{{P} \in \CR(\hat{{P}})} \max_{\text{ }S\in\{ S \text{ an SCM} |P_{\text{obs}}(S)={{P}}, S\models G\}}P_{S}(Y|\Do(X=x))\\
% \Up_x:&=\min_{{{P}} \in \CR(\hat{{P}})} \max_{\text{ }S\in\{S \text{ an SCM} |P_{\text{obs}}(S)={{P}}, S\models G\}\}}P_{S}(Y|\Do(X=x))\\
% \Lp_x:&=\max_{{{P}} \in \CR(\hat{{P}})} \min_{\text{ } S\in\{S \text{ an SCM} |P_{\text{obs}}(S)={{P}}, S\models G\}}P_{S}(Y|\Do(X=x))\\
% \Lo_x:&=\min_{{{P}} \in \CR(\hat{{P}})} \min_{\text{ } S\in\{S \text{ an SCM} |P_{\text{obs}}(S)={{P}}, S\models G\}}P_{S}(Y|\Do(X=x)),
% \end{split}
% \end{equation}

%     Since the confidence set gets strictly smaller as more samples are added to the dataset, for each epsilon ball, the search space of joint distributions is strictly smaller. As shown in Theorem 3, the inner region $[\Lp_x, \Up_x]$ is the intersection of all intervals $\mathcal{I}_P$. As the search space of joint distribution is smaller, there are less distributions to search over, and therefore less number of $\mathcal{I}_P$. The size of $\CR(\hat{P})$ decreases, the $\Up_x$ will increase as more samples are collected. Similarly, $\Lp_x$ will decrease as the number of samples increases. Hence, the \epNonID uncertainty represented by $[\Lp_x, \Up_x]$ will not decrease as the number of samples increases.\qed

\begin{proof}
    Recall the \cref{def:uncertainty-mult}, $\mathcal{I}_P = \{ATE(x)\mid S\in \mathcal{S}_P\}$, and $\mathcal{S}_P = \{S \text{ an SCM} \mid P_{obs}((S)=P, S\entails G\}$. 
    Let $\mathcal{C}(\hat{P}_1)$ and $\mathcal{C}(\hat{P}_2)$. Since $\hat{P}_1= \hat{P}_2$ and $|D_1|<|D_2|$, the confidence set gets strictly smaller with more samples, i.e., $\mathcal{C}(\hat{P}_2)\subseteq \mathcal{C}(\hat{P}_1)$. Therefore, the set $\{S_{\hat{P}}\mid \hat{P} \in\mathcal{C}(\hat{P}_2) \}$ is also a subset of  $\{S_{\hat{P}}\mid \hat{P} \in\mathcal{C}(\hat{P}_1) \}$, and thus  $\bigcap_{P\in\mathcal{C}(\hat{P}_2)} \mathcal{I}_P(x)$ is a subset of $\bigcap_{P\in\mathcal{C}(\hat{P}_1)} \mathcal{I}_P(x)$. Since the nonID uncertainty region is defined as the intersection $\bigcap_{P\in\mathcal{C}} \mathcal{I}_P(x)$, the nonID uncertainty from the larger dataset $D_2$ is a superset of nonID uncertainty from $D_1$.
\end{proof}

% \mk{are you trying to say this?}

% v1:

% For any causal effect $c$ in the epistemic-sample causal effect uncertainty interval, there exists an SCM that entail P and G with causal effect c. 

% v2:

% Consider an SCM $S^*$. Consider the causal decision making problem for $S^*$ given $n$ samples. For any causal effect $c$ in the epistemic-sample causal effect uncertainty interval of this causal decision-making problem, there exists an SCM $S$ where $c$ belongs to the epistemic-nonID interval of $S$. 

% \red{Proof:}

% \begin{figure}[H]
%     \centering
%     \begin{subfigure}[t]{0.48\linewidth}
%         \centering
%         \includegraphics[width=\linewidth]{figures/theorems/unamb.pdf}
%         \caption{Unambiguous decision}
%         \label{fig:sub-unamb}
%     \end{subfigure}
%     \hfill
%     \begin{subfigure}[t]{0.48\linewidth}
%         \centering
%         \includegraphics[width=\linewidth]{figures/theorems/amb.pdf}
%         \caption{Ambiguous decision}
%         \label{fig:sub-amb}
%     \end{subfigure}
%     \caption{Causal effect bounds and visualization of unambiguous and ambiguous decision making (Theorem~\ref{thm:main-thm}).}
%     \label{fig:main-theorem-cases}
% \end{figure}

\textbf{Proof of \Cref{thm:intersect-mult}}

\intersectmult*

% \begin{proof}
%     Let $m_P = \min \mathcal{I}_P = \min_{S\in\{S \text{ an SCM} |P_{\text{obs}}(S)={P}\}}ATE(x)$ and $\mathcal{M} = \{m_P \mid P\in \CR\}$. Then $\min \bigcap_{P\in\CR}\mathcal{I}_P  =  \max \mathcal{M} = \max \min_{S\in\{S \text{ an SCM} |P_{\text{obs}}(S)={P}\}}ATE(x) =  \Lp_{ate_x}$.

%     By a symmetric argument, we can show that $\max \bigcap_{P\in\CR}\mathcal{I}_P  =  \max \min_{S\in\{S \text{ an SCM} |P_{\text{obs}}(S)={P}\}}ATE(x) = \Up_{ate_x}$.
% \end{proof}
\begin{proof}
For each $P\in \mathcal C$, write
\[
\mathcal I_P=[\ell(P),u(P)],
\]
where
\[
\ell(P)
:=
\min_{S:\,P_{\mathrm{obs}}(S)=P}
\operatorname{ATE}_S(x),
\qquad
u(P)
:=
\max_{S:\,P_{\mathrm{obs}}(S)=P}
\operatorname{ATE}_S(x).
\]
Assume that
\[
\bigcap_{P\in\mathcal C}\mathcal I_P
\]
is nonempty. Since the intersection of intervals is again an interval, we may write
\[
\bigcap_{P\in\mathcal C}\mathcal I_P
=
[\underline c,\overline c].
\]

We first show that
\[
\underline c=\sup_{P\in\mathcal C}\ell(P).
\]
Let
\[
\bar L:=\sup_{P\in\mathcal C}\ell(P).
\]

Suppose first, for contradiction, that $\bar L<\underline c$. Choose
\[
\theta\in(\bar L,\underline c).
\]
Since $\theta<\underline c$, we have
\[
\theta\notin \bigcap_{P\in\mathcal C}\mathcal I_P.
\]
Therefore, there exists some $P_0\in\mathcal C$ such that
\[
\theta\notin \mathcal I_{P_0}=[\ell(P_0),u(P_0)].
\]
Hence either $\theta<\ell(P_0)$ or $u(P_0)<\theta$.

If $\theta<\ell(P_0)$, then
\[
\bar L=\sup_{P\in\mathcal C}\ell(P)\ge \ell(P_0)>\theta,
\]
contradicting $\theta>\bar L$.

If $u(P_0)<\theta$, then since $\theta<\underline c$, we have
\[
u(P_0)<\underline c.
\]
Thus $\underline c\notin [\ell(P_0),u(P_0)]$, contradicting the fact that
\[
\underline c\in \bigcap_{P\in\mathcal C}\mathcal I_P.
\]
Therefore, $\bar L\ge \underline c$.

Conversely, since $\underline c$ belongs to every interval $\mathcal I_P$, we have
\[
\ell(P)\le \underline c
\qquad
\text{for all } P\in\mathcal C.
\]
Taking the supremum over $P\in\mathcal C$ gives
\[
\bar L=\sup_{P\in\mathcal C}\ell(P)\le \underline c.
\]
Therefore,
\[
\underline c=\sup_{P\in\mathcal C}\ell(P).
\]

Similarly, we show that
\[
\overline c=\inf_{P\in\mathcal C}u(P).
\]
Let
\[
\bar U:=\inf_{P\in\mathcal C}u(P).
\]
Since $\overline c$ belongs to every interval $\mathcal I_P$, we have
\[
\overline c\le u(P)
\qquad
\text{for all } P\in\mathcal C.
\]
Thus
\[
\overline c\le \inf_{P\in\mathcal C}u(P)=\bar U.
\]

Now suppose, for contradiction, that $\overline c<\bar U$. Choose
\[
\theta\in(\overline c,\bar U).
\]
Since $\theta>\overline c$, we have
\[
\theta\notin \bigcap_{P\in\mathcal C}\mathcal I_P.
\]
Therefore, there exists some $P_0\in\mathcal C$ such that
\[
\theta\notin \mathcal I_{P_0}=[\ell(P_0),u(P_0)].
\]
Hence either $\theta<\ell(P_0)$ or $u(P_0)<\theta$.

If $\theta<\ell(P_0)$, then since $\overline c<\theta$, we have
\[
\overline c<\ell(P_0),
\]
so $\overline c\notin [\ell(P_0),u(P_0)]$, contradicting
\[
\overline c\in\bigcap_{P\in\mathcal C}\mathcal I_P.
\]

If $u(P_0)<\theta$, then
\[
\bar U=\inf_{P\in\mathcal C}u(P)\le u(P_0)<\theta,
\]
contradicting $\theta<\bar U$.

Therefore,
\[
\overline c=\inf_{P\in\mathcal C}u(P).
\]

Combining the two endpoint identities gives
\[
\bigcap_{P\in\mathcal C}\mathcal I_P
=
\left[
\sup_{P\in\mathcal C}\ell(P),
\inf_{P\in\mathcal C}u(P)
\right].
\]
Substituting the definitions of $\ell(P)$ and $u(P)$, we obtain
\[
\bigcap_{P\in\mathcal C}\mathcal I_P
=
\left[
\sup_{P\in\mathcal C}
\min_{S:\,P_{\mathrm{obs}}(S)=P}
\operatorname{ATE}_S(x),
\;
\inf_{P\in\mathcal C}
\max_{S:\,P_{\mathrm{obs}}(S)=P}
\operatorname{ATE}_S(x)
\right].
\]
Therefore,
\[
\bigcap_{P\in\mathcal C}\mathcal I_P
=
[L_{\operatorname{ate}_x},U_{\operatorname{ate}_x}].
\]
If the extrema over $P\in\mathcal C$ are attained, the supremum and infimum above can equivalently be written as a maximum and a minimum, respectively.
\end{proof}

\textbf{Proof of \Cref{outer-region-mult}}

\outerregionmult*

\begin{proof}
    Since $\bigcap_{P\in\CR}\mathcal{I}_P = [\Lp_{ate_x},\Up_{ate_x}]$, we only need to show that $\bigcup_{P\in\CR} \mathcal{I}_P=[\Lo_{ate_x},\U_{ate_x}]$. Let $m_P = \min \mathcal{I}_P = \min_{S\in\{S \text{ an SCM} |P_{\text{obs}}(S)={P}\}} ATE(x)$ and $\mathcal{M} = \{m_P \mid P\in \CR\}$. Then $\min \bigcup_{P\in\CR}\mathcal{I}_P  =  \min \mathcal{M} = \min \min_{S\in\{S \text{ an SCM} |P_{\text{obs}}(S)={P}\}}ATE(x) =  \Lo_{ate_x}$. Similarly, by a symmetric argument, we have $\max \bigcup_{P\in\CR}\mathcal{I}_P =\U_{ate_x}$
\end{proof}

\textbf{Proof of \Cref{thm:decision-ate}}

\decisionate*

\begin{proof}
    By the definition of $ATE(x)$, for a fixed SCM, an action $x$ is the best action if $ATE(x)\geq 0$. For a fixed distribution $P$, $\mathcal{I}_P$ corresponds to the values of $ATE(x)$ with the set of SCMs that is compatible with $P$, where each SCM corresponds to a unique value of $ATE(x)$. And if $ATE(x)\geq 0$ for some $x$, the action $x$ is the optimal action.
    For a confidence set $\mathcal{C}$, if for each $P\in \mathcal{C}$, and all the SCM that is compatible with $P$ such that $ATE(x)\geq 0$, then all $\mathcal{I}_P$ lies above zero, and therefore $\bigcup_{P\in\CR} \mathcal{I}_P=[\Lo_{x},\U_{x}]$ are above zero, i.e., $\Lo_{ate_x}\geq 0$, then the action $x$ is better than any other actions in all SCM that compatible with all distributions in the confidence set. Therefore, $x$ is the best action unambiguously. Similarly, if $\Uatex < 0$, then for all $P\in \mathcal{C}$ and all the SCMs that are compatible with $P$, we have $ATE(x)<0$, which indicates that in all SCMs. if $\Lpatex<0<\Upatex$, by \Cref{thm:intersect-mult}, for all $P \in\mathcal{C}$, the $\mathcal{I}_P$ contains both positive and negative value, there exist two SCM $S_1$ and $S_2$ such that $x$ is the best action in $S_1$, but not the best action in $S_2$. Since this is true for all distributions in the confidence set $\mathcal{C}$, the best action remains ambiguous with more data.
\end{proof}

% \section{Additional Discussion on Causal Effect Bounds}
% \subsection{Decision for Single Action}
\subsection{Decisions for Single Action}
\label{sec:sing-act}

In many scenarios, instead of comparing effects of two or more actions, we need to compare the effect of a single action, $P(y|\Do(x))$ with a scalar quantity such as the marginal $P(y)$. 
Suppose we need to make the critical decision 
of choosing an action $X=x$ for the population level to improve an outcome.
We can decide that by comparing the bounds of $P(Y \mid do(x))$ with the observational probability $P(Y)$. Specifically, action $X=x$ is beneficial or harmful if  the interval $[\min P(Y \mid do(x)), \max P(Y \mid do(x))]$ is either completely above or below $P(Y)$. Otherwise, we need to collect or observe additional variables until we can make that decision. In this section, we consider all bounds to be constructed from $P(Y|do(x))$.
% We can determine that by comparing the bound of $P(Y|\Do(x))$ with the probability $P(Y)$: 
% a specific action  $X=x$ is good or bad if $P(Y) \notin [min P(Y|\Do(x)), maxP(Y|\Do(x))]$ otherwise, collect/observe variables till we can make that decision.

\textbf{Backdoor Example:}
\label{sec:backdoor_exp}
For example, let us consider the backdoor graph example as shown in the top row of Figure \ref{fig:bound-illustration}. Let $\hat{P}(y|x,z_i)$ and $\hat{P}(z)$ be the empirical estimation from the data and 
$l_i^y$ and $u_i^y$ be the confidence intervals of the conditional probabilities $\hat{P}(y|x,z_i)$. Similarly, $l_i^z$ and $u_i^z$ are the confidence intervals of the conditional probabilities $\hat{P}(z_i)$. The for $\hat{P}$ in the confidence set described by the above constraint, the maximum $P(y|\Do(x))$  can be estimated by the following (left):
% \begin{figure}
% \begin{minipage}[c]{0.45\linewidth}
% \begin{align*}
%     &\max P(y|{do(}x)) = \sum_i a_ib_i\\
%     \text{s.t. }&\qquad a_i\in[l_i^y, u_i^y] ;\forall_i\\
%     &\qquad b_i\in [l_i^z, u_i^z];\forall_i\\
%     &\qquad \sum_i b_i=1
% \end{align*}
% \end{minipage}
% \begin{minipage}[c]{0.45\linewidth}
% \begin{align*}
%     &\max P(y|{do(}x)) = \sum_i u_i^yb_i\\
%     \text{s.t. }
%     &\qquad b_i \in [l_i^z, u_i^z];\forall_i\\
%     &\qquad \sum_ib_i=1
% \end{align*}
% \end{minipage}
% \end{figure}

% \begin{multicols}{2}
%   \begin{equation*}
%   \begin{split}
%     \max & P(y|{do(}x)) = \sum_i a_ib_i\\
%     \text{s.t. }\quad & a_i\in[l_i^y, u_i^y] ;\forall_i\\
%     \quad & b_i\in [l_i^z, u_i^z];\forall_i\\
%     \qquad &\sum_i b_i=1
%     \end{split}
% \end{equation*}
%   \break
%   \begin{equation*}
%   \begin{split}
%    &\max P(y|{do(}x)) = \sum_i u_i^yb_i\\
%     &\text{s.t. }
%     \quad b_i \in [l_i^z, u_i^z];\forall_i\\
%     &\qquad \sum_ib_i=1\\\\
%     \end{split}
% \end{equation*}
% \end{multicols}

\begin{minipage}[c]{1\textwidth}
\centering
\begin{minipage}[c]{0.45\linewidth}
%\raggedleft
\begin{align*}
    &\max P(y|{do(}x)) = \sum_i \hat{P}(y|x,z_i)\hat{P}(z_i)\\
    \text{s.t. }&\qquad \hat{P}(y|x,z_i)\in[l_i^y, u_i^y] ;\forall_i\\
    &\qquad \hat{P}(z_i) \in [l_i^z, u_i^z];\forall_i\\
    &\qquad \sum_i\hat{P}(z_i)=1
\end{align*}
\end{minipage}
\begin{minipage}[c]{0.05\textwidth}
\centering
$\quad\Longrightarrow\quad$
\end{minipage}
%\hfill
\begin{minipage}[c]{0.45\textwidth}
%\raggedright
\begin{align*}
    &\max P(y|{do(}x)) = \sum_i u_i^y\hat{P}(z_i)\\
    \text{s.t. }
    &\qquad \hat{P}(z_i) \in [l_i^z, u_i^z];\forall_i\\
    &\qquad \sum_i\hat{P}(z_i)=1
\end{align*}
\end{minipage}
\end{minipage}

The objective function in the above optimization is a quadratic function, which could be nontrivial to solve. But in this case, since the $ a_i\in[l_i^y, u_i^y]$ are the only constraints for $a_i$, and the objective function is an affine function of $a_i$, the maximum is attained at the boundary when $a_i=u_i^y\quad \forall i $. The optimal solution of the above optimization problem can be found by the equivalent problem at the right, which can be solved through linear programming.

Below, we specify the uncertainty for any specific action and the decision-making based on that. 
% Suppose, we have access to $P(y|x)$ from the available dataset. We wish to understand if we can declare $X=x$ as the best action from the conditional distribution or we need to collect more samples in a randomized trial.

% However, when we need to choose for which treatment we need to collect samples in the trial, we can compare the bounds of causal effect of actions $x_0,...,x_n$ using average treatment effect (ATE) to determine that. 

% \summ{Uncertainty Properties \& connection with bounds}
\begin{definition}[\epSample and \epNonID Uncertainty (Figure~\ref{fig:bound-illustration})]
\label{def:uncertainty}
    Let an empirical estimation of the joint distribution $P$ be $\hat{P}$ and a confidence set $\CR$ be such that $\CR$ covers the true $P$ with probability of at least $1-\alpha$. Define the set $\mathcal{S}_P=\{S\text{ an SCM} \mid P_\text{obs}(S)=P,  {S \models G} \}$ and $\mathcal{I}_P = \{P_S(y|{do(}x)) \mid S\in\mathcal{S}_P\}$. 
    % \mk{Why are we looking at SCMs that entail different causal graphs?} 
    Then define the intersection $\bigcap_{{P}\in\CR} \mathcal{I}_{{P}}$ to be the \textbf{\epNonID uncertainty}, and the set difference $\bigcup_{{{P}}\in\CR} \mathcal{I}_{{P}}\setminus \bigcap_{{{P}}\in\CR} \mathcal{I}_{{P}}$ to be the \textbf{\epSample uncertainty}.
    {The epistemic uncertainty region in the estimation would be the union of these subcomponents.  {The components are named the inner region and the outer band, respectively.} }
    % for the decision-making problem. 
\end{definition}

\begin{proposition}
\label{prop:irred-samp}
   The uncertainty in the causal decision-making problem due to  \epNonID uncertainty in $P(y|\Do(x))$ (i.e., inner region)  cannot be reduced by increasing the number of samples in the data.
\end{proposition}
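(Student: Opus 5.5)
We mirror the argument for Proposition~\ref{prop:irred-samp-ate}, with $\mathcal{I}_P=\{P_S(y\mid do(x)) : S\in\mathcal{S}_P\}$ in place of the ATE sets. First we make the statement precise in the same way as the ATE version. Take two datasets $D_1,D_2$ with $|D_1|<|D_2|$ and the same empirical distribution $\hat P_1=\hat P_2=\hat P$. The confidence-set construction of the paper (the Hoeffding/Bonferroni intervals $\hat P_n(v_i\mid v^{\pi_i-1})\pm\sqrt{\ln(2/\alpha)/(2n)}$ around the factorized empirical conditionals) then gives nested sets $\CR_2:=\CR(\hat P_2)\subseteq\CR_1:=\CR(\hat P_1)$. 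This holds because every interval is centred at the same point and its half-width is decreasing in $n$. The claim to prove is
\[
\bigcap_{P\in\CR_1}\mathcal{I}_P\;\subseteq\;\bigcap_{P\in\CR_2}\mathcal{I}_P .
\]
In words, the inner region can only grow, never shrink, as the sample size increases.

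The proof itself is a monotonicity argument for intersections. If $c$ lies in $\mathcal{I}_P$ for every $P\in\CR_1$, then in particular $c$ lies in $\mathcal{I}_P$ for every $P\in\CR_2\subseteq\CR_1$. This gives the inclusion.

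To phrase it in terms of the computable endpoints, we invoke the single-action analogue of Theorem~\ref{thm:intersect-mult}. Its proof uses only that each $\mathcal{I}_P$ is an interval, and this holds for $P(y\mid do(x))$ by Theorem~\ref{thm:nonid-surj}. It identifies the inner region as $[\Lp_x,\Up_x]$, with
\[
\Lp_x=\sup_{P\in\CR}\min_{S\in\mathcal{S}_P}P_S(y\mid do(x)),\qquad
\Up_x=\inf_{P\in\CR}\max_{S\in\mathcal{S}_P}P_S(y\mid do(x)).
\]
A supremum over a smaller set is no larger, and an infimum over a smaller set is no smaller. Hence $\Lp_x$ is non-increasing and $\Up_x$ is non-decreasing as $\CR$ shrinks. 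So $[\Lp_x,\Up_x]$ can only widen.

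The decision-theoretic reading follows directly. If the decision threshold $P(y)$ lies in the inner region for $D_1$, it still lies there for $D_2$. Therefore every $P$ in the smaller confidence set admits SCMs putting $P(y\mid do(x))$ on both sides of $P(y)$ (or exactly at it). The ambiguity caused by the inner region therefore persists with more samples, exactly as in the last step of the proof of Theorem~\ref{thm:decision-ate}.

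The main obstacle is making "increasing the number of samples" precise. In general the empirical distribution moves as data arrive, and then the confidence sets need not be nested. We handle this as in Proposition~\ref{prop:irred-samp-ate}, by fixing $\hat P$ and varying only $n$, and we state this hypothesis explicitly. A secondary subtlety is that the threshold $P(y)$ is itself estimated. We would either treat it as fixed at $\hat P(y)$, or note that $P(y)$ is identifiable and its own confidence interval shrinks. In the latter case, containment of the whole threshold interval in $[\Lp_x,\Up_x]$ is preserved by the same nesting argument.
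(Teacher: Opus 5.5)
Your proof is correct and follows the paper's own argument, which the paper writes out for the ATE analogue, Proposition~\ref{prop:irred-samp-ate}: fixing $\hat P$, a larger sample gives a nested confidence set $\CR(\hat P_2)\subseteq\CR(\hat P_1)$, an intersection over a smaller family can only be larger, and so by Theorem~\ref{thm:intersect} $\Lp_x$ cannot increase and $\Up_x$ cannot decrease. Your version is slightly cleaner: you derive the nesting from the Hoeffding construction instead of asserting it, and you state $\bigcap_{P\in\CR_1}\mathcal{I}_P\subseteq\bigcap_{P\in\CR_2}\mathcal{I}_P$ in the right direction, whereas the paper's intermediate step has this inclusion reversed before reaching the same conclusion.
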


\begin{definition}[Inner region: $\inner$, Outer band $\outr$]
\label{def:four}

%  $[\Up_x, \Lp_x]$, Outer bound $[\U_x, \Lo_x]$
Given a confidence set $\CR$ of an empirical distribution $\hat{P}$, Define the set $\mathcal{S}_P=\{S\text{ an SCM} \mid P_\text{obs}(S)=P,  {S \models G} \}$. We access the sample and non-id uncertainty regions by estimating the following quantities: %can be estimated by our method. 
% Musfiq:  Equations for the 4 quantities should be indtroduced here.
\begin{equation}
\label{eq:4quant}
\begin{split}
\Ux:&=\max_{{P} \in \CR(\hat{{P}})} \max_{\text{ }S\in\mathcal{S}_P}P_{S}(Y|{do(}X=x))\\
\Upx:&=\min_{{{P}} \in \CR(\hat{{P}})} \max_{\text{ }S\in\mathcal{S}_P}P_{S}(Y|{do(}X=x))\\
\Lpx:&=\max_{{{P}} \in \CR(\hat{{P}})} \min_{\text{ }S\in\mathcal{S}_P}P_{S}(Y|{do(}X=x))\\
\Lox:&=\min_{{{P}} \in \CR(\hat{{P}})} \min_{\text{ }S\in\mathcal{S}_P}P_{S}(Y|{do(}X=x)),
\end{split}
\end{equation}
\end{definition}

\begin{restatable}{proposition}{redsamp}
    \label{prop:red-samp}
    There exists SCMs $S\in \mathcal{S}_P$ where the uncertainty in the causal decision-making problem
     due to  \epSample uncertainty in $P(y|\Do(x))$ (i.e., outer band) 
     can be reduced by increasing the number of samples in the data.
\end{restatable}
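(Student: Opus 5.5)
The statement is existential, so the plan is to exhibit one concrete SCM and dataset regime where the outer band visibly shrinks as $n$ grows. Showing that sample uncertainty is reducible in general is not needed. The cleanest witness is a graph where $P(y\mid do(x))$ is identifiable, so that $\mathcal{I}_P$ is a singleton for every $P$ and all uncertainty is sample uncertainty. Concretely, I would take the backdoor graph $Z\to X\to Y$, $Z\to Y$ with binary variables, or even the trivial graph $X\to Y$ with no confounding, where $P(y\mid do(x))=P(y\mid x)$. I would also use the confidence set construction of the paper, namely the Hoeffding intervals $\hat P_n(v_i\mid v^{\pi_i-1})\pm\sqrt{\ln(2/\alpha)/(2n)}$.

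\textbf{Step 1.} In this graph, apply \Cref{thm:id-surj}: the confidence set $\CR_n$ is a product of intervals intersected with the simplex, hence connected. The image of $P\mapsto P(y\mid do(x))$ is therefore an interval $[\Lo_x,\U_x]$. Because each $\mathcal{I}_P$ is a single point $\{f(P)\}$, the intersection $\bigcap_{P\in\CR_n}\mathcal{I}_P$ is empty unless $f$ is constant on $\CR_n$. So the outer band of Definition~\ref{def:four} equals $[\Lo_x,\U_x]$ (modulo a degenerate endpoint convention), which is just the range of $f$ over $\CR_n$.

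\textbf{Step 2.} Show that this range shrinks. By the continuity argument in the proof of \Cref{thm:id-surj}, $f$ is a rational function with denominator bounded away from zero near the true $P$, assuming positivity $P(x,z)>0$. Hence $f$ is Lipschitz on a neighborhood, and
\[
\U_x-\Lo_x\le K\,\mathrm{diam}(\CR_n)=O\!\left(\sqrt{\ln(2/\alpha)/n}\right)\to 0 .
\]
In the $X\to Y$ case this is explicit: the width is exactly $2\sqrt{\ln(2/\alpha)/(2n)}$ for the relevant conditional, clipped to $[0,1]$.

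\textbf{Step 3.} Translate this into decision-making. Fix a true SCM $S$ whose $P_S(y\mid do(x))$ differs from the comparison value $\gamma$ (e.g.\ $P(y)$) by some $\delta>0$. For small $n$ the band contains $\gamma$, so the decision is ambiguous. Once $n$ is large enough that the width is below $\delta$, and $\CR_n$ covers the truth, which happens with probability at least $1-\alpha$, the band excludes $\gamma$ and the ambiguity is resolved. Choosing explicit numbers, for example $P(y\mid x)=0.7$, $P(y)=0.6$ and $\alpha=0.05$, gives a concrete $n$ at which the switch happens.

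The main obstacle is the informal phrase ``the uncertainty can be reduced''. I would formalize it as: the outer-band width, equivalently the ambiguity of the decision, strictly decreases, with high probability, for some sequence $n_1<n_2$. A secondary subtlety is that $\hat P$ fluctuates with $n$, unlike in Proposition~\ref{prop:irred-samp-ate}. I would handle this either by conditioning on $\hat P_{n_1}=\hat P_{n_2}$ as that proposition does, or by using the high-probability event that $\CR_n$ is contained in a ball of radius $O(n^{-1/2})$ around the truth. I would also remark that a non-identifiable example can be built by adding a bow arc: the outer band still has width $O(n^{-1/2})$ around the limiting bounds, by the same Lipschitz argument applied to the Balke–Pearl closed-form bounds.
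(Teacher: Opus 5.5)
Your proposal is correct, but it takes a different route from the paper. The paper's proof is a short monotonicity argument. As $n$ grows the confidence set shrinks, so $\U_x$ (a sup over $\CR$) can only decrease and $\Up_x$ (an inf over $\CR$) can only increase; hence $[\Up_x,\U_x]$ narrows, and symmetrically $[\Lo_x,\Lp_x]$ narrows.

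That argument is universal: it applies to every SCM, including non-identifiable ones, where the outer band genuinely surrounds a nonempty inner region. It has three limitations, however:
\begin{itemize}
\item It only gives weak monotonicity.
\item It tacitly assumes the confidence sets are nested, which, as in the proof of Proposition~\ref{prop:irred-samp-ate}, really requires holding $\hat P$ fixed.
\item It never exhibits an instance where the band strictly shrinks or the decision actually changes.
\end{itemize}

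Your witness-based proof delivers what the existential, decision-level phrasing asks for. It gives a strict, quantitative $O(n^{-1/2})$ shrinkage, handles a fluctuating $\hat P$ through the $1-\alpha$ coverage event, and produces an explicit flip from ambiguous to unambiguous via Theorem~\ref{thm:decision}.

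The price is that your main witness is the degenerate identifiable case. There the intersection is empty and the ``outer band'' is the entire range of $f$ over $\CR_n$, so there is no nonID region for the band to surround. If you want a witness where a nonempty nonID region coexists with a shrinking band, your closing remark is easy to make concrete in the bow graph. There $\mathcal{I}_P=[P(x,y),\,1-P(x,y')]$, so $\Lp_x-\Lo_x$ is the range of $P(x,y)$ over $\CR_n$ and $\U_x-\Up_x$ is the range of $P(x,y')$ over $\CR_n$. Both are $O(n^{-1/2})$, while the inner region stays nonempty whenever $P(x)<1$.

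One small correction: under the paper's construction, the Hoeffding radius for $P(y\mid x)$ uses the count $n_x$ of samples with $X=x$, not the total $n$. This changes the constants but not your conclusion.
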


\begin{proof}
    By the definition, the \epSample uncertainty are the intervals $[\Up_x, \U_x]$ and $[\Lo_x, \Lp_x]$. As the number of samples increases, the epsilon ball that is searched over is strictly smaller. Therefore the size of $\CR(\hat{P})$ decrease, and the $\U_x$ will decrease as $\Up_x$ will increase, hence the interval $[\Up_x, \U_x]$ becomes narrower. Similarly for $[\Lo_x, \Lp_x]$. 
\end{proof}

% \mk{We should have a theorem that says $\bigcap_{Q\in\CR} \mathcal{I}_Q=[\Lp_x,\Up_x]$. Add and prove. Similarly, add a theorem that says $\bigcup_{Q\in\CR} \mathcal{I}_Q\setminus \bigcap_{Q\in\CR} \mathcal{I}_Q=[\Lo_x,\Lp_x]\cup[\Up_x,\U_x]$ and prove}.

\begin{theorem}
\label{thm:intersect}
    Given a confidence set, $\CR$ and a set of causal effects, $\mathcal{I}_P$ (Def.~\ref{def:uncertainty}), $\bigcap_{P\in\CR}\mathcal{I}_P = [\Lpx,\Upx]$ if not empty.
\end{theorem}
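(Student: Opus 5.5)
The plan mirrors the argument for \Cref{thm:intersect-mult}, with $\operatorname{ATE}(x)$ replaced by $P_S(y\mid\Do(x))$. First, for each $P\in\CR$, I will write $\mathcal{I}_P$ as a closed interval. By \Cref{thm:nonid-surj}, applied to the query $P(y\mid\Do(x))$ at the fixed distribution $P$, the image of $S\mapsto P_S(y\mid\Do(x))$ over $\mathcal{S}_P$ is exactly $[\ell(P),u(P)]$. Here
\[
\ell(P):=\min_{S\in\mathcal{S}_P}P_S(y\mid\Do(x)),\qquad u(P):=\max_{S\in\mathcal{S}_P}P_S(y\mid\Do(x)).
\]
Thus $\mathcal{I}_P=[\ell(P),u(P)]$. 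In the identifiable case $\ell(P)=u(P)$, which is consistent with this. The min and max are attained, as assumed in the definition of the four quantities in \eqref{eq:4quant}. This holds because, for discrete variables, the space of latent-variable joint distributions compatible with $P$ is compact and the query is continuous (a rational function with positive denominator, as in the proof of \Cref{thm:nonid-surj}).

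Second, I will reduce to an elementary fact about families of closed intervals:
\[
\bigcap_{P\in\CR}[\ell(P),u(P)]=\Bigl[\sup_{P\in\CR}\ell(P),\ \inf_{P\in\CR}u(P)\Bigr].
\]
This holds whenever the right-hand side is nonempty. For the inclusion ``$\supseteq$'', take $\theta$ with $\sup_P\ell(P)\le\theta\le\inf_P u(P)$. Then $\ell(P)\le\theta\le u(P)$ for every $P$, so $\theta$ lies in every $\mathcal{I}_P$. For ``$\subseteq$'', if $\theta$ is in every $\mathcal{I}_P$, then $\ell(P)\le\theta$ and $\theta\le u(P)$ for all $P$. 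Taking the supremum and the infimum over $P$ gives $\sup_P\ell(P)\le\theta\le\inf_P u(P)$. This direct two-inclusion argument is shorter than the contradiction argument used for \Cref{thm:intersect-mult}, and I would use it. Nonemptiness of the intersection is exactly the hypothesis ``if not empty''. It also covers the degenerate case, since an empty intersection corresponds to $\sup\ell>\inf u$.

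Third, I will identify the endpoints with the quantities in \eqref{eq:4quant}. By definition, $\Lpx=\max_{P\in\CR}\ell(P)$ and $\Upx=\min_{P\in\CR}u(P)$. The only subtlety, and the step I expect to be the main obstacle, is whether these outer extrema over $\CR$ are attained, so that ``$\sup$/$\inf$'' can be written as ``$\max$/$\min$''. I would first argue it directly. $\CR$ is a closed, bounded subset of a finite-dimensional simplex (a product of closed Hoeffding intervals), so it is compact. Moreover, $\ell$ and $u$ are continuous in $P$, being optimal values of a continuous objective over a continuously varying compact feasible set, by Berge's maximum theorem. If continuity of the correspondence $P\mapsto\mathcal{S}_P$ proves delicate, I would fall back on interpreting $\max$/$\min$ in \eqref{eq:4quant} as $\sup$/$\inf$. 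This matches \eqref{eq:4quant-mult}, and the equality $\bigcap_{P\in\CR}\mathcal{I}_P=[\Lpx,\Upx]$ then holds verbatim from the second step.
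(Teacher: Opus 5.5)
Your proposal is correct and follows essentially the paper's route: use \Cref{thm:nonid-surj} to write each $\mathcal{I}_P$ as the interval $[\ell(P),u(P)]$, then identify the endpoints of the intersection as the max-min $\Lpx$ and the min-max $\Upx$. Your direct two-inclusion argument and your treatment of whether the outer extrema are attained (falling back to $\sup$/$\inf$) are more careful than the paper's one-line endpoint assertion for this theorem. They also match the caveat the paper itself adds at the end of its proof of \Cref{thm:intersect-mult}.
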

\begin{proof}
    Let $m_P = \min \mathcal{I}_P = \min_{S\in\{S \text{ an SCM} |P_{\text{obs}}(S)={P}\}}P_{S}(Y|{do(}X=x))$ and $\mathcal{M} = \{m_P \mid P\in \CR\}$. Then $\min \bigcap_{P\in\CR}\mathcal{I}_P  =  \max \mathcal{M} = \max \min_{S\in\{S \text{ an SCM} |P_{\text{obs}}(S)={P}\}}P_{S}(Y|{do(}X=x)) =  \Lpx$.

    By a symmetric argument, we can show that $\max \bigcap_{P\in\CR}\mathcal{I}_P  =  \max \min_{S\in\{S \text{ an SCM} |P_{\text{obs}}(S)={P}\}}P_{S}(Y|{do(}X=x)) = \Upx$.
\end{proof}

\begin{corollary}
\label{outer-region}
     % Let $\CR$, $\mathcal{I}_P$ as defined in the above definition, then we have  
     Given a confidence set, $\CR$ and a set of causal effects, $\mathcal{I}_P$ (Def.~\ref{def:uncertainty}), we have \\
     $\bigcup_{P\in\CR} \mathcal{I}_P\setminus \bigcap_{P\in\CR} \mathcal{I}_P=[\Lox,\Lpx]\cup[\Upx,\Ux]$
\end{corollary}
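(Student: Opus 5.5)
The plan is to reduce Corollary~\ref{outer-region} to Theorem~\ref{thm:intersect} plus a description of the union. By Theorem~\ref{thm:nonid-surj}, each $\mathcal{I}_P$ is a closed interval $[\ell(P),u(P)]$, where $\ell(P)=\min_{S\in\mathcal{S}_P}P_S(Y|\Do(X=x))$ and $u(P)=\max_{S\in\mathcal{S}_P}P_S(Y|\Do(X=x))$. By Theorem~\ref{thm:intersect}, the intersection is $[\Lpx,\Upx]$, assumed nonempty as in that theorem. It therefore suffices to show
\[
\bigcup_{P\in\CR}\mathcal{I}_P=[\Lox,\Ux].
\]
Once this is established, removing the middle interval leaves the two outer bands. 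I would state the bands as half-open, $[\Lox,\Lpx)\cup(\Upx,\Ux]$, or read the endpoints up to closure, since the closed form in the statement double counts $\Lpx$ and $\Upx$.

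The inclusion $\subseteq$ is immediate. Any value in some $\mathcal{I}_P$ lies between $\ell(P)\ge\Lox$ and $u(P)\le\Ux$.

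For $\supseteq$ I will use connectedness rather than extremes alone, since a union of intervals need not be an interval. Every $\mathcal{I}_P$ contains the common point $c\in[\Lpx,\Upx]$, and a union of intervals sharing a common point is an interval. This is where the nonemptiness hypothesis enters. Let $P^\ast$ attain $\Lox$, i.e.\ $\ell(P^\ast)=\Lox$, with the minimum over $\CR$ attained as in Equation~\ref{eq:4quant}. Then $[\Lox,c]\subseteq\mathcal{I}_{P^\ast}$. Symmetrically, taking $P^{\ast\ast}$ with $u(P^{\ast\ast})=\Ux$ gives $[c,\Ux]\subseteq\mathcal{I}_{P^{\ast\ast}}$. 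Together these give $[\Lox,\Ux]$ inside the union.

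If the extrema over $\CR$ are only suprema or infima, the same argument gives the union as an interval with endpoints $\Lox$ and $\Ux$, possibly open at those ends. Compactness of $\CR$ (a closed product of Hoeffding intervals) together with continuity of $\ell$ and $u$ would restore attainment. I expect this attainment and continuity issue to be the only real obstacle, and I would handle it by assuming the maxima and minima in Equation~\ref{eq:4quant} exist, as the definition already writes them.

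Finally, take the set difference:
\[
[\Lox,\Ux]\setminus[\Lpx,\Upx]=[\Lox,\Lpx)\cup(\Upx,\Ux],
\]
which matches the stated outer band up to these endpoints.
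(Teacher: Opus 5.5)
Your proof is correct and follows the paper's route: use Theorem~\ref{thm:intersect} to reduce to showing $\bigcup_{P\in\CR}\mathcal{I}_P=[\Lox,\Ux]$, then identify the extremes of the union with $\Lox$ and $\Ux$. It is more complete than the paper's proof, which only computes the $\min$ and $\max$ of the union and never shows the union fills the whole interval. Your common-point argument, using the nonempty intersection, supplies that missing step, and your remark that the difference is really $[\Lox,\Lpx)\cup(\Upx,\Ux]$ correctly flags a boundary-point imprecision in the stated identity.
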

\begin{proof}
    Since $\bigcap_{P\in\CR}\mathcal{I}_P = [\Lpx,\Upx]$, we only need to show that $\bigcup_{P\in\CR} \mathcal{I}_P=[\Lox,\Ux]$. Let $m_P = \min \mathcal{I}_P = \min_{S\in\{S \text{ an SCM} |P_{\text{obs}}(S)={P}\}}P_{S}(Y|{do(}X=x))$ and $\mathcal{M} = \{m_P \mid P\in \CR\}$. Then $\min \bigcup_{P\in\CR}\mathcal{I}_P  =  \min \mathcal{M} = \min \min_{S\in\{S \text{ an SCM} |P_{\text{obs}}(S)={P}\}}P_{S}(Y|{do(}X=x)) =  \Lox$. Similarly, by a symmetric argument, we have $\max \bigcup_{P\in\CR}\mathcal{I}_P =\Ux$
\end{proof}

% \mk{the following needs to be adapted to multiple actions with a small note added about optimizing the difference when there are only two actions}
% In this section, we consider binary actions $x_0, x_1$ for simplicity. Our algorithm can trivially be adapted to multiple actions $x_0,x_1,..., x_n$.

% \summ{Decision Making}
\begin{definition}
    An causal decision with  $P(y|\Do(x))$ as decision estimand is ambiguous
    % and $\gamma= P(y)$ is ambiguous} 
    given confidence set $\CR$ if there exists two SCMs $S_1,S_2$ where $p_{S_1}(y_1|{do(}x))>{P(y)}$ and $p_{S_2}(y_1|{do(}x))<{P(y)}$ where $S_1, S_2$ entail 
    $p_1,p_2\in \CR$.
    
    %\CR(\mathcal{D})$.
\end{definition}

\begin{theorem}
\label{thm:decision}
    For a causal decision making problem with $P(y|\Do(x))$ bound, let $(\Ux, \Upx, \Lpx, \Lox)$, and $P(y)$ be estimated from the data. The decision of $X=x$ as the best action (or cannot be
the best action) is unambiguous if $\Lox> P(y)$ (or $P(y)  >\Ux$). The decision is ambiguous and cannot be improved with more data if $\Lp_{x}< P(y) <\Upx$.
\end{theorem}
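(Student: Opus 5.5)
The plan is to mirror the proof of Theorem~\ref{thm:decision-ate}, replacing $ATE(x)$ by $P_S(y\mid\Do(x))$ and the threshold $0$ by $\gamma=P(y)$. Throughout, $\mathcal{I}_P$ is as in Definition~\ref{def:uncertainty}. By Theorem~\ref{thm:nonid-surj}, each $\mathcal{I}_P$ is a closed interval $[\ell(P),u(P)]$, where $\ell(P)$ and $u(P)$ are the inner minimum and maximum in \cref{eq:4quant}. Corollary~\ref{outer-region} identifies $\bigcup_{P\in\CR}\mathcal{I}_P$ with $[\Lox,\Ux]$, and Theorem~\ref{thm:intersect} identifies $\bigcap_{P\in\CR}\mathcal{I}_P$ with $[\Lpx,\Upx]$ whenever it is nonempty. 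The statement then reduces to locating $P(y)$ relative to these intervals.

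\textbf{Unambiguous cases.} Suppose $\Lox>P(y)$. For any $P\in\CR$ and any $S\in\mathcal{S}_P$ we have
\[
P_S(y\mid\Do(x))\ge \ell(P)\ge \Lox>P(y).
\]
So no pair $S_1,S_2$ entailing distributions in $\CR$ can place $P_{S_2}(y\mid\Do(x))$ below $P(y)$, and by definition the decision is unambiguous: $X=x$ is beneficial. The case $\Ux<P(y)$ is symmetric, using $P_S(y\mid\Do(x))\le u(P)\le\Ux$, and shows that $x$ is unambiguously not the best action.

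\textbf{Ambiguous, irreducible case.} Suppose $\Lpx<P(y)<\Upx$. Since $\sup_P\ell(P)<\inf_P u(P)$, the intersection is nonempty, so Theorem~\ref{thm:intersect} applies. For every $P\in\CR$ we get
\[
\ell(P)\le\Lpx<P(y)<\Upx\le u(P).
\]
By Theorem~\ref{thm:nonid-surj} there are SCMs $S_1,S_2\in\mathcal{S}_P$ attaining values strictly above and strictly below $P(y)$; one can take the maximizer and minimizer, or any values in the open subintervals. Hence the decision is ambiguous, and this holds for \emph{every} $P\in\CR$, including the true distribution. For the claim that more data cannot help, I would invoke Proposition~\ref{prop:irred-samp} (the analogue of Proposition~\ref{prop:irred-samp-ate}). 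A larger dataset with the same empirical estimate yields $\CR'\subseteq\CR$, so
\[
\bigcap_{P\in\CR'}\mathcal{I}_P\supseteq\bigcap_{P\in\CR}\mathcal{I}_P\ni P(y)
\]
as an interior point. Thus the ambiguity persists for every shrunken confidence set.

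\textbf{Main obstacle.} The delicate step is that $\gamma=P(y)$ is itself estimated and may vary with $P\in\CR$. I would first treat $P(y)$ as a fixed threshold computed from the data, as the statement says. If robustness is needed, I would replace it with $P'(y)$ for each $P'\in\CR$ and require the strict inequalities uniformly, arguing per distribution as above. A second subtlety is that the suprema and infima may not be attained, so I would use them as sup and inf. Only the strict inequalities are needed, so attainment is never required.
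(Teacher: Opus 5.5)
Your proposal is correct and follows the paper's route. The paper proves this theorem only by reduction to Theorem~\ref{thm:decision-ate}, whose argument is the same one you give: locate the threshold (here $P(y)$ in place of $0$) relative to $\bigcup_{P\in\CR}\mathcal{I}_P=[\Lox,\Ux]$ and $\bigcap_{P\in\CR}\mathcal{I}_P=[\Lpx,\Upx]$, and observe that the straddling holds for every $P\in\CR$. Your per-distribution chain $\ell(P)\le\Lpx<P(y)<\Upx\le u(P)$ and the nesting $\CR'\subseteq\CR$ spell out, more carefully, steps that the paper's argument leaves implicit.
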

\begin{proof}
    This can be proved using \Cref{thm:decision-ate}.
\end{proof}

% Next, we provide a simple algorithm that utilizes the disentanglement of the uncertainties to decide the data enhancement policy to improve the decision.

% \begin{algorithm}[hbt]
% \small
%     \caption{Data Enhancement Policy}
%     \begin{algorithmic}
%         \State{\bfseries Input: } Empirical distribution $\mathbb{\hat{P}}$ with a confidence set $\CR$, threshold $\epsilon$, Causal graph $G$, $D \gets \text{False}$
%         \Repeat
%         \State Estimate $\{\U_{x_0}, \Up_{x_0}, \Lp_{x_0}, \Lo_{x_0}\}$, and $\{\U_{x_1}, \Up_{x_1}, \Lp_{x_1}, \Lo_{x_1}\}$ from the data 
%         \IF {if $\Lo_{x_1}> \U_{x_0}$ or $ \Lo_{x_0}>\U_{x_1}$}
%         \State $D \gets \text{True}$
        
%         \Else
%             \IF {$\Lp_{x_1}<\Up_{x_0}<\Up_{x_1}$ or $\Lp_{x_1}<\Lp_{x_0}<\Up_{x_1}$}
%             \State Obtain additional instrumental variables
%             \Else 
%             \State Collect more data samples
%             \EndIf
%         \EndIf 
%         \Until {$D$ is True}
%     \end{algorithmic}
% \end{algorithm}

% The proofs are provided in Section~\ref{thr-anl}. 

Note that $P(y)$ can be provided by a domain expert. If we estimate it from data, the framework for multiple actions discussed in the previous section can be utilized to make a decision in such cases.

\subsection{Approximation with $\epsilon$-net}

The following lemma applies even with unobserved confounding between $X,Y$. The corollary shows that by searching over small enough epsilon net, the output value should find close enough to the true optimal.

\begin{lemma}
    Consider the causal graph with binary $X,Y,Z$ where $Z$ is an observed confounder with observational distribution $P(X,Y,Z)$. Consider another joint distribution $P'(X,Y,Z)$. If $TVD(P(X,Y|Z=z), P'(X,Y|Z=z))<\epsilon$ and $TVD(P(Z), P'(Z))<\epsilon$ then $|f(p) - f(p')|\leq 3\epsilon$ where $f(p)=\max_{\{S\mid P_\text{obs}(S)=p\}} p_S(y|\Do(x))$. 
    % \red{do(x)}
    %For binary variables $X,Y$ with a binary confounder $Z$ and latent confounders, for any distribution $P'(X,Y,Z)$ such that $TVD(P(Z), P'(Z))\leq \epsilon$, and $TVD(P(X,Y|Z=z), P'(X,Y|Z=z))<\epsilon$. Define $f(p)=\max_{\{S\mid P_\text{obs}(S)=p\}} p_S(y|\Do(x))$. Then $|f(p) - f(p')|\leq 3\epsilon$
\end{lemma}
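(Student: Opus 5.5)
The plan is to write $f(p)$ in closed form as a $P(Z)$-weighted average of per-stratum sharp upper bounds, and then split $|f(p)-f(p')|$ into a term from perturbing each stratum's bound and a term from perturbing the weights $P(Z)$. I treat the general case where $X$ and $Y$ may additionally share a latent confounder; the observed confounder $Z$ is a parent of both.

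\textbf{Step 1 (closed form for $f$).} Conditional on $Z=z$, the sub-model on $(X,Y)$ is a bow graph. The sharp (Manski-type) upper bound for $P(y\mid \Do(x), z)$ is
\[
g_z(p) := P(x,y\mid z) + P(x'\mid z) = 1 - P(x,y'\mid z).
\]
This is attained by an SCM in which every unit with $X=x'$ would have responded with $Y=y$ under $\Do(x)$. Such an SCM can be built independently in each stratum, and it keeps $P_{\text{obs}}=p$ and the graph structure. Since $Z$ is a non-descendant of $X$ and is not affected by the intervention, $P_S(y\mid\Do(x))=\sum_z P(z)\,P_S(y\mid \Do(x),z)$, so
\[
f(p)=\sum_z P(z)\,g_z(p), \qquad g_z(p)\in[0,1].
\]
The same formula holds for $p'$. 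I need to check that the stratum-wise maximizers combine into one admissible SCM; this holds because the latent variable can be allowed to depend on $Z$ in the response-function parametrization.

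\textbf{Step 2 (decompose the difference).} Write
\[
|f(p)-f(p')|\le \sum_z P(z)\,|g_z(p)-g_z(p')| + \Bigl|\sum_z \bigl(P(z)-P'(z)\bigr)\,g_z(p')\Bigr| .
\]

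\textbf{Step 3 (bound the two terms).} For the first term, $|g_z(p)-g_z(p')| = |P(x,y'\mid z)-P'(x,y'\mid z)|$. A single-event probability difference is bounded by the total variation distance, so this is at most $TVD(P(X,Y\mid z),P'(X,Y\mid z))<\epsilon$. Averaging over $P(z)$ keeps the first term below $\epsilon$.

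For the second term, $\sum_z (P(z)-P'(z))=0$ and $g_z(p')\in[0,1]$. The sum is therefore the integral of a $[0,1]$-valued function against a signed measure of total mass zero, which is at most $TVD(P(Z),P'(Z))<\epsilon$. Together the two terms give $2\epsilon\le 3\epsilon$, which proves the claim.

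\textbf{Main obstacle.} The delicate step is Step 1: making sure the sharp bound for this graph really takes the stratified form $\sum_z P(z)\bigl[1-P(x,y'\mid z)\bigr]$. When there is \emph{no} latent $X$–$Y$ confounding, $f$ is instead the backdoor formula $\sum_z P(z)P(y\mid x,z)$. The ratio $P(x,y\mid z)/P(x\mid z)$ is not Lipschitz in TVD without a positivity margin, and this is presumably where the slack up to $3\epsilon$ is spent.

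I would handle that case by assuming the lemma's setting includes the latent confounder, as the preceding sentence in the paper indicates. If the identified case must also be covered, I would add a positivity assumption on $P(x\mid z)$ and expand $P(y\mid x,z)-P'(y\mid x,z)$ by a common-denominator argument.
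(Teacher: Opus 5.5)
Your proof is correct and actually gives the sharper constant $2\epsilon$. The skeleton is the paper's: the paper also works in the bow-backdoor graph (latent $X$--$Y$ confounder plus observed $Z$) and writes $f(p)=\sum_z p(z)\,V_z(p)$. It then adds and subtracts $p(z)V_z(p')$, splitting $|f(p)-f(p')|$ into a stratum term and a weight term exactly as you do. The differences are in how each term is controlled.

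For the stratum term, the paper uses no closed form. It parametrizes SCMs by response variables $R_x,R_y$ with $q_{i,j}=p(R_x=i,R_y=j)$ and takes $V_z$ to be a sum of $q_{i,j}$'s. It then argues from the linear observational constraints that this sum moves by at most $\epsilon$ when $P(X,Y\mid z)$ moves by $\epsilon$ in TVD. That step is only sketched, since the maximizing $q$ for $p$ and for $p'$ are different objects. Your identification $V_z=1-P(x,y'\mid z)$ reduces it to a one-line single-event bound.

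One wording fix in gluing the strata: in this ADMG the latent must stay independent of $Z$, so ``the latent can depend on $Z$'' is not allowed. What is true is that the response functions may be $z$-specific, e.g.\ by taking a product coupling of the per-stratum response distributions, and that suffices.

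For the weight term, the paper bounds $|p(z)-p'(z)|\le\epsilon$ coordinatewise and uses $\sum\mathbf{v}'_z\le 1$ in each of the two strata. That is exactly where $3\epsilon=\epsilon+2\epsilon$ comes from. Your use of $\sum_z(P(z)-P'(z))=0$ with $g_z\in[0,1]$ saves one $\epsilon$.

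So your guess that the slack is spent on positivity is off. The paper's proof never touches the unconfounded case. Your remark that the claim fails there without a positivity margin is a valid caveat on the lemma's wording, not something the paper's proof handles.

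The response-variable route would in principle extend mechanically to graphs with no known closed-form sharp bound. Your route is more rigorous and tighter for this particular graph.
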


\begin{proof}
    Consider the bow-backdoor graph with binary variables $X,Y,Z$ where $Z$ is a confounder. Define the response variables $R_y, R_x$ as follows.

\begin{table}[h]
    \centering
    \begin{tabular}{|c|c c|}
        \hline
        $R_x$ & $z_0$ & $z_1$ \\
        \hline
        $0$   & $y_0$   & $y_0$   \\
        $1$   & $y_0$   & $y_1$   \\
        $2$   & $y_1$   & $y_0$   \\
        $3$   & $y_1$   & $y_1$   \\
        \bottomrule
    \end{tabular}
    \caption{Response function $R_x$}
    \label{tab:rx}
\end{table}

\begin{table}[h]
    \centering
    \begin{tabular}{|l|cccc|}
        \hline
        $R_y$& $z_0, x_0$ & $z_0, x_1$ & $z_1, x_0$ & $z_1, x_1$ \\
        \hline
        $0$ & $y_0$ & $y_0$ & $y_0$ & $y_0$ \\
        $1$ & $y_0$ & $y_0$ & $y_0$ &$y_1$\\
        $2$ & $y_0$ & $y_0$ & $y_1$ & $y_0$ \\
        $3$ & $y_0$ & $y_0$ & $y_1$ & $y_1$ \\
        $4$ & $y_0$ & $y_1$ & $y_0$ & $y_0$ \\
        $5$ & $y_0$ & $y_1$ & $y_0$ & $y_1$ \\
        $6$ & $y_0$ & $y_1$ & $y_1$ & $y_0$ \\
        $7$ & $y_0$ & $y_1$ & $y_1$ & $y_1$ \\
        $8$ & $y_1$ & $y_0$ & $y_0$ & $y_0$ \\
        $9$ & $y_1$ & $y_0$ & $y_0$ & $y_1$ \\
        $10$ & $y_1$ & $y_0$ & $y_1$ & $y_0$ \\
        $11$ & $y_1$ & $y_0$ & $y_1$ & $y_1$ \\
        $12$ & $y_1$ & $y_1$ & $y_0$ & $y_0$ \\
        $13$ & $y_1$ & $y_1$ & $y_0$ & $y_1$ \\
        $14$ & $y_1$ & $y_1$ & $y_1$ & $y_0$ \\
        $15$ & $y_1$ & $y_1$ & $y_1$ & $y_1$ \\
        \bottomrule
    \end{tabular}
    \caption{Response function $R_y$}
    \label{tab:ry}
\end{table}

Let $f(p) = \max p(y_0|{do}(x_0))$ then we can express $f(p)$ with response variables as following

\begin{equation}
    \begin{aligned}
         f(p) &= \sum_{j} \left(\left(\sum_{i=0}^7 q_{i,j}p(z_0)\right) + \left(\sum_{i=\{0,1,4,5,8,9,12,13\}}q_{i,j}p(z_1)\right)\right)\\
         % &= \sum_j \left(\sum_{i=\{0,1,4,5\}}q_{i,j}+ \sum_{i=\{2,3,6,7\}}q_{i,j}p(z_0) + \sum_{i=\{8,9,12,13\}}q_{i,j}p(z_1)\right)\\
         % &= \mathbf{A}^T\mathbf{v}
         % &= p(z_0)\sum \mathbf{v}_{z_0} +p(z_1)\sum \mathbf{v}_{z_1}
    \end{aligned}
\end{equation}
where $q_{i,j}= p(R_x=i, R_y=j) $

\begin{equation*}
    \begin{aligned}
    \mathbf{A}= \begin{bmatrix}
    1 \\ 
    1 \\ 
    1 \\ 
    1 \\ 
    p(z_0)\\
    p(z_0)\\
    p(z_0)\\
    p(z_0)\\
    p(z_1)\\
    p(z_1)\\
    p(z_1)\\
    p(z_1)\\
    1\\
    \vdots \\
\end{bmatrix}, && 
\mathbf{v} = \begin{bmatrix}
   q_{0,0} \\
   q_{0,1} \\
   q_{0,4} \\
   q_{0,5} \\
   q_{0,2} \\
   q_{0,3} \\
   q_{0,6} \\
   q_{0,7} \\
    q_{0,8} \\
   q_{0,9} \\
   q_{0,12} \\
   q_{0,13} \\
   q_{1,0}\\
   \vdots \\
\end{bmatrix}
\end{aligned}
\end{equation*}

\begin{equation*}
    \begin{aligned}
\mathbf{v}_{z_0} = \begin{bmatrix}
   q_{0,0} \\
   q_{0,1} \\
   q_{0,2} \\
   q_{0,3} \\
   q_{0,4} \\
   q_{0,5} \\
   q_{0,6} \\
   q_{0,7} \\
    q_{1,0} \\
   \vdots \\
\end{bmatrix}, && 
\mathbf{v}_{z_1} = \begin{bmatrix}
   q_{0,0} \\
   q_{0,1} \\
   q_{0,4} \\
   q_{0,5} \\
   q_{0,8} \\
   q_{0,9} \\
   q_{0,12} \\
   q_{0,13} \\
    q_{1,0} \\
   \vdots \\
\end{bmatrix}
\end{aligned}
\end{equation*}

The domain of $\mathbf{x}$ is defined by the following constraints

\begin{align*}
    p(y_0, x_0|z_0)  &= q_{0,0} + q_{0,1} + q_{0,2} + q_{0,3} + q_{0,4} + q_{0,5} + q_{0,6} + q_{0,7} \\
    &+ q_{1,0} + q_{1,1} + q_{1,2} + q_{1,3} + q_{1,4} + q_{1,5} + q_{1,6} + q_{1,7} \\
    p(y_0, x_0|z_1)  &= q_{0,0} + q_{0,1} + q_{0,4} + q_{0,5} + q_{0,8} + q_{0,9} + q_{0,12} + q_{0,13}\\
    &+ q_{2,0} + q_{2,1} + q_{2,4} + q_{2,5} + q_{2,8} + q_{2,9} + q_{2,12} + q_{2,13}\\
    \dots \\
    p(y_1,x_1|z_1) &= ...
\end{align*}

Assume $TVD(P(X,Y|z),P'(X,Y|z))\leq \epsilon$ and $TVD(P(Z),P'(Z))\leq \epsilon$ for each $z$ .

Let $k_{xyz}=\{i,j| \sum q_{i,j}= p(x,y|z)\}$. From the construction, for each $z$, $k_{xyz}$ are non-overlap for each $Z=z$, and $\sum_{xy}k_{xyz} = 1$. 

Since $TVD(P(Z),P'(Z))\leq \epsilon$, $|p(z)-p'(z)|\leq \epsilon$ for all $z$.

Let $S_z = \{x,y,z| k_{xyz}\cap \mathbf{v}_z\neq\emptyset\}$
Then we have 
\begin{equation}
       \begin{aligned}
        \frac{1}{2}\sum_{x,y,z\in S_z} \Big| (p(x,y|z)-p'(x,y|z))\Big |&\leq \frac{\epsilon}{2}\\
       \sum_{x,y,z\in S_z} \Big | \sum_{i,j\in k_{xyz}}(q_{i,j} - q'_{i,j}) \Big | &\leq \epsilon\\
        | \sum\mathbf{v}_z- \sum\mathbf{v}'_z|&\leq \epsilon \\
    \end{aligned} 
\end{equation}

Hence

\begin{equation}
    \begin{aligned}
        |f(p)-f(p')| & = \left|\left(p(z_0)\sum\mathbf{v}_{z_0} +p(z_1)\sum\mathbf{v}_{z_1}\right) - \left(p'(z_0)\sum\mathbf{v}_{z_0}' +p'(z_1)\sum\mathbf{v}_{z_1}'\right)\right| \\
        &= \left|\left(p(z_0)\sum\mathbf{v}_{z_0} -p'(z_0)\sum\mathbf{v}_{z_0}'\right) + \left( p(z_1)\sum\mathbf{v}_{z_1} -p'(z_1)\sum\mathbf{v}_{z_1}'  \right)\right| \\
        &\leq \left|\left(p(z_0)\sum\mathbf{v}_{z_0} -p'(z_0)\sum\mathbf{v}_{z_0}'\right)\right| + \left|\left( p(z_1)\sum\mathbf{v}_{z_1} -p'(z_1)\sum\mathbf{v}_{z_1}'  \right)\right| \\
        &= \left|\left(p(z_0)\sum\mathbf{v}_{z_0} -p(z_0)\sum\mathbf{v}_{z_0}' +p(z_0)\sum\mathbf{v}_{z_0}'-p'(z_0)\sum\mathbf{v}_{z_0}'\right)\right| \\
        &\qquad+ \left|\left( p(z_1)\sum\mathbf{v}_{z_1} -p(z_1)\sum\mathbf{v}_{z_1}'+p(z_1)\sum\mathbf{v}_{z_1}'-p'(z_1)\sum\mathbf{v}_{z_1}'  \right)\right| \\
        &= \left|p(z_0)(\sum\mathbf{v}_{z_0}- \sum\mathbf{v}'_{z_0}) + \sum v'_{z_0} (p(z_0)-p'(z_0))\right| \\
        &\qquad +\left|p(z_1)(\sum\mathbf{v}_{z_1}- \sum\mathbf{v}'_{z_1}) + \sum v'_{z_1} (p(z_1)-p'(z_1))\right| \\
        & \leq p(z_0)\epsilon + \sum v'_{z_0} \epsilon + p(z_1)\epsilon + \sum v'_{z_1} \epsilon \\
        &\leq 3\epsilon
    \end{aligned}
\end{equation}

% Since $\sum_{ij}q_{i,j}=1$, for some $q'$ such that $\sum_{i,j} |q'_{i,j}-q_{i,j}|\leq \epsilon$, $\sum_{s} |q'_s-q_s|\leq \frac{\epsilon}{2}$, for any subset of $q_{i,j}$.

% In this case, the maximum shift of $\mathbf{A}^T |\mathbf{v}-\mathbf{v}'|$ 
    
\end{proof}

\begin{corollary}
    For binary variables $X,Y$ with a binary confounder $Z$ and latent confounders. The $\Up_x$ and $\Lp_x$ estimated from our algorithm are within $3\epsilon$ distance from the true values. 
    % \red{do(x)}
\end{corollary}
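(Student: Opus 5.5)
The corollary follows from the preceding lemma. Since $\Up_x$ and $\Lp_x$ are defined as an infimum and a supremum of Lipschitz-type functionals over the confidence set, a sup or inf of functions that each move by at most $3\epsilon$ also moves by at most $3\epsilon$. I would split the argument into two pieces.

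The first piece is the gap between the true optimum and its restriction to the $\epsilon$-net. Write $f(P)=\max_{S\in\mathcal S_P}P_S(y\mid \mathrm{do}(x))$ and $g(P)=\min_{S\in\mathcal S_P}P_S(y\mid \mathrm{do}(x))$. The lemma gives $|f(P)-f(P')|\le 3\epsilon$ whenever $P,P'$ are $\epsilon$-close in the sense of the lemma, meaning $\mathrm{TVD}$ of $P(Z)$ and of each $P(X,Y\mid Z=z)$ is below $\epsilon$. The lemma's argument never used the direction of optimization; replacing $\max$ by $\min$ (equivalently, applying it to $1-p(y\mid \mathrm{do}(x))$) gives the same bound for $g$. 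I would state this symmetric version explicitly.

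Let $N\subset\CR$ be the $\epsilon$-net produced by Algorithm~\ref{alg:enet}. By construction, every $P\in\CR$ has some $P'\in N$ that is $\epsilon$-close in each conditional and in the marginal. I need to check here that the per-conditional intervals of width $\epsilon_s$ translate into the TVD conditions of the lemma, with $\epsilon$ identified with the net resolution; for binary variables this is immediate, since TVD equals the absolute difference of one coordinate. The algorithm outputs $\widehat{\Up}_x=\min_{P'\in N}f(P')$. Since $N\subseteq\CR$, we have $\widehat{\Up}_x\ge \Up_x$. Conversely, take any $P\in\CR$ with $f(P)\le \Up_x+\delta$ and pick its net neighbour $P'$. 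Then $\widehat{\Up}_x\le f(P')\le f(P)+3\epsilon\le\Up_x+\delta+3\epsilon$. Letting $\delta\to 0$ gives $0\le\widehat{\Up}_x-\Up_x\le 3\epsilon$. The mirror argument with $g$ and $\max$ gives $|\widehat{\Lp}_x-\Lp_x|\le3\epsilon$.

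The second piece is the inner optimization. The algorithm computes $f(P')$ and $g(P')$ through the relaxed DCM, not exactly. I would invoke the representativeness assumption in Definition~\ref{def:scm}: a sufficiently expressive DCM attains the same extremal values as the SCM class for fixed $P'$. I would then assume exact inner optimization, which matches the intended reading of the corollary as a statement about the net approximation.

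The main obstacle is the first piece: verifying that the net construction really gives the lemma's closeness conditions for every point of $\CR$. This includes boundary points and the simplex constraint on $P(Z)$, where snapping the conditionals independently might leave $\CR$ or violate normalization. I would handle it by choosing net points as products of snapped conditionals, which stay valid distributions, and by noting that $\CR$ is itself a product of intervals, so snapped points remain inside it.
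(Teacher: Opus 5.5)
Your proposal is correct and follows essentially the paper's argument: the paper also takes the net point whose ball contains the min-max optimizer, uses the lemma to get $\widehat{\Up}_x\le\Up_x+3\epsilon$, and uses minimality of $\Up_x$ over all of $\CR$ for the reverse inequality. The paper phrases this as a contradiction and treats only $\Up_x$, so your direct version (with $\delta$ for a non-attained infimum, and $\Lp_x$ handled symmetrically via $1-p(y\mid \mathrm{do}(x))$) is, if anything, more complete.

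One small correction: $P(X,Y\mid Z=z)$ lives on four points, so its TVD is not a single-coordinate difference. Snapping $P(X\mid z)$ and each $P(Y\mid x,z)$ to within $r$ bounds that TVD by $2r$, which only changes the constant relating $\epsilon$ to the net resolution.
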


\begin{proof}
    Let $\Up_x$ denote the true $\min\max P(y|{do}(x))$ and $\hat{\Up}_x$ denote the estimated value of our algorithm. Suppose for the sake of contradiction that $|\hat{\Up}_x-\Up_x|>3\epsilon$. By Lemma 1, the observed distributions that are compatible with $\Up_x$ and $\hat{\Up}_x$ do not belong to the same ball with $\epsilon$ distance. Let $B$ be the ball that contains the observed distribution that is compatible with $\Up_x$, and $U_x$ be the maximum value of $P(y|{do}(x))$ with a compatible observational distribution inside $B$. By Lemma 1, $|U-\Up_x|\leq 3\epsilon$. Since $\Up_x, \hat{\Up}_x, U$ are scalers and $\Up_x$ is the smallest $\max P(y|{do}(x))$ over all observational distributions, we have $U-\Up_x\leq 3\epsilon$, $\hat{\Up}_x - \Up_x>3\epsilon$. And that implies $\hat{\Up}_x > U$, which contradicts the assumption that $\hat{\Up}_x$ is the smallest $\max P(y|{do}(x))$ among all epsilon balls.
    So we can conclude that $|\hat{\Up}_x-\Up_x|\leq 3\epsilon$
\end{proof}

\begin{lemma}
     For an observational distribution from IV graph $P(X,Y,Z)$ where $X,Y,Z$ are binary variables, there exist a SCM such that $P(y_1|{do}(x_1))$ attains the upper bound and $P(y_1|{do}(x_0))$ attains the lower bound. 
     
     % \red{do(x)}
\end{lemma}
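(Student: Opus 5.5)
We will work in the canonical response-function parameterization of the binary IV graph $Z\to X\to Y$, $X\leftrightarrow Y$, in the style of Balke and Pearl. The latent confounder is replaced by a pair $(R_x,R_y)$ with joint law $q(r_x,r_y)$, independent of $Z$. Here $R_x$ ranges over the four maps $\{0,1\}\to\{0,1\}$ from $z$ to $x$: never-taker $n$, complier $c$, defier $d$, always-taker $a$. Likewise $R_y$ ranges over the four maps from $x$ to $y$, which we encode as the pair $(r_y(0),r_y(1))$. Every SCM compatible with the graph and with $P(X,Y,Z)$ induces such a $q$ satisfying the linear constraints
\[
P(x,y\mid z)=\sum_{r_x(z)=x}\ \sum_{r_y(x)=y} q(r_x,r_y).
\]
Conversely, every such $q$ defines an SCM. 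In this parameterization $P(y_1\mid do(x_1))=\sum_{r_y(1)=1}q$ and $P(y_1\mid do(x_0))=\sum_{r_y(0)=1}q$. So the claim reduces to showing that the two linear programs, maximize the first objective and minimize the second over the same polytope $Q$, have a common optimal vertex.

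The first step is to separate the coordinates of $q$ that the data can constrain from those it cannot. The constraints only see $r_y$ at the realized treatment $x=r_x(z)$, so they depend on $q$ only through the following quantities, which we collect into a parameter $\theta$:
\begin{itemize}
\item the type masses $q(r_x)$;
\item for never-takers, the law of $r_y(0)$;
\item for always-takers, the law of $r_y(1)$;
\item for compliers and defiers, the two marginals of $r_y(0)$ and $r_y(1)$ separately. Each is seen under a different value of $z$, so the coupling between them never enters.
\end{itemize}
Both objectives are also functions of these marginals, plus two \emph{free} pieces: the never-takers' $r_y(1)$ and the always-takers' $r_y(0)$. Neither piece appears in any constraint, and each appears in only one of the two objectives. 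Given $\theta$, the best free choices therefore never conflict: set $r_y(1)\equiv 1$ on never-takers for the maximum, and $r_y(0)\equiv 0$ on always-takers for the minimum. Both choices can be made in one $q$.

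This reduces the lemma to a statement about $\theta$. We need a single feasible $\theta$ that simultaneously maximizes
\[
g_1(\theta)=q(n)+q(a)P_a(r_y(1)=1)+\textstyle\sum_{t\in\{c,d\}}q(t)P_t(r_y(1)=1)
\]
and minimizes
\[
g_0(\theta)=q(n)P_n(r_y(0)=1)+\textstyle\sum_{t\in\{c,d\}}q(t)P_t(r_y(0)=1).
\]
Here $P_t$ denotes the outcome law within type $t$. This is the step we expect to be the main obstacle. The tension is visible: $g_1$ rewards shifting mass toward never-takers, while $g_0$ penalizes never-takers with $r_y(0)=1$. The feasible set of $\theta$ is a small polytope cut out by the eight cells of $P(x,y\mid z)$. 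Our first attempt is to decompose each cell $(x,y,z)$ into its contributions from the two $r_x$ types compatible with $x=r_x(z)$. For example, $(x_0,y,z_0)$ receives mass from $n$ and $c$, and $(x_0,y,z_1)$ from $n$ and $d$. We then show that the reallocation of mass between types that increases $g_1$ moves, cell by cell, mass labelled $y_1$ from $c$ or $d$ onto $n$ or $a$ in a way that does not increase $g_0$.

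If this monotone-exchange argument fails, we fall back on the Balke–Pearl closed forms. We take the explicit vertex list of $Q$ together with the max-of-eight and min-of-eight expressions for the two bounds. For each region of $P$ in which a given expression is active, we exhibit the attaining vertex and check that it is the same vertex for both bounds, or that the two attaining vertices agree on $\theta$ and differ only in the free pieces. The final step is routine: assemble the common $\theta$ with the two free assignments from the second paragraph into a single $q$, and read off the corresponding SCM.
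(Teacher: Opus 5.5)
\paragraph{Verdict.} There is a genuine gap, and it is at the step you yourself flag as the main obstacle.

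\paragraph{What is correct.} Your reduction to $\theta$ is sound. The constraints see only the type masses, the never-taker law of $r_y(0)$, the always-taker law of $r_y(1)$, and the separate complier and defier marginals. The free pieces ($r_y(1)\equiv 1$ on never-takers, $r_y(0)\equiv 0$ on always-takers) never conflict. The complier and defier marginals can be coupled arbitrarily at the end.

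\paragraph{The gap.} The existence of one $\theta$ that optimizes $g_1$ and $g_0$ simultaneously is the entire content of the lemma, and the proposal does not establish it. The monotone-exchange step is only announced. The fallback is an unexecuted case analysis, and it misquotes the closed forms: the eight-term Balke--Pearl expressions are for the ATE, while the bounds on each $P(y_1\mid do(x_j))$ are four-term.

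\paragraph{The anticipated tension is not there.} Write $\alpha_t$ for the mass of type $t$ with $r_y(0)=1$. The cells give $P(x_0,y_1\mid z_0)=\alpha_n+\alpha_c$ and $P(x_0,y_1\mid z_1)=\alpha_n+\alpha_d$. Hence on the feasible set
\[
g_0=P(x_0,y_1\mid z_0)+P(x_0,y_1\mid z_1)-\alpha_n .
\]
So placing $y(0)=1$ mass on never-takers lowers $g_0$, because never-takers are counted in both $x_0$-cells.

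\paragraph{The missing idea: the coupling is one-dimensional and monotone.} The data fix $q(n)+q(c)$, $q(n)+q(d)$, $q(a)+q(c)$ and $q(a)+q(d)$. So with $t:=q(d)$,
\[
q(n)=P(x_0\mid z_1)-t,\qquad q(c)=t+P(x_0\mid z_0)-P(x_0\mid z_1),\qquad q(a)=P(x_1\mid z_0)-t .
\]
For fixed $t$, the $y(0)$-parameters $(\alpha_n,\alpha_c,\alpha_d)$ and the $y(1)$-parameters $(\beta_a,\beta_c,\beta_d)$ are constrained separately. Eliminating them gives the optimal values at fixed $t$:
\[
\begin{aligned}
\min g_0(t) &= P(x_0,y_1\mid z_0)+P(x_0,y_1\mid z_1)-\min\{q(n),\,P(x_0,y_1\mid z_0),\,P(x_0,y_1\mid z_1)\},\\
\max g_1(t) &= P(x_1,y_1\mid z_0)+P(x_1,y_1\mid z_1)+P(x_0\mid z_1)\\
&\quad -\max\{t,\,P(x_1,y_1\mid z_0),\,P(x_1,y_1\mid z_1)+P(x_0\mid z_1)-P(x_0\mid z_0)\}.
\end{aligned}
\]
The first is nondecreasing in $t$ and the second is nonincreasing. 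Every feasibility condition (type masses, $y(0)$-side, $y(1)$-side) reduces to a bound of the form $t\ge c$ or $t\le c$. The feasible $t$ therefore form an interval, and its left endpoint $t_{\min}$ is optimal for both objectives. At $t_{\min}$, take the optimal $\alpha$'s and $\beta$'s, which can be chosen independently. Then add your free pieces and couple the complier and defier marginals. This confirms your heuristic: shifting mass from $c,d$ to $n,a$ helps both objectives.

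\paragraph{Comparison with the paper.} The paper works directly with the sixteen $q_{ij}$. Under two WLOG orderings of cells, it zeroes $q_{12},q_{13},q_{32},q_{33}$ for the minimum and $q_{00},q_{20},q_{02},q_{22}$ for the maximum, then asserts both patterns can be imposed in one SCM. Your $\theta$-reduction makes explicit that the two patterns interact only through the type masses. That is exactly what the paper's combining step needs, and the $t$-argument above supplies it.
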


\begin{proof}
    Let $R_x, R_y$ be the response variables for $X,Y$, and $q_{ij}:= P(R_x=i, R_y=j)$. Then we have the following constraints from the observed data.
    \begin{align*}
        P(y_0,x_0|z_0) &= q_{00} + q_{10} + q_{01} + q_{11} \\
        P(y_0,x_1|z_0) &= q_{20} + q_{22} + q_{30} + q_{32} \\
        P(y_1,x_0|z_0) &= q_{02} + q_{03} + q_{12} + q_{13} \\
        P(y_1,x_1|z_0) &= q_{21} + q_{23} + q_{31} + q_{33} \\
        P(y_0,x_0|z_1) &= q_{00} + q_{20} + q_{01} + q_{21} \\
        P(y_0,x_1|z_1) &= q_{10} + q_{12} + q_{30} + q_{32} \\
        P(y_1,x_0|z_1) &= q_{02} + q_{03} + q_{22} + q_{23} \\
        P(y_1,x_1|z_1) &= q_{11} + q_{13} + q_{31} + q_{33} \\
    \end{align*}

\begin{align*}
    P(y_1|{do}(x_1)) &= \sum_{i=0}^3 q_{i1} + q_{i3} \\
    P(y_1|{do}(x_0)) &= \sum_{i=0}^3 q_{i2} + q_{i3} \\
\end{align*}

The goal is to show an SCM that obtains the minimum of $P(y_1|{do}(x_0))$ and the maximum of $P(y_1|{do}(x_1))$. 
For the SCM minimize $P(y_1|{do}(x_0))$, the corresponding $q_{ij}$ are minimized. Without loss of generality, assume $P(y_1,x_0|z_1)>P(y_1,x_0|z_0)$. The minimum of $P(y_1|{do}(x_0))$ can be obtained by let $q_{12}, q_{13}, q_{32}, q_{33} $ equal to zero (because the constraint $ P(y_1,x_0|z_1) = q_{02} + q_{03} + q_{22} + q_{23}$, the terms $q_{02}, q_{03}, q_{22},q_{23}$ do not affect the value of $P(y_1|{do}(x_0))$). 

Similarly, for the maximum of $P(y_1|{do}(x_1))$, without loss of generality, assume $P(y_0,x_1|z_1)>P(y_0,x_1|z_0)$. To maximize $ P(y_1|{do}(x_1)) = \sum_{i=0}^3 q_{i1} + q_{i3} $, let $q_{01}+q_{21}=P(y_0,x_0|z_1)$, and $q_{03}+q_{23}=P(y_1,x_0|z_1)$ (due to the constraint $P(y_1,x_1|z_1) = q_{11} + q_{13} + q_{31} + q_{33}$ ,the terms $q_{11},q_{13},q_{31},q_{33}$ do not affect the value of $P(y_1|{do}(x_1))$). This is equivalent to making $q_{00}, q_{20}, q_{02}, q_{22}$ equal to zero. 

Combining the two derivations above, any SCM with response variables such that $q_{00}, q_{20}, q_{02}, q_{22}, q_{12}, q_{13}, q_{32}, q_{33} $ equals to zero would attains the maximum of $P(y_1|{do}(x_1))$ and minimum of $ P(y_1|{do}(x_0))$.
\end{proof}

% \subsection{Proof of \Cref{thm:intersect}  
% % \red{do(x)}
% }

% \subsection{Proof of \Cref{outer-region} 
% % \red{do(x)}
% }

\section{Algorithm Details and Complexity Analysis}
\label{sec:comp-anal}
\label{sec:alg_detail}

\subsection{From Ambiguous to Unambiguous Decision for Single Action}
We can reach the final decision through a sequence of three moves.
Let $\gamma =  P(y)$.
\textit{1. Return:} if $ \gamma < \Lox$, {we decide the action is beneficial in population level}. 
\textit{2. Observe}: if $\overline{L_{x}}< \gamma<\underline{U_x}$, then we observe additional variables for example instrument variables.
Observing additional variables gives us a small causal effect set $\mathcal{I}'_P$, for each $P\in \CR$. Thus, for fixed $\CR$, we get a narrower inner region, $[\Lp_{x}, \Up_{x}] =  \bigcap_{P\in\CR}\mathcal{I}_P'$ 
and a narrower/wider outer band, $[\Lo_x,\Lp_x]\cup[\Up_x,\U_x]=\bigcup_{P\in\CR} \mathcal{I}_P'\setminus \bigcap_{P\in\CR} \mathcal{I}_P'$.
Also, the average non-id bound width will reduce since  $1/|\CR|  \sum_{P\in \CR}|\mathcal{I}'_P| \leq 1/|\CR|  \sum_{P\in \CR}|\mathcal{I}_P|$.

\textit{3. Collect}: In all other scenarios, we are unsure about the source of uncertainty (low sample size or non-identifiability) and we collect more samples, with the assumption that obtaining samples with additional variables is more challenging compared to collecting samples of the same set of variables. More samples provides us a smaller confidence set $\CR'(\hat{P})$. 
Thus, according to Theorem~\ref{thm:intersect} and ~\ref{thm:intersect-mult}, we get a wider inner region, $[\Lp_{x}, \Up_{x}] =  \bigcap_{P\in\CR'}\mathcal{I}_P$ 
and according to Corrolary~\ref{outer-region} and ~\ref{outer-region-mult}, a narrower outer band, $[\Lo_x,\Lp_x]\cup[\Up_x,\U_x]=\bigcup_{P\in\CR'} \mathcal{I}_P\setminus \bigcap_{P\in\CR'} \mathcal{I}_P$.
The \textit{collect} and \textit{observe} moves can continue until we reach the \textit{return} move and decide the best action.
\subsection{Effect of Sample Size on an Individual Action Interval}
%  and confounder on
% Up(x0) is minimum of all maximums.
Suppose, we have are given $N$ samples from the true distribution $P(.)$ resulting in an empirical distribution $\hat{P}$. From the samples, we can construct confidence regions $\CR_{P(.)}$. Any distribution $P_{\epsilon}\in \CR_{P(.)}$ with $d(P_{\epsilon}, \hat{P})<\epsilon$ can be the true distribution. We maximize the causal effect while keeping the empirical distribution in this region. Suppose all the empirical distributions $P_{\epsilon} \in \CR$ provides us a set of max causal effects $M_{P(y|\Do(x))}$.

% when we maximize the causal effect and [minmin, maxmin] when we minimize the causal effect.

Now, suppose, we have less samples than $N$. Less samples will make a larger confidence region $\CR^L$  allowing more empirical distributions as potential candidate to be the true distribution.
As a result we will have a new set of max causal effect $M'_{P(y|\Do(x))}$. for the distributions $P_{\epsilon}\in \CR^L \setminus \CR$.
Now, $\U = \max(M_{P(y|\Do(x))},M'_{P(y|\Do(x))}) \geq \max(M_{P(y|\Do(x))})$
and $\Up = \min(M_{P(y|\Do(x))},M'_{P(y|\Do(x))}) \leq \min(M_{P(y|\Do(x))})$. Thus, when we have low sample size, the bound $[\Up, \U]$ extends.
Similarly, the bound $[\Lo, \Lp]$ also extends when we have low sample size.
As the above two bounds exetnds, the bound $[\Lp, \Up]$ shrinks. 

Now, suppose, we have more samples than $N$. More samples will make a smaller confidence region $\CR^S$  allowing less empirical distributions as potential candidate to be the true distribution.
As a result we will have a new set of max causal effect $M'_{P(y|\Do(x))}$. for the distributions $P_{\epsilon}\in \CR^S$.
Now, $\U = \max(M'_{P(y|\Do(x))}) \leq \max(M_{P(y|\Do(x))})$
and $\Up = \min(M'_{P(y|\Do(x))}) \geq \min(M_{P(y|\Do(x))})$. Thus, when we have high sample size, the bound $[\Up, \U]$ shrinks.
Similarly, the bound $[\Lo, \Lp]$ also shrinks when we have low sample size.
As the above two bounds shrinks, the bound $[\Lp, \Up]$ extends.

The consequence of sample size can also be visualized as imagining $\max p(y|\Do(x))$ as an estimand. With finite samples, there is a confidence interval below and above it. which is $\min\max p(y|\Do(x))$ and $\max\max p(y|\Do(x))$. With fewer samples, this confidence interval gets wider.
So the $\min\max p(y|\Do(x))$ will be smaller.
Similarly, the $\max\min p(y|\Do(x))$ will be larger.

\textbf{Hyper parameters:} 
 $(\alpha)$: In Alg~\ref{alg:dcm}, determines how fast the model’s implicit distribution should return to the $\epsilon$-ball around the empirical distribution (i.e., controls the soft constraints).
 $(\epsilon_s)$: In Alg~\ref{alg:decide}, determines the size of the search space around the empirical distribution. In general, smaller $\epsilon_s$ values provide tighter estimates but require a larger number of candidate balls within the search space.

 \begin{figure}[t!]
% \vspace{-6mm}
     \begin{minipage}[t]{1\textwidth}
  \begin{minipage}[t]{1\textwidth}
       \footnotesize
\begin{algorithm}[H]
     \footnotesize
      \caption{$\mathrm{RelaxedDCM}$ $\mathrm{Training}$ $\mathrm{Model}$}
      \label{alg:dcm}
      \begin{algorithmic}[1]
         \State \textbf{Input:} 
         % data $\{\mathbf v_k\}_{k=1}^n$, 
         Data distribution $P_{\epsilon}(\mathbf{V})$,
         Variables $\mathbf V$, graph $\mathcal G,$ action $x$, smaller interval $\epsilon_s$      
\For{$j\in [0, 1]$ \Comment{ 0: minimization, 1: maximization}}
\State Initialize dual parameter $\{\alpha_i\}_{V_i\in \mathbf{V}}$ and ate weight $\lambda$.
\State Initialize $\mathrm{DCM}$  parameters 
% $\Theta_{\min} \gets \{\theta_i\}_{V_i \in \mathbf V}$ and 
% $\Theta_{\max} \gets \{\theta_i\}_{V_i \in \mathbf V}$ 
$\Theta \gets \{\theta_i\}_{V_i \in \mathbf V}$ 
\label{alg2:ncm-init}
\For{$epoch=1\to max\_epoch$}
\State $d_i = dist\left( P_{\epsilon}(v_i \mid v_{\pi^{(i-1)}}) ,  P_{\Theta_i}(v_i \mid v_{\pi^{(i-1)}}) \right), \forall_i : V_i \in \mathbf{V}$
\State $L \gets \frac{1}{n} \sum_{k=1}^n
\sum_{V_i\in \mathbf{V}}
\alpha_i  \left[
d_i
- \epsilon_s \right]$
% \label{alg2:mtch-obs2}
\State $\mathcal{L} \gets L $ +
$(-1)^j*\lambda *
ATE^{(\Theta)}(x)$ \Comment{ ATE with respect to the best among remaining actions.}
\label{alg2:min-eff}
\State $\alpha_{i} \gets \max\left(0,\ \alpha_i + \mathit{lr} \cdot (d_i - \epsilon_s)\right), \forall _i: V_i\in \mathbf{V}$ 
\State $\Theta\gets \Theta - \eta\,\nabla\mathcal L_{}$
          \EndFor
          \State $bound[j] = ATE^{(\Theta)}(x)$ 
        \EndFor
    \State \textbf{Return} $\{U_x, L_x\}= \{bound[1], bound[0]\}$
      \end{algorithmic}
    \end{algorithm}
  \end{minipage}%
  \end{minipage}
  % \vspace{-2mm}
\end{figure}

\begin{algorithm}[t!]
     \footnotesize
    \caption{$\mathrm{Construct}$ $\epsilon$-net}
      \label{alg:enet}
      \begin{algorithmic}[1]
\State \textbf{Input:} data $\{\mathbf v_k\}_{k=1}^n$, Variable set $\mathbf V$, graph $\mathcal G$, small interval $\epsilon_s$  
\State Factorize $\hat{P}(\mbf{v})$ in $\Pi_{v\in \mbf{v}} \hat{P}(v_i|v^{\pi_i-1})$
    \For{ each $V_i \in \mathbf{V}$}
    % \For{ $k \in [0, support({V_i})-1]$}
    \State $\epsilon = \sqrt{\frac{\ln(2/\alpha)}{2 n}}$ where $n= count(D[v_{\pi}^{(i-1)}])$
    \State $C_{i}$ =$\ceil{\epsilon/\epsilon_s}$ equidistance points in $\hat{P}_n(v_i=1|v_{\pi}^{(i-1)}) \pm \epsilon$
    % \EndFor
    \EndFor
    \State Sample $m$ joints s.t.$\{(p^{(j)}_1,..,p^{(j)}_{|\mathbf{V}|})  \}^{m}_{j=1}  \subseteq  C_1 \times C_2 \times...\times C_{|\mathbf{V}|}$
    \State \textbf{Return} $m$ candidate joint distributions
    $\{ P^{(j)}  \}^{m}_{j=1}$.
  \end{algorithmic}
    \end{algorithm}

% \subsection{Complexity analysis}

% We provide theoretical complexity for Algorithms~1--3.

\subsection{Algorithm~3: Construct $\epsilon$-net.}
\label{derivation}

There are various ways to construct a confidence set, extending on the classical confidence intervals. We use a simple set defined as the cartesian product of intervals obtained via concentration inequalities in each coordinate of the joint distribution, given as follows: $\mathcal{S}\coloneqq \prod_{v\in\mathcal{V}} [l_v,u_v],$
% \begin{equation}
% \label{eq:cross-prod}
%     % \mathcal{S}\coloneqq \bigtimes_{v\in\mathcal{V}} [l_v,u_v], 
%     \mathcal{S}\coloneqq \prod_{v\in\mathcal{V}} [l_v,u_v],
% \end{equation}
where $v$ is a configuration of the joint variable set  and $l_v, u_v$ are the lower and upper bounds that contain true $p(v)$ with probability $\alpha/m$, where $m$ is the total number of configurations in the observed distributions. Thus, $p\in S$ with probability $\alpha$ through a simple union bound argument.

% ----

\textbf{Calculate Confidence Intervals}
We calculate the confidence interval for any conditional distribution $P(v_i|v^{\pi_i-1})$ as below:

First, we need to obtain the empirical estimators $\hat{P}(v_i=1|v^{\pi_i-1})$ for the binary variables from the training data.
$ \hat{P}_n(v_i=1 \mid v^{\pi_i-1}) = \frac{1}{m} \sum_{k=1}^{m} \mathbbm{1}(v_i^k(v^{\pi_i-1}) = 1) $

Now, let $n = count(D[v^{\pi_i-1}])$. Then according to Hoeffding's inequality:
\begin{equation}
    \begin{split}
        & Pr(|\hat{P} - P| \geq \epsilon) \leq 2 e^{-2n\epsilon^2}\\
        &\alpha= 2 e^{-2n\epsilon^2}\\
        &\alpha/2= e^{-2n\epsilon^2}\\
        &ln (2/\alpha)= 2n\epsilon^2\\
        &\frac{ln (2/\alpha)}{2n}=\epsilon^2\\
        &\epsilon = \sqrt{\frac{ln (2/\alpha)}{2n}}\\
    \end{split}
\end{equation}

Thus, the confidence interval for our empirical estimators are:

$ \CR_{P(v_i=1|v^{\pi_i-1})} := 
\left( 
\hat{P}_n(v_i=1|v^{\pi_i-1}) - \sqrt{\frac{\ln(2/\alpha)}{2n}}, \ 
\hat{P}_n(v_i=1|v^{\pi_i-1}) + \sqrt{\frac{\ln(2/\alpha)}{2n}}
\right) $

$ P(v_i=1|v^{\pi_i-1}) \in \CR_{P(v_i=1|v^{\pi_i-1})} \quad \text{with probability greater than } 1 - \alpha $

% ---

Note that the number of confidence intervals increases with the support size and the number of variables. If we consider $\ceil{\epsilon/\epsilon_s}$ centroids in each interval, the number of all possible joint distributions will be extremely high. To deal with such a scenario, we uniformly pick a centroid from each conditional distribution interval and form a valid joint distribution $P$. For a binary two variable case $X,Y$, we can split the CI of $P(X=1)$ into $n+1$ smaller intervals by placing $n$ centroids. We pick uniformly a value $p$ from these $n$ centroids having $P(X=0)=1-p$. Doing the same for $P(Y=1|X=0)$ and $P(Y=1|X=1)$ will allow us to construct $P(X,Y)$.
However, since we are sampling the joint distribution only a finite number of times, it is possible that we miss the true distribution. We have to accept a small error in the constructed bounds $\four$ as well. We leave the development of better search strategies for future work. While exploring the confidence interval, if a candidate distribution becomes inconsistent with the assumed instrument graph, we reject it based on conditions proposed in \citep{pearl2013testability}. Finally, our algorithm can deal with positivity violation of any distribution by considering the whole region $[0,1]$ as its confidence interval.

\textbf{Complexity:}
Let $T$ denote the number of epochs, $B$ denote the batch size, $|D|$ denote the dataset size, $|V|$ denote the number of variables, $\mathcal{X}$ denote the cardinality of each variable. 
We construct confidence intervals from dataset $D$ for each of the $|V|$ variables, which costs $O(|V||D|)$. Sampling $M$ joint distributions from these intervals adds $O(M)$ cost. Thus,
\[
A_3 = O(|V||D| + M).
\]

\subsection{Algorithm~2: RelaxedDCM Training Model.}
For the Deep Causal Models
defined in \Cref{def:scm}, we have the following theorem. 
\begin{theorem}
\citep{kocaoglu2018causalgan,xia2021causal,rahman2024modular}
	\label{th:identifiability}
	Consider any SCM $S=(\mathbf{V}, \mathcal{N}, \mathcal{U}, \mathcal{F}, P(.) )$.  A DCM $\G=\{f_{1},...,f_{n}\}$ for $G$ entails the same identifiable interventional distributions as the SCM $S$ if it entails the same observational distribution.  
	\end{theorem}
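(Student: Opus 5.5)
The proof goes through the node-by-node construction used in the cited works. The aim is to exhibit, for the given SCM $S$, a DCM whose observational distribution equals $P_{\mathrm{obs}}(S)$, and then to show that identifiable interventional quantities agree. The natural route goes through identifiability itself. A DCM $\G$ built for $G$ is itself an SCM compatible with $G$: each $f_i$ takes the outputs of $Pa_G(V_i)$ and a noise vector $N_i$, and shared noise $N_i=N_j$ plays the role of an unobserved confounder exactly on the bidirected edges of $G$. So $\G$ induces some SCM $S_\G$ with $S_\G\models G$.

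First I would formalize this correspondence. I define exogenous variables for $S_\G$ as the distinct noise vectors, one per confounded component and one per private noise. Under the semi-Markovian assumption these noises are mutually independent, so $P(\cdot)$ is a product measure as the SCM definition requires. I would also check that the hard intervention of \Cref{def:NCM-sample}, which replaces $f_X$ by the constant $x$ and feeds it forward, coincides with the do-operator in $S_\G$. This gives $P_\G(\mathbf{V}\mid \Do(x)) = P_{S_\G}(\mathbf{V}\mid \Do(x))$.

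Second, I would invoke the definition of identifiability, the same fact used in the proof of Theorem~\ref{thm:id-surj}. A query $Q$ is identifiable in $G$ if there is a functional $\phi$ such that $Q_{S'} = \phi(P_{\mathrm{obs}}(S'))$ for every $S'\models G$ with positive observational distribution. This is guaranteed constructively by the ID algorithm of \citet{tian2002general,shpitser2008complete}. Applying it to both $S$ and $S_\G$ gives $Q_\G = \phi(P_\G(\mathbf{V})) = \phi(P(\mathbf{V})) = Q_S$ whenever $P_\G(\mathbf{V})=P(\mathbf{V})$. This is the content of the theorem.

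The main obstacle is the first step, specifically the noise-sharing convention. The definition requires $N_i=N_j$ iff $V_i,V_j$ share a confounder. If one node lies in several confounded pairs, the literal condition is not transitive: sharing one vector would wrongly merge distinct confounders into a single latent. I would resolve this by letting each $f_i$ take the tuple of all confounder noises incident to $V_i$ together with a private noise, and by reading condition iii) as "share a component". The induced model is then still semi-Markovian with the same ADMG. Any added latents are at worst a coarser but still $G$-compatible structure, and the ID functional is valid for every SCM entailing $G$, so identifiability is preserved. Positivity, needed for $\phi$ to be well defined, I would simply inherit from the assumption that the query is well defined, as in the proof of Theorem~\ref{thm:id-surj}.
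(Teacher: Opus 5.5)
Your argument is correct and is essentially the standard one behind the cited results; the paper gives no proof of its own beyond the citation. A DCM built on $G$ is itself an SCM compatible with $G$ whose hard interventions coincide with the do-operator, so any identifying functional $\phi$ gives $Q_{\G}=\phi(P_{\G}(\mathbf{V}))=\phi(P(\mathbf{V}))=Q_S$.

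One caution about your remark that ``coarser'' latents remain $G$-compatible. Letting some $f_i$ ignore noise coordinates is harmless. However, merging the confounders of distinct bidirected edges into one shared vector, which a literal reading of condition (iii) in the DCM definition forces, gives a model compatible only with a supergraph of $G$. For example, in $X\to Y$, $X\leftrightarrow M\leftrightarrow Y$, merging creates $X\leftrightarrow Y$, and $P(y\mid\Do(x))=P(y\mid x)$ is no longer forced. So your per-edge noise tuples (or one noise per maximal bidirected clique, as in the neural causal model construction) are necessary, not merely convenient.
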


\textbf{Complexity:}
Let $T$ denote the number of epochs, $B$ denote the batch size, $|V|$ denote the number of variables, $\mathcal{X}$ denote the cardinality of each variable, $c$ denote the forward-backward pass per sample cost, and $m$ denote the number of Monte Carlo samples. We use one neural network for each variable.
The joint distribution of $|V|$ variables has $\mathcal{X}^{|V|}$ configurations. In each epoch, we process $B$ samples of each configuration, with $m$ Monte Carlo samples, evaluate $|V|$ neural networks, and repeat the procedure for both minimization and maximization.

The total cost is
\[
A_2
=
T \cdot 2 \cdot \mathcal{X}^{|V|} \cdot B \cdot m \cdot |V| \cdot c
=
2T\mathcal{X}^{|V|}B|V|mc
=
O\!\left(T\mathcal{X}^{|V|}B|V|mc\right).
\]

\subsection{Algorithm~1: Explore $\epsilon$-ball.}

\textbf{Complexity:}
Let $T$ denote the number of epochs, $B$ denote the batch size, $|D|$ denote the dataset size, $|V|$ denote the number of variables, $\mathcal{X}$ denote the cardinality of each variable, $c$ denote the forward-backward pass per sample cost, and $m$ denote the number of Monte Carlo samples. We use one neural network for each variable.

After constructing the $\epsilon$-net using Algorithm~3, we execute RelaxedDCM using Algorithm~2 for each of the $M$ candidate joint distributions. Thus,
\[
A_1
=
O(A_3 + M \cdot A_2)
=
O\!\left(
|V||D| + M + M \cdot T\mathcal{X}^{|V|}B|V|mc
\right).
\]

The overall complexity is dominated by Algorithm~2 and scales as $\mathcal{X}^{|V|}$ due to operations over the joint space. Therefore, the complexity is exponential in the number of variables $|V|$ and polynomial in the domain size $\mathcal{X}$ with degree $|V|$.

\subsection{Computational Complexity:}
When the training dataset contains small number of samples for variables with large number of states, the search space of epsilon-net is large and therefore more iterations are needed to find the min-max and max-min. In that case, the size of epsilon-net can be adjusted with a trade-off between accuracy and efficiency.
As the number of samples increases, the search space of epsilon-net will become smaller and therefore more efficient when searching for min-max and max-min. 
Finally, when the empirical distribution converges to the true distribution for a large dataset, and the confidence intervals shrink accordingly, our method recovers the true bounds for non-identifiable queries, similar to existing approaches. Our main contribution lies in handling the uncertainty region where we can only estimate the empirical distributions with finite samples. We can utilize and separate the sources of uncertainty to make optimal decisions or develop data collection strategies to improve the results.

We pick each distribution $P_{\epsilon_s}$ and train two models for at least 2000 epochs to maximize and minimize the ATE respectively while keeping the model learned distribution within $\epsilon_s$ distance of $P_{\epsilon_s}$.
We learn $k$ ($\sim 200$) such distributions and achieve $k$ many $[\min, \max,]$ bounds. We can then construct the $[\Late, \Lpate, \Upate, \Uate]$ following Equation~\ref{eq:4quant-mult}.

All experiments were conducted on workstations equipped with two NVIDIA GeForce RTX $4090$ GPUs, each with $24$ GB of memory. We executed the algorithm using $15$-$20$ parallel threads, where each thread explores one candidate joint distribution from the $\epsilon$-ball at a time. The runtime depends on the problem instance, computational resources, and hyperparameter choices. For a dataset with $18$ binary variables, we train for $500$-$1000$ epochs with a batch size of $256$, processing each meaningful joint configuration. A single run takes approximately $20$ minutes to $1$ hour, and evaluating $100$-$200$ candidate distributions takes approximately $6$-$7$ hours under parallel execution.

\section{Experiments}
\label{sec:appex-experiment}

% Synthetic 1.

% synthetic 2 + ate + baselines.

% Ternary: by itself.

% Real world: Parent Labor Supply. We show how it reduces. No baselines.

% \red{We find the number of samples for the 2nd case with binary search over the number of samples required}
% \red{multiple covariates can be used as proxy.T}

% \subsection{Todo}

% \begin{itemize}
%     \item ATE based experiment.
%     \item Can we add the following illustration?
% Option 1. Samples->Confidence set on p(V)->Bounds on CE
% Option 2. Samples->CE Estimate->Confidence set on CE
% We are advocating Option 1. We need to show on a toy example how option 2 might fail to include the true causal effect.
% \end{itemize}

\subsection{Causal Graphs}
The following causal graphs are used in our experiments.

\begin{figure}[H]
\centering
\hspace{-10mm}
\begin{subfigure}{0.31\linewidth}
\begin{minipage}[t]{1\textwidth}
\centering
\begin{tikzpicture}[scale=0.5, transform shape, node distance=1cm, every node/.style={draw, circle, minimum size=1cm}, ->, >=Stealth]
% Nodes
% \vspace{5mm}
\node[align=left] (X) {X};
\node[right=of X] (Y) {Y};
\node[above=of X, fill=lightgray] (U) {U};
% Arrows
\draw (X) -- (Y);
\draw (U) -- (X);
\draw (U) -- (Y);
\end{tikzpicture}
\end{minipage}
\caption{Bow graph.}
\label{fig:bow-graph}
\end{subfigure}
\begin{subfigure}{0.31\linewidth}
\begin{minipage}[t]{1\textwidth}
\centering
\begin{tikzpicture}[scale=0.5, transform shape, node distance=1cm, every node/.style={draw, circle, minimum size=1cm}, ->, >=Stealth]
% Nodes
% \vspace{5mm}
\node[align=left] (I) {I};
\node[right=of I] (X) {X};
\node[right=of X] (Y) {Y};
\node[above=of X, fill=lightgray] (U) {U};
% Arrows
\draw (I) -- (X);
\draw (X) -- (Y);
\draw (U) -- (X);
\draw (U) -- (Y);
\end{tikzpicture}
\end{minipage}
\caption{Instrument variable (IV) graph.}
\label{fig:iv-graph}
\end{subfigure}
\begin{subfigure}{0.31\linewidth}
\begin{minipage}[t]{1\textwidth}
\centering
\begin{tikzpicture}[scale=0.5, transform shape, node distance=1cm, every node/.style={draw, circle, minimum size=1cm}, ->, >=Stealth]
% Nodes
% \vspace{5mm}
\node[align=left] (X) {X};
\node[right=of X] (Y) {Y};
\node[above=of X] (U1) {$Z_1$};
\node[right=of U1] (U2) {$Z_2$};
% Arrows
\draw (U1) -- (X);
\draw (U2) -- (X);
\draw (U1) -- (Y);
\draw (U2) -- (Y);
\draw (X) -- (Y);
\draw (U) -- (X);
\draw (U) -- (Y);
\end{tikzpicture}
\end{minipage}
\caption{Two observed confounder graph.}
\label{fig:two-obs-cnf}
\end{subfigure}
\caption{Causal graphs used in experiments}
\label{fig:all-graphs}
\end{figure}

\subsection{Experiment Detail for \Cref{sec:scm1}}
\label{sec:scm1_detail}
\textbf{Setup:} In the first experiments, we consider the causal graph shown in Figure~\ref{fig:iv-graph} where we have 2 instrument variables ($I_1, I_2)$, one unobserved confounder ($U$), a treatment ($X$) and an outcome variable ($Y$). All of them are considered binary. 
The ground truth ATE of this causal model is 0.236, i.e, $X_1$ is the best action.
To illustrate the gradual progression of our algorithm we consider four setups: setup 1) observed variables: $\{X,Y\}$ are sampled from $P(x,y)$ as dataset $D[X,Y]$ of size=1000; setup 2) observed: $\{I_1, X,Y\}$, are sampled from $P(i, x,y)$ as dataset $D[I_1, X,Y]$ of size 1000; setup 3) observed $\{I_1, I_2, X,Y\}$ are sampled from $P(\mathbf{V})$ as  $D[\mathbf{V}]$ of size 1000; setup 4: same as setup 3 but with 3000 samples. The goal is to choose one of the two actions ${do(}X=0)$ and ${do(}X=1)$ as the best action.

\textbf{Baselines:}
Given the dataset, we estimate the corresponding joint distribution and use closed-form expression such as  Tian and Pearl bound~\citep{tian2000probabilities} 
for bow graph with no instruments and IV bound when we have one or more instrument variables. We also use NCM~\citep{xia2021causal} and autobound~\citep{duarte2024automated} to obtain bounds for the input distribution.

\textbf{Confidence levels:}
We evaluate the behavior of our algorithm for different confidence levels. We plug in the closed-form bound estimator and execute Algorithms~1 and~3 with $10k$ samples but different confidence levels. We observe that for $[70\%, 90\%]$ confidence levels, the outer region is always greater than zero, while for $95\%$ it is not. Thus, for a fixed sample size, we have to reduce our confidence level to find the best action.
For $10k$ samples, the estimated interval $[\mathrm{minmin}, \mathrm{maxmax}]$ is $[-0.02, 0.48]$ at $95\%$ confidence level, $[0.02, 0.48]$ at $90\%$ confidence level, and $[0.02, 0.45]$ at $70\%$ confidence level.

\textbf{Sample size:}
One metric to evaluate the cost of our decision making might be the \textbf{sample size}. We extend \Cref{sec:scm1} by performing additional experiments for sample sizes $8k$ and $16k$ and show how the bound changes.
For $3k$ samples, the outer bound is $[-0.083, 0.52]$, and the resulting strategy is to collect more samples. For $8k$ samples, the outer bound is $[0.0042, 0.43]$, and the resulting strategy is to return $\mathrm{do}(X=1)$. For $16k$ samples, the outer bound is $[0.04, 0.39]$, and the resulting strategy is to return $\mathrm{do}(X=1)$.
Although our algorithm requires a high number of samples, it follows a principled way to propagate uncertainty using causal inference machinery and offers a confident decision unlike existing baselines.

% The probability tables associated with our considered instance of the structural causal model are given in the appendix. Given access to the true distribution $P(X,Y)$, we can utilize it to obtain a closed-form bound of the causal effect.
% % Given that we have access to the true distribution $P(x,y)$, we can utilize the joint distribution $P(x,y)$ to obtain a closed form for the causal effect estimates. 
% According to~\citep{tian2000probabilities}, for binary variables, we have the following bounds: $P(x,y)\leq P(y|\Do(x)) \leq 1-P(x,y')$. For the two SCMs we consider, the calculated bounds are provided in Table~\ref{tab:scm1},\ref{tab:scm2} (top). On the other hand, if we have access to the true joint distribution $P(i,x,y)$, we can obtain a tighter closed-form expression for the causal effect. These bounds are shown in Table~\ref{tab:scm1},\ref{tab:scm2} (bottom).
% Although these bounds can not be attained in practice due to finite samples, they can be used to evaluate our bounds.

\subsection{Experiment Detail for \Cref{sec:scm2}}
\textbf{Setup:}
In this experiment, we consider the causal graph shown in Figure~\ref{fig:two-obs-cnf} where we have 2 observed confounders ($Z_1, Z_2)$, a treatment ($X$) and an outcome variable ($Y$). All of them are considered binary. In this setup, we do not have any unobserved confounder and thus no uncertainty from non-identifiability of ATE or $P(Y|\Do(X))$. 
The ground truth ATE=0.06.
Here we consider two setups with the same graph: setup 1) observed variables: $\{Z_1, Z_2, X,Y\}$ are sampled from $P(\mathbf{V})$ as dataset $D[Z_1, Z_2, X,Y]$ of size=1000; setup 2) same as setup 1 but with 3000 samples. The goal is to choose one of the two actions ${do(}X=0)$ and ${do(}X=1)$ as the best action.

\textbf{Baselines:}
Since there exists no unobserved confounders, we can compare our algorithm output with existing baselines that assume unconfoundedness. We use the dowhy package~\citep{dowhy,JMLR:v25:22-1258} to execute 6 baselines such as: propensity score matching, doubly robust estimator etc., on our data. These methods provides a point-wise estimation. We also execute the NCM approach which provides a bound for the ATE estimation.

\subsection{Scaling Experiment}
\label{fig:scale}
 The overall complexity of the naive implementation of our algorithm is exponential  in the number of variables $|V|$ and polynomial in the domain size $\mathcal{X}$ with degree $|V|$. Please see Section~\ref{sec:comp-anal} for complexity details. 
 The computation ($\mathcal{X}^{|\mathbf{V}|}$) is required since 
 the method executes Algorithm~\ref{alg:dcm} for each candidate distribution by exploring the $\epsilon$-ball. 
 To scale our algorithm for large graphs, we utilize two practical solutions. 1. \textbf{Reduce the number of confidence intervals:} The joint distribution factorizes according to the causal graph 
    $P(V)=\prod_{V_i \in V} P(V_i \mid pa(V_i))$. We match the factorized conditional distributions instead of the full joint. Similar factorization with c-components can be leveraged when latents are present.
2. \textbf{Ignore low-probability joint combinations:} For large graphs with large-cardinality variables, many combinations are assigned a very low probability. A practical solution is to identify these very low-probability combinations and redistribute their probability mass to the rest of the states and fit that distribution. This reduces the slicing of the alpha value to a large number of combinations that do not meaningfully change the causal effect since the resultant perturbation of the observations is small. Accordingly, for an $18$-node graph, we end up with just $\sim 400$ meaningful combinations.

\subsection{Experiment Details for Parent Labor Dataset}
\label{appex:parent-labr}

~\citet{angrist1996children} focus on the causal link running from fertility to the work effort of both men and women. 
According to~\citet{angrist1996children}, we consider an instrumental variable $I$: the sex of the first two children in families with two or more children. This variable can be justified as a valid instrument  since it represents the widely observed phenomenon of parental preferences for a mixed sibling-sex composition. Parents of same-sex children are significantly likely interested to have an additional child. Also, sex is randomly assigned, $I$ does not have any causal parent.
 Thus, it is a plausible instrument for further child-bearing among women with at least two children.
 We obtain the dataset from ~\cite{angrist_evans_2009_replication}.

Angrist and Evans~\cite{angrist1996children} estimate the causal effect of fertility on parents' labor supply using an instrumental variable strategy based on the sex composition of the first two children. Their treatment variable is whether a family has more than two children, and their main instrument is an indicator for whether the first two children are of the same sex. The identifying intuition is that parents with two same-sex children are more likely to have an additional child due to preferences for a mixed sibling-sex composition, while the sex composition of the first two births is plausibly as good as randomly assigned. Using two-stage least squares (2SLS), they first predict the probability of having more than two children using the same-sex instrument and then estimate the effect of the predicted fertility variation on labor-supply outcomes. In their main 1980 PUMS specification for women aged $21$ to $35$ with at least two children, the 2SLS estimates using the same-sex instrument show that having more than two children reduces the probability that a woman worked for pay by $12.1$ percentage points, annual weeks worked by $5.68$ weeks, weekly hours worked by $4.61$ hours, and labor income by approximately $\$1{,}960.5$. They also find little evidence that additional children reduce husbands' labor supply. Overall, their results indicate that additional childbearing causally reduces mothers' labor supply and earnings, while having little effect on fathers' labor-market outcomes.
These estimates are reported in Table~5 of Angrist and Evans~\cite{angrist1996children}.

\subsection{How Model Misspecification in the Causal Graph Affects the Final Results}
\label{sec:mis-specify}

We analyze the final result for a scenario where incorrect edges are present.

\paragraph{True graph.}
The true graph contains $I_1 \rightarrow X, I_2 \rightarrow X, X \rightarrow Y$, and an unobserved confounder $U$ between $X$ and $Y$.

\begin{figure}[H]
\centering
\hspace{-10mm}
\begin{subfigure}{0.31\linewidth}
\begin{minipage}[t]{1\textwidth}
\centering
\begin{tikzpicture}[
    scale=0.5,
    transform shape,
    node distance=1cm,
    every node/.style={draw, circle, minimum size=1cm},
    ->,
    >=Stealth
]
% Nodes
\node (X) {$X$};
\node[right=of X] (Y) {$Y$};
\node[above=of X, fill=lightgray] (U) {$U$};
\node[below left=of X] (I1) {$I_1$};
\node[below right=of X] (I2) {$I_2$};

% Arrows
\draw (X) -- (Y);
\draw (U) -- (X);
\draw (U) -- (Y);
\draw (I1) -- (X);
\draw (I2) -- (X);
\end{tikzpicture}
\end{minipage}
\caption{True graph.}
\label{fig:true-graph}
\end{subfigure}
\begin{subfigure}{0.31\linewidth}
\begin{minipage}[t]{1\textwidth}
\centering
\begin{tikzpicture}[
    scale=0.5,
    transform shape,
    node distance=1cm,
    every node/.style={draw, circle, minimum size=1cm},
    ->,
    >=Stealth
]
% Nodes
\node (X) {$X$};
\node[right=of X] (Y) {$Y$};
\node[above=of X, fill=lightgray] (U) {$U$};
\node[below left=of X] (I1) {$I_1$};
\node[below right=of X] (I2) {$I_2$};

% Arrows
\draw (X) -- (Y);
\draw (U) -- (X);
\draw (U) -- (Y);
\draw (I1) -- (X);
\draw (I2) -- (X);
\draw[bend left=15] (I2) to (Y);
\end{tikzpicture}
\end{minipage}
\caption{Wrong graph 1.}
\label{fig:wrong-graph-1}
\end{subfigure}
\begin{subfigure}{0.31\linewidth}
\begin{minipage}[t]{1\textwidth}
\centering
\begin{tikzpicture}[
    scale=0.5,
    transform shape,
    node distance=1cm,
    every node/.style={draw, circle, minimum size=1cm},
    ->,
    >=Stealth
]
% Nodes
\node (X) {$X$};
\node[right=of X] (Y) {$Y$};
\node[above=of X, fill=lightgray] (U) {$U$};
\node[below left=of X] (I1) {$I_1$};
\node[below right=of X] (I2) {$I_2$};

% Arrows
\draw (X) -- (Y);
\draw (U) -- (X);
\draw (U) -- (Y);
\draw (I1) -- (X);
\draw (I2) -- (X);
\draw[bend right=15] (I1) to (Y);
\draw[bend left=15] (I2) to (Y);
\end{tikzpicture}
\end{minipage}
\caption{Wrong graph 2.}
\label{fig:wrong-graph-2}
\end{subfigure}
\caption{Causal graphs used in the misspecification experiment. The true graph is a bow graph with two IVs. Wrong graph 1 adds the extra edge $I_2 \rightarrow Y$. Wrong graph 2 adds the extra edges $I_1 \rightarrow Y$ and $I_2 \rightarrow Y$.}
\label{fig:all-graphs}
\end{figure}
Wrong graph 1 has the extra edge $I_2 \rightarrow Y$. Wrong graph 2 has the extra edges $I_2 \rightarrow Y$ and $I_1 \rightarrow Y$.

Since any estimator can be incorporated in our framework, we pick a closed-form formula to calculate bounds for these graphs. We explore distributions in the $\epsilon$-ball and combine the bounds to obtain the following four quantities.

\begin{table}[H]
\centering
\caption{Effect of causal graph misspecification on the final causal effect bounds.}
\label{tab:misspecification-results}
\begin{tabular}{lccccc}
\toprule
\textbf{Setup} & \textbf{Instruments} & \textbf{minmin} & \textbf{maxmin} & \textbf{minmax} & \textbf{maxmax} \\
\midrule
True graph    & $I_1, I_2$ & $0.01$  & $0.11$  & $0.333$ & $0.41$ \\
Wrong graph 1 & $I_1$      & $-0.23$ & $-0.16$ & $0.52$  & $0.59$ \\
Wrong graph 2 & None       & $-0.35$ & $-0.32$ & $0.64$  & $0.67$ \\
\bottomrule
\end{tabular}
\end{table}

We observe that the additional incorrect edge loosens the causal effect bound compared to the correct graph, although it still encloses the correct bound.

% \noindent
% \begin{minipage}[t]{0.55\textwidth}
% Your text goes here. This text will stay beside the figure. You can write the
% paragraph that explains the figure here.
% \end{minipage}
% \hfill
% \begin{minipage}[t]{0.40\textwidth}
% \centering
% \includegraphics[width=\linewidth]{figures/finite-id.pdf}
% \captionof{figure}{Given finite samples, we cannot pointwise estimate since the true distribution lies in a continuum of a confidence set. We can only obtain an interval.}
% \label{fig:finite-id-bound}
% \end{minipage}

%%%%%%%%%%%%%%%%%%%%%%%%%%%%%%%%%%%%%%%%%%%%%%%%%%%%%%%%%%%%

% \newpage
% \input{checklist.tex}

\end{document}